\let\oldmax\max
\newcommand{\cmark}{\ding{51}}%
\newcommand{\xmark}{\ding{55}}%
\definecolor{specialgreen}{RGB}{174,224,164}    
\definecolor{specialorange}{RGB}{203,139,115}
\definecolor{lightgray}{gray}{0.9}  
\newtheorem{theorem}{Theorem}[section]
\newtheorem{proposition}[theorem]{Proposition}
\newtheorem{lemma}[theorem]{Lemma}
\newtheorem{corollary}[theorem]{Corollary}
\newtheorem{definition}{Definition}
\def\Px{\P_{x}}
\def\Pg{\P_{g}}
\def\Pf{\P_{f}}
\def\Qx{\Q_{x}}
\def\Qf{\Q_{f}}
\def\Qflip{\Q_{1 - f}}
\DeclareMathOperator{\TV}{{TV}}
\begin{document}
\title{Reliably Detecting Model Failures in Deployment Without Labels}
\author{%
    Viet Nguyen$^{1,3, \dagger}$,
    Changjian Shui$^{2,}$\thanks{Equal contribution, correspondence to Viet Nguyen: \texttt{viet@cs.toronto.edu}}~,
    Vijay Giri$^{4}$,\\
    Siddharth Arya$^{1,3}$,
    Amol Verma$^{1,5}$,
    Fahad Razak$^{1,5}$,
    Rahul G. Krishnan$^{1,3}$
}
\affil{$^1$University of Toronto
  $^2$University of Ottawa
  $^3$Vector Institute \\
  $^4$University of Pennsylvania
  $^5$Unity Health Toronto}
\date{}

\makeatletter
\def\@fnsymbol#1{\ensuremath{\ifcase#1\or \dagger\or \ddagger\or
   \mathsection\or \mathparagraph\or \|\or **\or \dagger\dagger
   \or \ddagger\ddagger \else\@ctrerr\fi}}
\setcounter{footnote}{0}
\makeatother
\maketitle

\begin{abstract}
The distribution of data changes over time; models  operating in dynamic environments need retraining. But knowing when to retrain, without access to labels, is an open challenge since some, but not all shifts degrade model performance. 
This paper formalizes and addresses the problem of post-deployment deterioration (PDD) monitoring. 
We propose D3M, a practical and efficient monitoring algorithm based on the disagreement of predictive models, achieving low false positive rates under non-deteriorating shifts and provides sample complexity bounds for high true positive rates under deteriorating shifts. Empirical results on both standard benchmark and a real-world large-scale internal medicine dataset demonstrate the effectiveness of the framework and highlight its viability as an alert mechanism for high-stakes machine learning pipelines.
\end{abstract}

\section{Introduction}
Performance guarantees of conventional machine learning (ML) models hinge on the belief that the distribution of data with which these models train is identical to the distribution on which they are deployed \citep{rabanser2019failing,recht2018cifar,santurkar2020breeds}. In many real-world scenarios such as healthcare, however, this assumption fails due to distribution shift during model deployment. Benchmarks such as WILDS \citep{koh2021wilds} and WILD-Time \citep{yao2022wild} have encouraged machine learning researchers to study and better understand how data shifts influence predictive systems. Yet, the number of tools at a practitioner’s disposal for building predictive models far exceeds those to monitor model failures. There is a need to create \emph{guardrails} that \emph{self-detect} and \emph{alert} end-users to critical changes in the model when its performance drops below acceptable thresholds \citep{habib2021epic,zadorozhny2022out}. 

We define post-deployment deterioration (PDD) as the scenario where a trained ML model underperforms on a distributionally shifted deployment query with respect to its validation performance. PDD presents a distinct set of systemic challenges stemming from considerations over the feasibility of deployment in real-world ML pipelines. Predominant is the scarcity of labels during deployment: for many downstream tasks such as in healthcare, labels are expensive to obtain \citep{razavian2015population} or require human intervention \citep{liu2022deep}. Due to deployed models predicting events temporally extended in the future \citep{boodhun2018risk, zhang2019time}, labels might even be unavailable. Another is the robustness of the monitoring system: it should flag critical changes in model deterioration early, using few samples, while remaining resilient to non-deteriorating changes to minimize unnecessary interruptions of service among other practical considerations. 

To address these challenges, we conceive a set of desiderata for any algorithm monitoring PDD, targeting their practicality and effectiveness as plug-ins to ML pipelines. To address the scarcity of labels, PDD monitoring algorithms should operate on unlabeled data from the test distribution to ascertain potential deterioration of the deployed model. 
Further, PDD monitoring algorithms should not depend on training data during deployment, as continuous (even indefinite) access to sensitive or personally identifiable training data might violate certain regulations protecting the privacy of data subjects \citep{muhlhoff2023predictive}. An algorithm satisfying this desideratum is scalable, as it only audits a model’s input stream during monitoring with minimum data storage and regulatory concerns. Finally, PDD monitoring mechanisms should be robust to flagging non-deteriorating changes and effective even when samples from the deployment distribution are scarce. 

When it comes to designing monitoring protocols that satisfy the above desiderata, recent related works only partially attend to individual desiderata. The literature on distribution shifts while achieving strong empirical performance on unlabeled deployment data \citep{liu2020learning, zhao2022comparing, zhang2024test}, are not robust to false positives when the distribution shift is non-deteriorating. The model disagreement framework \citep{yu2019unsupervised, chuang2020estimating, jiang2021assessing, ginsberg2023a, rosenfeld2024almost} emerges as a competitive setup for monitoring with downstream performance considerations via the tracking of disagreement statistics, while foregoing explicit distribution shift computations. However, shift-based and disagreement-based monitoring methods all depend on the \emph{presence of training data post-deployment}, and do not provide any guarantees on robustness against false positives in the monitoring of non-deteriorating shifts. 

In this work, we answer all desiderata for PDD monitoring via the disagreement framework by proposing \textbf{D}isagreement-\textbf{D}riven \textbf{D}eterioration \textbf{M}onitoring \textbf{(D3M)}. Our contributions are as follows:

\textbf{Answering desiderata.} D3M is a novel algorithm operating in the label-free deployment setting (1), requiring no training data during monitoring (2), and is provably robust in flagging deteriorating shifts as well as resilient to flagging non-deteriorating shifts (3). A comparison of the satisfaction of the PDD desiderata of our method with other related work in the literature is in Table~\ref{table:compare}.

\textbf{Practical and scalable.} D3M is model agnostic so long as the base model's feature extractor can be optimized via gradient descent. This flexibility allows D3M to monitor various modalities of high-dimensional data. Unlike previous work, D3M avoids the retraining or finetuning of the base model via posterior sampling, crucial for the efficient monitoring of large models. 
Finally, owing to the decoupling of the algorithmic protocol into two distinct stages, D3M is \textit{efficiently scalable in the size of the training dataset}, a critical consideration on the feasibility of its application onto current ML pipelines that is not enjoyed by standard baselines. 


\textbf{Empirical validation.} We showcase experimental results on various shift scenarios in the UCI Heart Disease dataset \citep{heart_disease_45}, CIFAR-10/10.1 \citep{recht2018cifar}, Camelyon17 (WILDS) \citep{koh2021wilds}, and the GEneral Medicine INpatient Initiative (GEMINI) dataset ~\citep{verma2017, verma2021assessing}
to demonstrate its effectiveness in monitoring models of various modalities. Our method effectively detects deployment-time deterioration with low false positive rates (FPR) when shifts are non deteriorating, and achieves competitive true positive rates (TPR) when shifts are deteriorating compared to standard baselines. In particular, we discuss how results on the internal medicine dataset suggest D3M to be well-suited for integration into real-world clinical monitoring pipelines.

\textbf{Provable algorithm.} Under certain assumptions about the underlying distribution changes, D3M provably monitors model deterioration when a deteriorating shift is present. In the presence of non-deteriorating shifts, D3M provably resists detection, thereby achieving low false positive rates. 

\begin{table}[!h]
\caption{Comparisons between related work. \emph{Training data-free}: whether post-deployment monitoring requires training data; \emph{Deteriorating}: whether the method provably monitors the deteriorating shift; \emph{Non-deteriorating}: whether the method is provably robust in the non-deteriorating shift; \emph{Disagreement}: whether the method is based on the disagreement framework. }
\centering
\rowcolors{2}{gray!25}{white}
\begin{adjustbox}{max width=1.0\textwidth}
    \begin{tabular}{!{\vrule width 1pt}c|c|c|c|c!{\vrule width 1pt}}
        \arrayrulecolor{black} 
        \specialrule{1pt}{0pt}{0pt} 
        \rowcolor{gray!50}
        \textbf{} & \textbf{Training data-free} & \textbf{Deteriorating} & \textbf{Non-deteriorating} & \textbf{Disagreement} \\ 
        \hline
        Yu and Aizawa, 2019~\cite{yu2019unsupervised}    & \cmark   & \xmark    & \xmark    & \cmark      \\ 
        Liu et. al., 2020~\cite{liu2020learning}    & \xmark   & \xmark    & \xmark    & \xmark      \\ 
        Jiang et. al., 2021~\cite{jiang2021assessing}  &  \xmark  & \xmark  &  \xmark   & \cmark           \\ 
        Zhao et. al., 2022~\cite{zhao2022comparing}   & \xmark   & \xmark    & \xmark    & \xmark         \\ 
        Rosenfeld and Garg, 2023~\cite{rosenfeld2024almost}  & \xmark   & \cmark     & \xmark    & \cmark    \\ 
        Ginsberg et. al., 2023~\cite{ginsberg2023a}   & \xmark   & \cmark  & \xmark    & \cmark         \\ 
        \arrayrulecolor{black} 
        \hline
        \rowcolor{gray!10}
        \textbf{D3M (ours)} & \cmark   & \cmark    & \cmark    & \cmark           \\ 
        \specialrule{1pt}{0pt}{0pt} 
    \end{tabular}
\end{adjustbox}   
\label{table:compare}
\end{table}


\section{Background and Algorithm}
\label{section:setup}
\subsection{Problem setup}
 Assume a base model $f$ is supervisedly trained to classify inputs $x \in \mcX$ into finite discrete classes $\mcY = \{1, \dots, C\}$ from training examples $\mcD^n = \{x_i, y_i\}_{i=1:n}$ where tuples $(x_i, y_i)_{i=1:n}\sim \P^n$ for all $i \in [n]$. For a joint distribution $\P$ over $\mcX \times \mcY$, let $\Px$ denote its marginal distribution over $\mcX$. We are interested in designing a mechanism such that upon seeing a collection of unlabeled inputs $\{x'_i\}_{i=1:m}$ sampled from a deployment distribution $\Qx$, the mechanism flags model deterioration if $\Qx \not= \Px$ \textbf{and} $f$ underperforms on $\Qx$ without being able to observe labels for $\Qx$. On the other hand, if $\Qx \not= \Px$ while $f$ remains performant on $\Qx$, the mechanism should resist flagging. Achieving so ensures that our mechanism only flags deployment-time changes that are truly deteriorating. 

\begin{center}
   \textit{How can we monitor ML models for deployment deterioration due to distribution shift without indiscriminately flagging any changes in the data?  }
\end{center}

We require a computable quantity $\phi$, independent of labels, whose value statistically differs in-distribution (ID) and out-of-distribution (OOD) if and only if model deterioration occurs. Monitoring, then, regresses to recording baseline values for $\phi$ evaluated on ID held-out validation samples. Then, upon collecting unsupervised deployment samples from an unknown distribution, the monitoring mechanism computes $\Tilde{\phi}$ and compares it to the recorded baseline values, and finally outputs a verdict. 

Leveraging insights from \cite{yu2019unsupervised, goldwasser2020beyond, ginsberg2023a}, the framework of model disagreement possesses this property under certain assumptions about the underlying distribution change (see Appendix \ref{appx:theory}). We say that two models $h_1$ and $h_2$ disagree on an input $x \in \mcX$ if $h_1(x) \not= h_2(x)$. In particular, models exhibit greater predictive disagreement on unsupervised samples that lead to model deterioration, compared to in-distribution (ID) samples. This is observed through the increased entropy-based discrepancy between classification heads in \cite{yu2019unsupervised}, or maximum disagreement between models in the same ensemble in \cite{goldwasser2020beyond} and \cite{ginsberg2023a}, as signal for detecting deployment deterioration. Maximizing classification head discrepancy for OOD detection \citep{yu2019unsupervised} is efficient for monitoring at deployment time, requiring only one forward pass to compute a verdict. However, this trades off classification accuracy as this training procedure alters the original trained decision boundaries. On the other hand, computing model disagreement between ensembles \citep{ginsberg2023a} requires finetuning a potentially large network to collect ID and deployment-time disagreement statistics $\phi$. In addition, ID training data is required at deployment, further limiting the scalability of such framework. 

\subsection{Overview of D3M}
\begin{figure}[ht]
    \centering
    \includegraphics[width=0.9\linewidth]{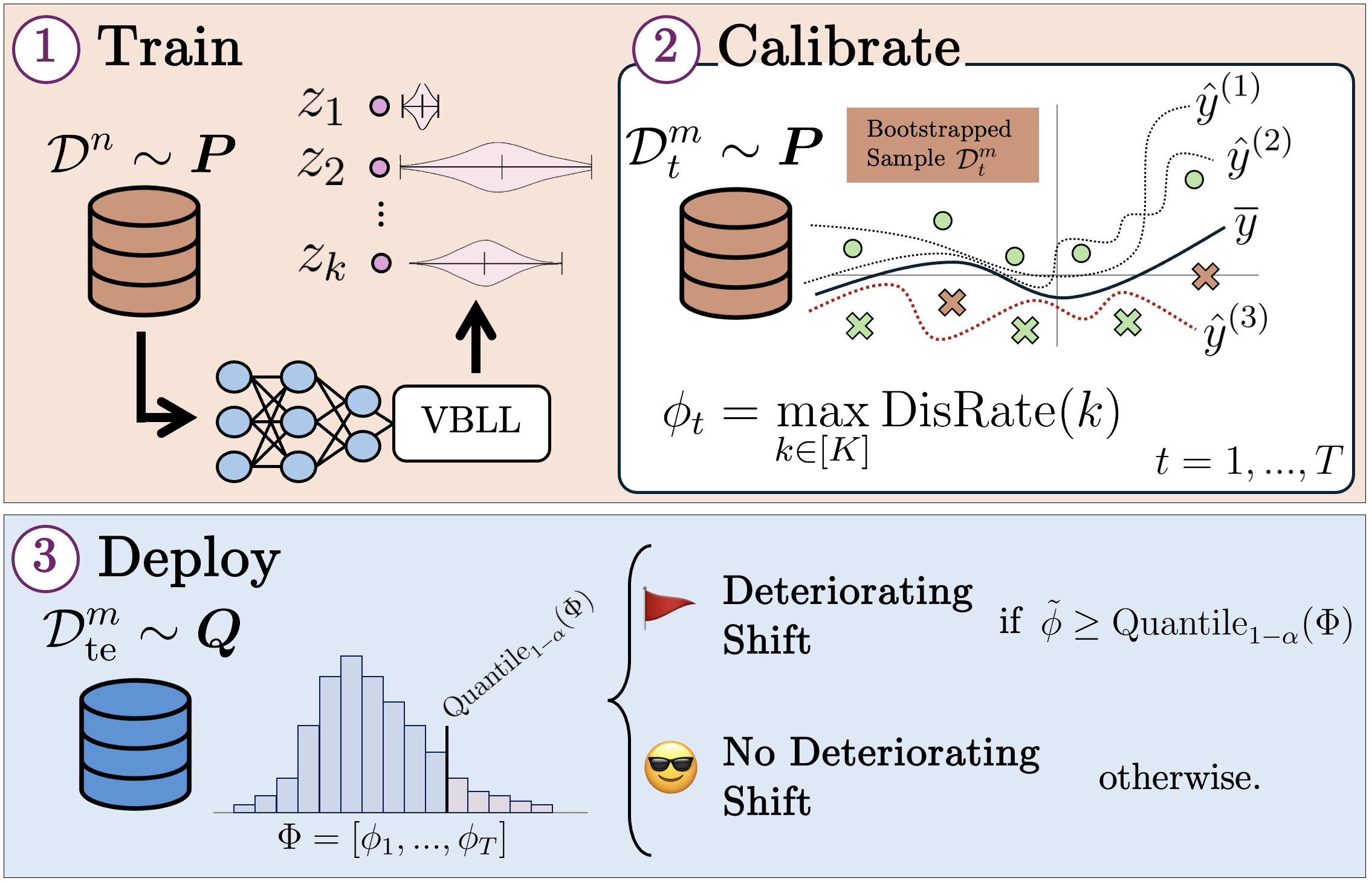}
    \caption{
        \textbf{Overview of D3M.} \textbf{(1)} \textbf{Train:} a feature extractor ($\operatorname{FE}$) and a Variational Bayesian Last Layer ($\operatorname{VBLL}$) are trained to model a posterior predictive distribution (PPD) over class logits. 
        \textbf{(2)} \textbf{Calibrate:} disagreement statistics are computed by bootstrapping held-out ID datasets, sampling from the learned posteriors, and comparing sampled predictions to the base model’s outputs to collect a set of maximum disagreement rates $\Phi$. For illustrative purposes, agreements and disagreements between $\hat{y}^{(3)}$ and $\Bar{y}$ are colored \textcolor{specialgreen}{green} and \textcolor{specialorange}{orange}, respectively. 
        \textbf{(3)} \textbf{Deploy:} at deployment, D3M monitors the model on incoming unlabeled data by computing the maximum disagreement rate $\Tilde{\phi}$ and flags deteriorating shift if $\Tilde{\phi} \geq \operatorname{Quantile}_{1-\alpha}(\Phi)$.
     }
    \label{fig:fig1}
\end{figure}

To avoid needing the original training set at deployment, we replace finetuning with a Bayesian approach that models a posterior predictive distribution (PPD) over logits. This yields a distribution over decision boundaries that remains faithful to ID behavior. By comparing samples from the PPD to the mean prediction, we approximate maximum disagreement without retraining or access to training data. As the PPD is usually intractable, we instead model it with a variational distribution, easily optimizable using standard methods. 

Sampling disagreement statistics $\phi$ in this way yields a reference distribution $\Phi$ of ID maximum disagreement rates. At deployment, we compute the same statistic $\Tilde{\phi}$ and flag model deterioration when $\Tilde{\phi}$ exceeds a high quantile of $\Phi$. This enables unsupervised, training-free monitoring. Our method—\textbf{D}isagreement-\textbf{D}riven \textbf{D}eterioration \textbf{M}onitoring \textbf{(D3M)}—follows three key steps: \textbf{Train}, \textbf{Calibrate}, and \textbf{Deploy}.

\textbf{1. \textbf{(Train)} Base model training.} Firstly, a neural feature extractor $\operatorname{FE}_\theta:\mcX \to \mathbb{R}^d$ coupled with a Variational Bayesian Last Layers \citep{harrison2024variational} $\operatorname{VBLL}_\theta:\mathbb{R}^d \to \mcP(\mathbb{R}^C)$ parametrized by $\theta \in \Theta$, are trained on \textbf{supervised ID} data $\mcD^n \sim \P^n$ to output posteriors over logits corresponding to $C$ classes. For an input $x \in \mcX$ and its ground-truth label $y \in \mcY$:
\begin{align*}
    \psi &= \operatorname{FE}_\theta (x)\\
    q_\theta(\cdot|x) &= \operatorname{VBLL}_\theta (\psi)
\end{align*}
We define our \textbf{base model} as the variational \textbf{posterior predictive distribution (PPD)} given by integrating with respect to $q_\theta$:
\[
    \mathbb{P}(y | x, \mcD^n) = \mathbb{E}_{z \sim q_\theta(\cdot|x)}\left[\operatorname{softmax}(z)_y\right]
\]
where predictions are assigned by computing the argmax of the PPD. Classification training is performed by maximizing the ELBO with a standard Gaussian prior $p(z)$:
\[
\mathcal{L}_{\operatorname{ELBO}}(\theta; x, y) = \mathbb{E}_{z \sim q_\theta(\cdot | x)} \left[\log \operatorname{softmax}(z)_y\right] - \operatorname{KL}\left[q_\theta(z|x) \,\middle\| \,p(z) \right]
\]
\textbf{2. \textbf{(Calibrate)} Training of ID max disagreement rates with respect to the base model.} We are to gather disagreement statistics $\phi$ into $\Phi$ computed on ID samples for deployment time comparison. For rounds $t \in [T]$, on a held-out ID collection, we bootstrap an \textbf{unsupervised ID} dataset $\mcD_t^{m} = \{x_i\}_{i=1}^m \sim \P^m$ and acquire posteriors over logits $(q_\theta(\cdot|x_1), q_\theta(\cdot | x_2), \dots, q_\theta(\cdot | x_m))$. Instead, pseudolabels $\Bar{y}_i$ are assigned by the base model using the mean predictive distribution:
\[
\Bar{y}_i = \argmax{y = 1, \dots, C} \mathbb{E}_{z_i\sim q_\theta(\cdot | x_i)}\left[\operatorname{softmax}(z_i)_y\right],\quad \forall x_i \in \mcD^m_t
\]
We draw $K$ samples from the variational posteriors $q_\theta(\cdot| x_i)$ \textbf{in parallel} using vectorized sampling, apply a temperature-scaled softmax, and sample class labels from the resulting categorical distribution:
\[
z_i^{(k)} \sim q_\theta(\cdot | x_i) \implies p_i^{(k)} \coloneqq \operatorname{softmax}\left(\frac{z_i^{(k)}}{\tau}\right),\quad \forall i \in [m], k \in [K]
\]
\[
\hat{y}_i^{(k)} \sim \operatorname{Categorical}\left(p_i^{(k)}\right)
\]
For each posterior sample $k \in [K]$ and their corresponding predictions $(\hat{y}_1^{(k)}, \dots, \hat{y}_m^{(k)})$, the disagreement rate of $k$ with respect to the base model predictions $(\Bar{y}_1, \dots, \Bar{y}_m)$ is calculated as:
\[
    \operatorname{DisRate}(k) \coloneqq \frac{1}{m} \sum_{i=1}^m \mathbbm{1}\left\{\hat{y}_i^{(k)} \not= \Bar{y}_i \right\},\quad \forall k \in [K]
\]
Finally, for the bootstrapped dataset $\mcD^m_t$, the maximum disagreement rate is given by:
\[
\phi_t \coloneqq \max{k \in [K]} \operatorname{DisRate}(k)
\]
After $T$ rounds, we store our collection of ID maximum disagreement rates $\Phi \coloneqq \{\phi_t: t \in [T]\}$. 

\textbf{3. \textbf{(Deploy)} Deployment monitoring of the base model.} Our base model is now ready to be deployed in a test environment and outputs predictions for an unsupervised input stream. To monitor it for deployment deterioration, we periodically gather inputs into a \textit{unsupervised }deployment dataset $\mcD^m_{\text{te}} \sim \Q^m$ and compute its maximum disagreement rate $\Tilde{\phi}$ as previous. D3M outputs $1$ if $\Tilde{\phi} \geq \operatorname{Quantile}_{1-\alpha}(\Phi)$ else $0$ for a desired significance level $\alpha$.  

The entire protocol is summarized in Figure \ref{fig:fig1}. Assume that the deployment distribution is the same as the training distribution. In this case, we expect that $\Tilde{\phi} > \operatorname{Quantile}_{1-a}(\Phi)$ with probability $\alpha$, thus having immediate control over the false positive rate (FPR) of D3M. When there is, however, distribution shift between training and deployment distributions, under certain assumptions about the underlying shift and ground truth labeling, we show that D3M \textbf{provably} flags deteriorating changes, i.e. a high true positive rate (TPR) is achieved, while being resistant to flagging non-deteriorating changes, i.e. changes in the input distribution that do not result in the base model underperforming. We defer the presentation and discussion of our theoretical analysis to Appendix \ref{appx:theory}. 

\subsection{Algorithmic insights}
\label{subsec: algorithmic-insights}

\textbf{D3M is free from training data.} D3M is designed to monitor deployment-time input streams without requiring the training data of its base model. This is an important advantage shared by \cite{yu2019unsupervised}, however not enjoyed by other baselines in Table \ref{table:compare}, whose computation of $\phi$ statistics require maintaining agreement on training data. When the neural feature extractor $\operatorname{FE}$ is millions or billions of parameters, trained with a comparably large dataset, it becomes infeasible to store the training data within edge compute nodes in order to run the monitoring mechanism. 

\textbf{No tradeoff with prediction accuracy.} Furthermore, D3M does not trade-off prediction accuracy as the mean prediction model from the PPD is not modified during the construction of $\Phi$ nor its deployment (Appendix \ref{appx:vbll-vs-linear}). Therefore, ID generalization theories remain applicable \citep{vapnik1999overview, bartlett2002rademacher, zhang2016understanding, nagarajan2019deterministic, jiang2019fantastic, lee2020neural}. In addition, recording a collection of ID maximum disagreement rates reinforces the robustness of the method. While single instances of maximum disagreement rate are subject to noise in the choice of the held-out validation set, the sampling of posterior logits $z_i^{(k)}$,  and the number of drawn samples $K$, a collection $\Phi$ of maximum disagreement rates coupled with a quantile test softens the effect of noise by leveraging large sample statistics. 

\textbf{Less diverse samples compared to its fully Bayesian counterpart.} Harrison et. al., 2024~\cite{harrison2024variational} suggests that one may apply a variational Bayesian treatment to the last layer only, saving on computational costs of forward and backward propagating through a Bayesian neural network, and enjoy largely the same uncertainty estimation and OOD detection performance. However, the usage of VBLL in D3M is unorthodox with respect to the original work: by sampling posterior weights from a Bayesian model, we are hoping to land on a set of weights such that the resulting model strongly disagree with the base model. VBLL allows more efficient sampling compared to to a fully Bayesian network, though lacking diversity in the sampled predictions, as the variability comes from the last layer only. This results in $K$ sampled logits mostly agreeing with the mean prediction. To improve the diversity of sampling, we employ (1) temperature scaling and (2) the sampling of predictions from the categorical distribution for the computation of maximum disagreement rather than argmax labeling. We defer the study of this tradeoff between fully Bayesian and VBLL to Appendix \ref{appx:fully-bayesian}.


\subsection{Implementing D3M}
\label{sec:implementation}

The composition $\operatorname{VBLL}_\theta \circ \operatorname{FE}_\theta : \mcX \to \mcP(\mathbb{R}^C)$ is readily trained end-to-end on ID data of size $n$ by maximizing the ELBO. For rounds $t \in [T]$, on a large held-out ID validation set, we sample datasets $\mcD^m_t$ of size $m \ll n$ with replacement for calibration. $\mcD^m_t$ is processed in a single forward pass, producing $m$ independent $C$-dimensional Gaussian posteriors:
\[
q_\theta(z_i|x_i) = \mathcal{N}(z_i|\mu_\theta(x_i), \operatorname{diag}(\sigma_\theta^2(x_i))), \quad \text{for } i = 1, \ldots, m
\]
For each posterior, we draw $2K$ samples, $K$ for our Monte-Carlo estimation of the mean model and $K$ for maximum disagreement computations. We keep the sampling temperature $\tau$, the size of bootstrapped datasets $m$, and the number of posterior samples $K$ as tunable hyperparameters. Importantly, these should be the exact same for the computation of $\Phi$ and the subsequent deployment-time monitoring test, where the exact same computational procedure is employed on a deployment sample $\mcD^m_{\textbf{te}}$. A detailed implementation can be found here: \url{https://github.com/teivng/d3m}. 

\subsection{Theoretical foundation and practical approximation}

Indeed, D3M monitors ML models for deteriorating shifts. As formulated, D3M is an approximation of Idealized D3M (Algorithms \ref{alg:preprocessing} and \ref{alg:detection} in Appendix \ref{appx:theory}). Our theory formalizes model deterioration under some conditions into a set of inequalities (Definition~\ref{def:hcs_definition}) where Idealized D3M is designed to track their satisfiability. However, this idealized version requires an oracle that can exactly solve the optimization problem of finding the hypothesis in $\mcH_p$ (well-performing models on the training distribution, Appendix \ref{appx:theory}) that maximally disagrees with the base model. In practice, this search is computationally intractable for large function classes. D3M addresses this by replacing the intractable optimization with the described sampling procedure to approximate hypotheses that likely belong to $\mcH_p$, trading off exact optimization for computational efficiency while maintaining the core disagreement-based monitoring principle. Our theoretical analysis demonstrates that Idealized D3M provably detects deteriorating shifts while controlling false positive rates (FPR)—guarantees that D3M inherits approximately through its posterior sampling strategy.

In fact, our ablation results empirically show that D3M oversamples hypotheses beyond $\mcH_p$, deviating from theoretical guarantees. Despite this, D3M demonstrates remarkable robustness in practice, achieving strong empirical performance across diverse datasets. This suggests that the variational posterior sampling, while not perfectly constrained to $\mcH_p$, still captures sufficient disagreement signal to effectively detect deteriorating shifts. We refer the reader to Appendix~\ref{appx:theory} for the theoretical setup and analysis, Appendix \ref{appx:oversampling} for our results and discussion on our oversampling ablation, and Appendix \ref{appx:fully-bayesian} for a comparison between an oversampling VBLL and a fully Bayesian neural network.

\section{Experiments}
\label{section:experiments}

In all experiments, at minimum, competitive performance in detecting deteriorating shifts (when present) should be achieved. This would validate D3M's effectiveness in alerting end-users of critical changes in deployment. Results on the vision datasets (CIFAR-10/10.1, Camelyon17) show that D3M is effective in monitoring high-dimensional, structure-rich data, in addition to tabular setups. Finally, we explore D3M in monitoring deteriorating and non-deteriorating changes in the GEMINI dataset, a real-world longitudinal electronic health record dataset to study how well our method aligns with clinically meaningful degradation, bringing forth discussions on our mechanism's practical utility for trustworthy, low-intervention deployment in healthcare settings. 

\subsection{Experimental setup} 
\textbf{Datasets.} 
(1)~The \textbf{UCI Heart Disease} prediction dataset \citep{heart_disease_45}, where each hospital corresponds to a different domain. Here, the distribution shift is due to differences in patient populations and data collection practices across hospitals.
(2)~\textbf{CIFAR-10/10.1} datasets \citep{krizhevsky2009learning, recht2019imagenet} where shift comes from subtle changes in the dataset creation process. By viewing samples from CIFAR-10 as $\P$, we test our models' ability to flag deteriorating shift from samples in $\Q =$ CIFAR-10.1. (3) The \textbf{Camelyon17} dataset from the WILDS benchmark \citep{koh2021wilds, yao2022wild} a histopathology image dataset for detecting metastases in lymph node slides, where distribution shift arises from variations in slide staining and image acquisition between hospitals contributing the data. (4) The General Medicine Inpatient Initiative (\textbf{GEMINI}) \citep{verma2017,verma2021assessing}
dataset, a comprehensive repository of standardized clinical and administrative data from hospitalizations within general internal medicine. We focus on predicting patient mortality within a 14-day horizon, leveraging static and longitudinal clinical features. This task is deemed essential for facilitating timely clinical interventions, optimizing the allocation of healthcare resources, and ultimately striving to improve patient outcomes \cite{naemi2021machine, barsasella2022machine, hernandez2023machine}. Detailed data descriptions can be found in~\ref{section:gemini_info}. Full sweeping details and hyperparameter configurations are reported in \ref{appx:hparam-sweep}.



\textbf{Implementation \& Baselines.} 
For tabular data (UCI, GEMINI 14-day mortality), in our implementation, $\operatorname{FE}_\theta$ corresponds to sequences of affine transformations and nonlinearities with skip-connections. For image datasets (CIFAR-10/10.1, Camelyon17), $\operatorname{FE}_\theta$ is either a trained or a finetuned ResNet \citep{he2016deep}. 
In particular, the D3M mechanism is agnostic to the choice of feature extractor, provided that the architecture reflects appropriate inductive biases for the data modality and permits gradient flow through the extracted features. 

To demonstrate that our D3M enjoys low FPR on non deteriorating shifts and high TPR on deteriorating shifts, we compare it against several distribution divergence-based detection methods from the literature: Deep Kernel MMD (MMD-D) \citep{liu2020learning}, H-divergence \citep{zhao2022comparing}, Black Box Shift Detection (BBSD) \citep{lipton2018detecting}, Relative Mahalanobis Distance (RMD) \cite{ren2021simple}, Deep Ensembles \citep{ovadia2019can}, CTST \citep{lopez2016revisiting}, Domain Classifier (DC) \citep{rabanser2019failing}, and Detectron \cite{ginsberg2023a}. Details can be found in Appendix \ref{appx:baselines}.

\textbf{Evaluations.} 
For all experiments, the significance level $\alpha$ is fixed to $0.10$. (2) For UCI Heart Disease, CIFAR-10/10.1, and Camelyon17,  where there are known post-deployment deterioration, we evaluate the baselines' and D3M's ability to monitor shift for query sizes $\{10, 20, 50\}$ of the deployment distribution. In doing so, we require that good monitors quickly recognize whether the deployment distribution has deteriorating consequences to the model or not. (3) For the GEMINI dataset, we study the detection rates on temporally-split sub-datasets and mixtures of subpopulation splits incurring deteriorating changes and report TPR/FPR where appropriate. For D3M, all TPRs reported are achieved while maintaining an ID FPR below $\alpha$. This is due to the temperature $\tau$: increasing $\tau$ can cause D3M to overfit its reference disagreement distribution $\Phi$ to the held-out validation set, allowing perfect TPR but also maximal FPR. 

\begin{table}[ht]
\centering
\resizebox{\columnwidth}{!}{%
\begin{tabular}{l ccc ccc ccc}
\toprule
& \multicolumn{3}{c}{\textbf{UCI Heart Disease}} & \multicolumn{3}{c}{\textbf{CIFAR 10.1}} & \multicolumn{3}{c}{\textbf{Camelyon 17}} \\
\cmidrule(lr){2-4} \cmidrule(lr){5-7} \cmidrule(lr){8-10}
\textbf{} & 10 & 20 & 50 & 10 & 20 & 50 & 10 & 20 & 50 \\
\midrule
BBSD & .13$\pm$.03 & .22$\pm$.04 & .46$\pm$.05 & .07$\pm$.03 & .05$\pm$.02 & .12$\pm$.03 & .16$\pm$.04 & .38$\pm$.05 & .87$\pm$.03 \\
Rel. Mahalanobis & .11$\pm$.03 & .36$\pm$.05 & .66$\pm$.05 & .05$\pm$.02 & .03$\pm$.03 & .04$\pm$.02 & .16$\pm$.04 & .40$\pm$.05 & .89$\pm$.03 \\
Deep Ensemble & .13$\pm$.03 & .32$\pm$.05 & .64$\pm$.05 & .33$\pm$.05 & .52$\pm$.05 & .68$\pm$.05 & .14$\pm$.03 & .26$\pm$.04 & .82$\pm$.04 \\
CTST & .15$\pm$.04 & .51$\pm$.05 & \textbf{.98$\pm$.01} & .03$\pm$.02 & .04$\pm$.02 & .04$\pm$.02 & .11$\pm$.03 & .59$\pm$.05 & .59$\pm$.05 \\
MMD-D & .09$\pm$.03 & .12$\pm$.03 & .27$\pm$.04 & .24$\pm$.04 & .10$\pm$.03 & .05$\pm$.02 & .42$\pm$.05 & .62$\pm$.05 & .69$\pm$.05 \\
H-Div & .15$\pm$.04 & .26$\pm$.04 & .37$\pm$.05 & .02$\pm$.01 & .05$\pm$.02 & .04$\pm$.02 & .03$\pm$.02 & .07$\pm$.03 & .23$\pm$.04 \\
\textbf{Detectron} & .24$\pm$.04 & \textbf{.57$\pm$.05} & .82$\pm$.04 & .37$\pm$.05 & \textbf{.54$\pm$.05} & \textbf{.83$\pm$.04} & \textbf{.97$\pm$.02} & \textbf{1.0$\pm$.00} & .96$\pm$.02 \\
\midrule
\textbf{D3M (Ours)} & \textbf{.38$\pm$.19} & .25$\pm$.28 & .69$\pm$.33 & \textbf{.40$\pm$.10} & .45$\pm$.10 & .74$\pm$.12 & .89$\pm$.20 & .93$\pm$.05 & \textbf{.99$\pm$.02} \\
\bottomrule
\end{tabular}
}  
\vspace{0.3cm}
\caption{True positive rates (TPR) comparison across datasets and query sizes. As models do experience deterioration, the higher TPR the better. \textbf{Bold} indicates best in column. We report the means and standard deviations of TPRs obtained from 10 independently seeded runs.}
\label{tab:performance}
\vspace{0cm}
\end{table}

\subsection{Discussions and Analysis}

\textbf{UCI Heart, CIFAR-10/10.1, and Camelyon17.} Model deterioration is present, thus the \textbf{higher} reported TPR the better. Consistently, across the 3 benchmark datasets, we observe that for each query size, D3M enjoys comparable TPR to the top achieving baselines. Results on the Camelyon17 experiments at all query sizes highlight D3M's ability to detect diagnostically-relevant deterioration.


\textbf{High-variance reported TPR and limitations.} However, remarkable is the high variance of the TPRs reported on the UCI benchmark. We believe this is a shortcoming of D3M compared to the other baselines in that our results are noisier and less performant. On the one hand, there is merit in high-risk settings to flag critical changes as soon as possible. But on the other hand, more samples may be required in order to truly ascertain the deteriorating nature of a shift. It is up to users of D3M to decide the batching size of collected deployment samples depending on what provides a meaningful signal for further analyses or actionable items. This also reflects the  user's tolerance for alert fatigue as noisier estimates on smaller deployment samples may get flagged more liberally. We discuss how selecting this deployment size implicitly allows control of the FPR of the model with additional results of D3M on larger query sizes in the above benchmarks in \ref{appx:bigshot-results}. 

\textbf{Comparisons with Detectron. \cite{ginsberg2023a}} While D3M and Detectron achieve similar monitoring performance—both successfully flagging all deteriorating shifts in higher query size scenarios (see \ref{appx:bigshot-results})—D3M is significantly more scalable and practical for real-world deployment, especially at the edge (Appendix \ref{appx:computational-costs}). Unlike Detectron, which requires persistent access to the original training data and gradient-based finetuning during deployment, D3M operates in a truly “source-free” fashion post-training: it needs neither storage nor access to sensitive training samples, nor does it perform computationally expensive or potentially destabilizing finetuning in production. Furthermore, by relying solely on forward passes and simple statistics over deployment data, D3M is inherently robust and less susceptible to the risks associated with continual retraining, such as accidental data contamination or privacy leakage.

\begin{figure}[t]
\centering
\begin{subfigure}[b]{0.40\linewidth}
    \centering
\includegraphics[width=\linewidth]{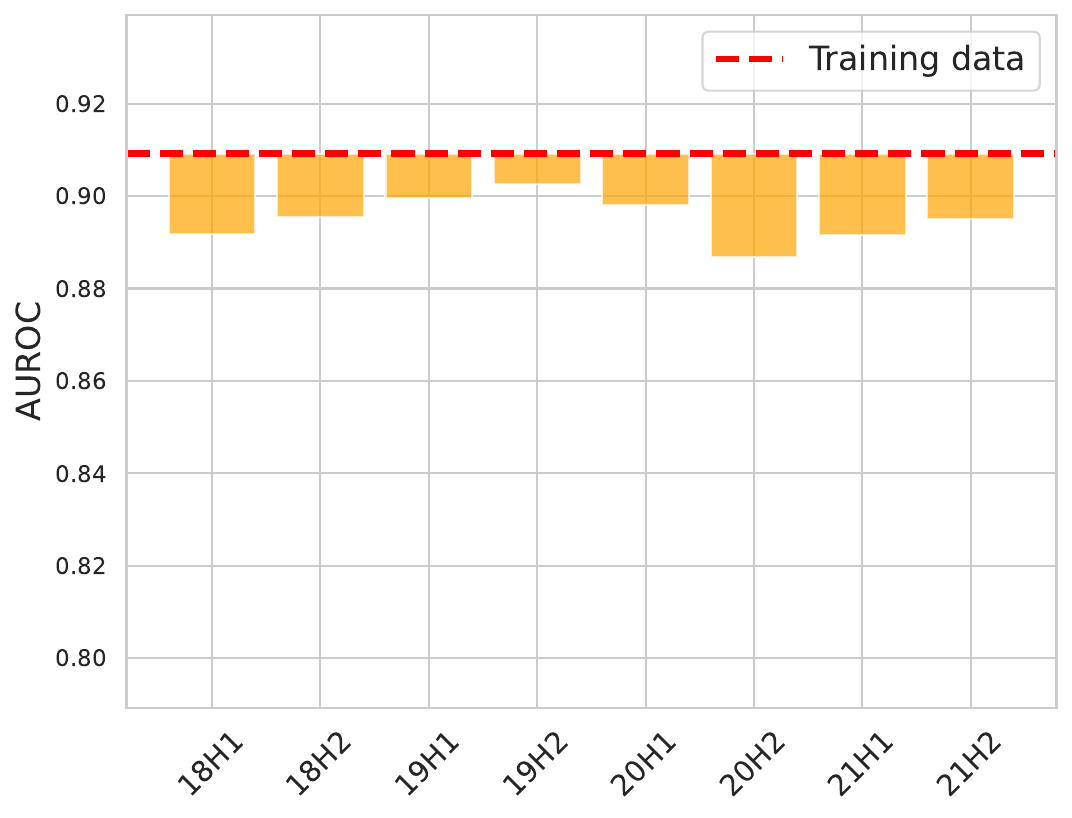}
    \caption{Non deteriorating shift in GEMINI}
\end{subfigure}
\begin{subfigure}[b]{0.49\linewidth}
    \centering
\includegraphics[width=\linewidth]{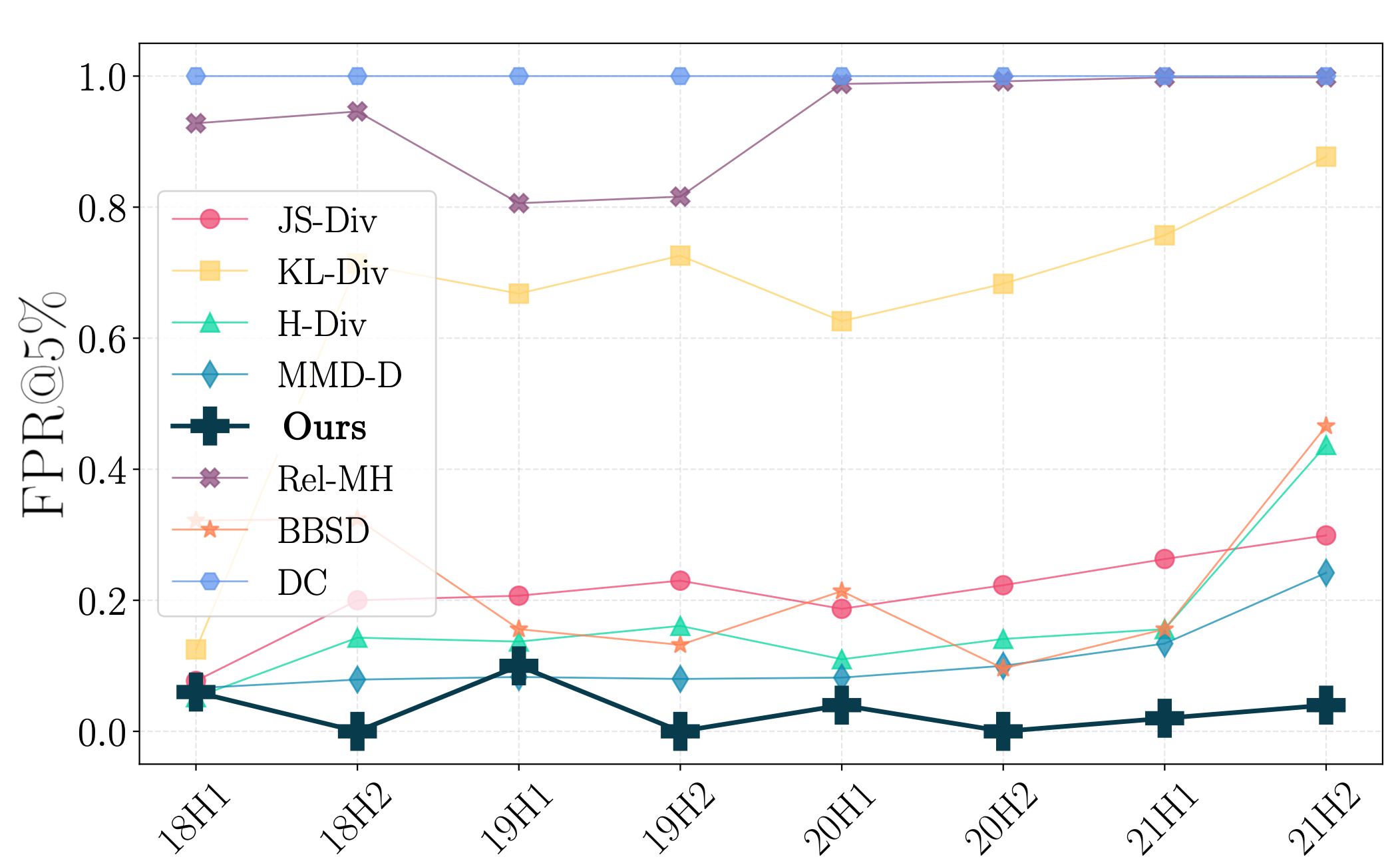}
    \caption{Performance monitoring}
\end{subfigure}
\caption{Performances in time evolving shifted test data from GEMINI. \textbf{(a)} Performance drop (\textcolor{orange}{bar plot}) is small, thus a non-deteriorating shift is observed. \textbf{(b)} Time evolving shift monitoring. D3M is robust with small False Positive Rate (FPR) at level $\alpha = 0.05$.}
\label{fig:evolving_shift}
\end{figure}

\textbf{GEMINI Evaluations.} Notably, the GEMINI dataset possesses an \textit{inherent temporal covariate shift}, as patient demographics, laboratory values, and treatment patterns evolve over time, particularly during the COVID-19 period. While we expect distribution shift to occur, whether it represents performance deterioration or benign drift remains unknown a priori, mirroring real-world deployment uncertainty. This makes GEMINI the \textit{closest available approximation to prospective clinical deployment}, where D3M  selectively flags shifts those that provably deteriorate performance while ignoring benign drift, ensuring intervention occurs exclusively when warranted. As such, GEMINI serves as a realistic testbed for evaluating both model robustness under distribution shift and D3M's practical utility in settings that closely resemble actual clinical practice.

\textbf{GEMINI Temporal Shift Results.} We train the mean model on pre-2018 data and deploy on half-year splits thereafter. As shown in Fig.\ref{fig:evolving_shift}(a), there is little to no performance drop over time, indicating a non-deteriorating temporal shift. Accordingly, D3M resists unnecessary alerts, maintaining a low false positive rate compared to baselines (Fig.\ref{fig:evolving_shift}(b)). 

\textbf{GEMINI Age Shift Results.} For deteriorating shifts, we train on adults aged 18–52 and test on various mixtures of age groups. Fig.\ref{fig:gemini_2}(a) shows a clear performance drop as more out-of-distribution samples are included. Here, all methods—including D3M—successfully flag these shifts (Fig.\ref{fig:gemini_2}(b)), with D3M achieving competitive detection across all mixture ratios. This demonstrates that D3M matches the strongest baselines in detecting genuine post-deployment deterioration.
\begin{figure}[t]
\centering
\begin{subfigure}[b]{0.4\linewidth}
    \centering
\includegraphics[width=\linewidth]{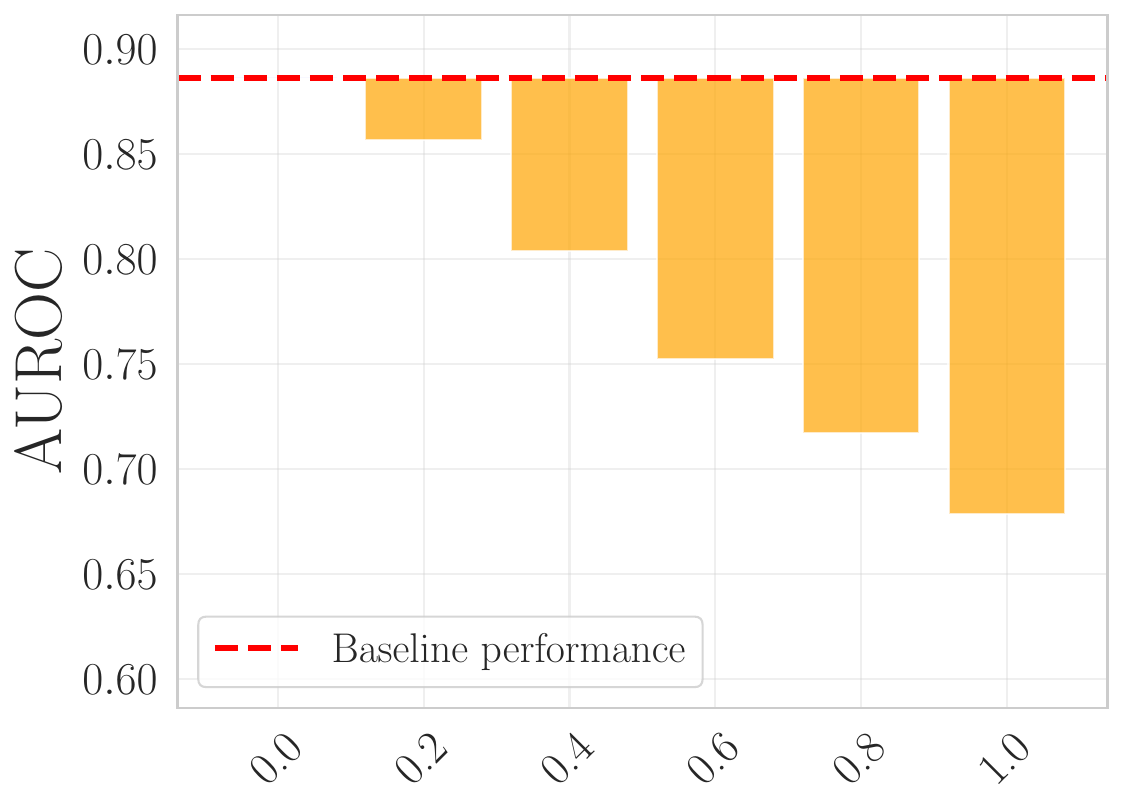}
    \caption{Deteriorating shifts in GEMINI}
\end{subfigure}
\begin{subfigure}[b]{0.49\linewidth}
    \centering
\includegraphics[width=\linewidth]{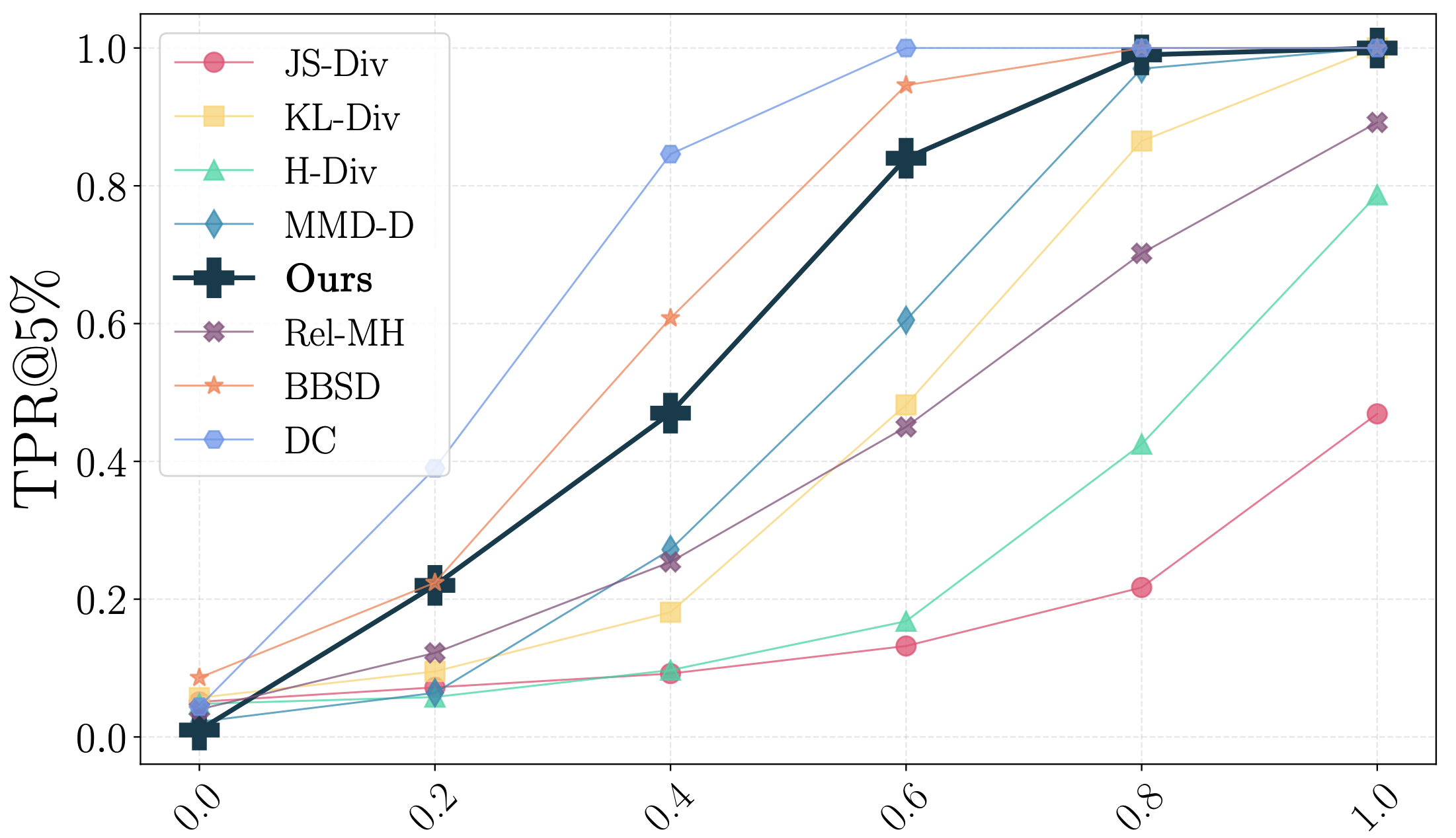}
    \caption{Deterioration monitoring}
\end{subfigure}
\caption{Monitoring results on artificially shifted test data from the GEMINI dataset. \textbf{(a)} Performance drop (\textcolor{orange}{bar plot}) is significant when the degree of shift is large ($0.0 \to 1.0$) \textbf{(b)} Results on different monitoring methods, D3M achieves competitive TPRs at level $\alpha = 0.05$.}
\label{fig:gemini_2}
\end{figure} 

\textbf{Clinical integration and flexibility of monitoring.} In real-world clinical settings, the deployment of monitoring systems like D3M must be complemented with clear guidelines for human intervention. One practical integration point is within existing model governance frameworks in hospital information systems, where D3M’s alerts could be logged and periodically reviewed by clinical data stewards or model governance committees. \textit{Therein lies the flexibility of detecting versus agnostic adaptation}: monitoring provides a choice on what to do next given a deterioration flag, whether it be retraining, triggering deeper performance audits, shadow deployments, or even adapting. We believe D3M offers a tunable layer of oversight that can flexibly support both real-time and retrospective clinical review processes.

Notably, we view the success of D3M on the real-life D3M as representing a paradigmatic shift in the AI in Healthcare space: current ML models are trained on historical data, validated on held-out test sets, and then deployed directly into clinical practice with minimal ongoing monitoring for performance degradation. While existing approaches relying on heuristic thresholds or requiring expensive retraining cycles have been limitedly employed, our method offers healthcare practitioners a principled mechanism to maintain model reliability in production environments. The validation of the D3M monitoring framework on the GEMINI dataset, which exhibits natural temporal shifts including the COVID-19 disruption, demonstrates the practical utility of this approach in real-world clinical settings where model deterioration can directly impact patient care decisions.

\section{Related Work}
\label{section:related}

\textbf{Adaptation.} Test-time adaptation (TTA) concerns itself more with how one achieves strong deployment performance in spite of a critical change, rather than how one would flag this critical change. To this end, we fundamentally believe that monitoring offers an additional level of flexibility: when a deterioration flag is raised, adaptation is a possible course of action among many others. A few works from this rich body of literature are tangentially related to disagreement-based monitoring. \cite{saito2018maximum, vu2019advent, wang2020tent} all leverage prediction uncertainty or discrepancy to drive unsupervised domain adaptation. \cite{zhang2023adaptive}, measures and minimizes prediction disagreement at test time, adaptively aligning target-domain features to a domain-invariant feature space, improving performance without requiring access to target domain data during training. In the continual learning literature, \cite{ye2022task} also leverages classifier disagreement as a unsupervised proxy of distributional change.

\textbf{Performance monitoring of ML models \& deteriorating shift.}~Evaluating a model's reliability during deployment is crucial for the safety and effectiveness of the machine learning pipeline over time. \cite{pmlr-v236-feng24a,pmlr-v238-feng24a} provided a causal viewpoint wherein the challenge to adapt to diverse scenarios still remain due to a lack of access to the true causal graph. Model disagreement is often used as a monitoring tool for the model generalization~\citep{jiangassessing,chuang2020estimating, ginsberg2023a,rosenfeld2024almost}. These works align strongly with our method, though the framing is orthogonal to ours as our analysis provides FPR and TPR guarantees whereas they provided sufficient conditions in either ID or other shifts beyond our scope.  Several works in the recent literature differentiate shifts in terms of deteriorating or non-deteriorating shifts. \cite{podkopaev2021tracking} studied (deteriorating) shift detection in the continuous monitoring setting using a sequential hypothesis test. Due to the setting being sequential in nature, their method requires true labels from $\Q$ immediately after prediction or at the least in a delayed fashion. \cite{ginart2022mldemon} approached the monitoring problem from a time-continuous anomaly detection perspective, allowing periodic querying of deployment-time ground truths from experts. Other related empirical works along this literature are \cite{Wang2023FeatureAE, kamulete2022test}, and the very applied \cite{nigenda2022amazon}.

\textbf{Distribution shift detection.}~Methods to detect distribution shift arise from different perspectives. In covariate shift detection, \citep{lopez2016revisiting,liu2020learning,zhao2022comparing} treated detection as two-sample tests via classifier, Deep Kernel MMD, and H-divergence. For label shift on the other hand, \citep{lipton2018detecting,azizzadenesheli2019regularized}
formulated the problem as a convex optimization problem by solving the label distribution ratio $\alpha = \Q(y)/\P(y)$. The problem of OOD detection \citep{liang2017enhancing,kamulete2022test}  seeks to detect if an individual sample $x$ comes from the training distribution $x\sim \P(x)$. Some previous works \citep{ren2019likelihood,morningstar2021density} also adopted the methods in covariate shift detection and generalization by estimating the density ratio for the identification of OOD samples. Whilst these methods detect shifts, they are constrained by their requirement of training data post-deployment and do not consider the extent to which shifts affect model performance.   

\textbf{Estimating test error with unlabeled data.}~Another rich body of research is the estimation of (OOD) test error. This technique and its variants are often inspired by domain adaptation theories \citep{ben2006analysis,ben2010theory,acuna2021f,ganin2016domain}, seeking guarantess in the form of $\err(f; \Q_g) \leq \err(f; \P_g) + \Delta(f,\mcH)$, with $\Delta(f, \mcH) = \sup_{h\in\mcH}|\err(h; \P_f) - \err(h; \Q_f)|$. This objective can be alternatively viewed as searching for a critic function $h\in\mcH$ to maximize a performance gap  \citep{rosenfeld2024almost,jiang2021assessing}. One could thus provably estimate the upper bound of the test distribution error. These theories also implicitly assume the availability of training data. Further, they assume that test error should be larger than the training error (granted this is often the case for deteriorating OOD), making them sensitive to non deteriorating shifts as well i.e., a high FPR in detection. Our theoretical analysis addresses this gap. 



\section{Conclusion and Limitations}
\label{section:conclusion}

Two core limitations to D3M are of note when considering deployment in a real-world setting. Firstly, confidence intervals in Table \ref{tab:performance} evidences the noisiness of our method compared with other baselines on low query sizes, due to the inherent randomness arising from sampling alternate hypotheses. We do remark, however, that at larger query sizes of $100$ and $200$, both performance and confidence regions of D3M match those of the strongest baselines (Table \ref{tab:performance_d3m_only}). We believe that a stronger, more general feature representation helps alleviate this issue as well as enhances performance. Empirically, we observe that by borrowing pretrained ResNet features on ImageNet in the Camelyon17 experiments, D3M achieves high performance at low noise. Future work may explore how performance and stability vary as a function of the quality of representation in the base model. Second, our provided theory only provides provable guarantees when labels are assigned by a ground-truth function. Future work could leverage stronger analytical tools in statistical learning such as PAC-Bayes to derive tighter bounds when labels are noisy, more closely matching real-life deployment scenarios on mislabeled, or even adversarially poisoned labels. 

In sum, we study the problem of post-deployment deterioration monitoring of machine learning models in the setting where labels from test distribution are unavailable. We propose a three-stage disagreement-based Bayesian monitoring algorithm, D3M, which monitors and detects deteriorating changes in the deployment dataset while being resilient to flagging non-deteriorating changes. Importantly, our method does not require any training data during deployment monitoring, allowing for efficient out-of-the-box deployment in many machine learning pipelines across various domains. While D3M enjoys increased efficiency, larger confidence intervals demand stricter hyperparameter settings (and thus, bigger initial sweeps) to function effectively. 
On the theory side, under certain assumptions, statistical guarantees are provided for achieving low FPR in non-deteriorating shifts and high TPR in deteriorating shift. 

Empirically, we validate insights from our theory on various synthetic and real-world vision and healthcare datasets evidencing the effective use of D3M. Critically, on the GEMINI dataset—our closest approximation to prospective clinical deployment—D3M demonstrates the precise selectivity required for real-world viability: it resists flagging during periods of non-deteriorating temporal shift while reliably detecting and flagging true performance deterioration. This behavior, validated under realistic conditions with inherent distribution drift, suggests D3M's readiness for practical deployment scenarios where distinguishing actionable degradation from benign evolution is essential. Our work signals a step toward the \emph{robust, scalable, and efficient} deployment of mechanisms to audit and monitor machine learning pipelines in the break of dawn of ubiquitous AI.

\newpage
\acksection
RGK gratefully acknowledges support from the Canada Research Chairs Program (CRC-2022-00049), the Canada CIFAR AI Chairs Program. This research was funded in part by a NFRF Special Call Award (NFRFR-2022-00526). Resources used in preparing this research were provided, in part, by the Province of Ontario, the Government of Canada through CIFAR, and companies sponsoring the Vector Institute (\url{www.vectorinstitute.ai/partnerships/current-partners/}). This study was supported by GEMINI, a research program based out of Unity Health Toronto. The development of GEMINI’s data platform has been supported by several funding partners, which can be found listed at \url{https://geminimedicine.ca/partners/}. All analyses, results, conclusions, and opinions presented in this paper are solely those of the listed authors and are independent from GEMINI’s funding sources. No endorsement by GEMINI’s funding sources is intended, nor should it be inferred. Amol Verma receives salary support from the Temerty Professorship of AI Research and Education in Medicine at the University of Toronto. Fahad Razak receives salary support from the Canada Research Chairs program. VN gratefully acknowledges the invaluable discussions and the many thoughtful tea sessions with Michael Cooper and Vahid Balazadeh.

\bibliography{main_paper}
\bibliographystyle{unsrt}

\newpage
\appendix
{\LARGE \bfseries Appendix}

\section{Theoretical Setup and Analysis}
\label{appx:theory}


\subsection{Overview}
In our experiments, we find that the D3M algorithm is effective in monitoring deteriorating changes in multi-class classification. As a proof of concept, we provide sample complexity analyses for the binary classification, where deteriorating shifts are shifts in the covariate distribution of data. We show that an ideal D3M algorithm can achieve strong TPR/FPR guarantees at desirable significance levels under these assumptions. We discuss the extent to which empirical observations from our experiments match with the insights revealed by this analysis. 

\subsection{Post-Deployment Deterioration (PDD) and Disagreement-based Post-Deployment Deterioration (D-PDD)}

Consider a function class $\mcH$ of $h: \mcX \rightarrow \mcY = \cbrac{0, 1}$ in the binary classification setting. We use $g\in\mcH$ to denote the ground truth labeling function.
We denote the marginal distribution w.r.t $x$ as $\Px(x)$ and the joint distribution with the labeling function in the subscript, that is, for a data distribution $\Px$ over the domain $\mcX$ and any labeling function $g(x)$, we define the joint distribution as 
$\Pg = \Pg(x, y) = \Px(x, g(x))$.
For a $f \in \mcH$, define its generalization error with respect to $\Pg$ and its corresponding empirical counterpart with respect to a sample  $\mcD^n = \{(x_i, y_i)\}_{i=1}^n \sim \Pg$ as:
\[\err(f; \Pg) \coloneqq \underset{(x, y) \sim \Pg}{\Prob} \sbrac{f(x) \neq y}, \qquad
    \wh{\err}(f; \mcD^n) \coloneqq \wh{\err}(f; \Pg) \coloneqq \frac{1}{n} \sum_{i=1}^n |f(x_i) - y_i|
\]
\paragraph{Training and deployment distribution.} We denote $\Px$ as the training (marginal) distribution, and $\Qx$ as the deployment distribution. We assume sampled batches of data are I.I.D. with respect to their underlying distribution $\Px$ and/or $\Qx$. We consider that $\boldsymbol{n}$ \textbf{labeled samples} from $\Pg$ are available before deployment, and $\boldsymbol{m}$ \textbf{unlabeled samples} from $\Qx$ are collected during the deployment's input stream.

\paragraph{Disagreement.}~~For any two functions $f$ and $h$ in $\mcH$, we say that they disagree on any point $x \in \mcX$ if $f(x) \neq h(x)$. Given the binary classification setting, we can write the disagreement rate of the function $h$ with $f$ on distribution $\Q_x$ in terms of error as $\err(h; \Qf)$ or $\err(f; \Q_h)$. 


Moving forward, $f \in \mcH$ will be understood to mean our base classifier obtained during training on $\mcD^n \sim \Pg$, while $g$ is the ground truth on $\Px$ and $h \in \mcH$ denotes auxiliary classifiers in the same hypothesis space, unless otherwise stated. Our goal is to study and monitor the following phenomenon: 

\begin{definition}[Post-deployment deterioration, PDD] Denote $g$ and $g'$ as ground truth labeling functions in the training and deployed distributions $\Px$ and $\Qx$ respectively. We say that PDD has occurred when:
\label{def:critical_definition}
\begin{align}
    \err(f; \Q_{g'}) > \err(f; \P_g) \label{eq:critical_degrade}
\end{align}
\end{definition}
Intuitively, Eq.~(\ref{eq:critical_degrade}) suggests that PDD occurs when a model $f$ experiences higher error during deployment than during its training. Due to the unsupervised nature of the deployment dataset, PDD monitoring is difficult for any arbitrary $g^{\prime} \neq g$ as we cannot trivially compute empirical errors for the LHS. Though tools from the literature of OOD error estimation may be used, we propose to proxy via a related notion. 
Def.~\ref{def:hcs_definition} introduces a new and practical concept—model disagreement-based PDD—equivalent to PDD under specific assumptions.

\begin{definition}[Disagreement based PDD (D-PDD)] We say that D-PDD has occurred when the following holds for some $\epsilon_f < 1$:
\label{def:hcs_definition}
\begin{align}
\exists h \in \mcH \quad \text{s.t.} \quad  \err(h; \P_g) \leq \epsilon_f\,\, \text{and}\,\, \err(f; \P_g) \leq \epsilon_f \,\, \text{and}\,\, \err(h; \Qf) > \err(h; \Pf)
\label{eq:hcs_conditions}
\end{align}
\end{definition}
D-PDD in Def.~\ref{eq:hcs_conditions} is defined as the situation where there exists an auxiliary model $h \in \mcH$ achieving equally good performance on $\P$ (with a small error $\epsilon_f$) but exhibits strong disagreement with $f$ in $\Q$. In this case, the distribution $\Q$ is further referred to as a \textbf{deteriorating shift}. In the following lemma, we demonstrate the conditions for the equivalence of PDD and D-PDD. 

\begin{lemma}[Equivalence condition] 
\label{lem:equivalence}
Assume that  the ground truth at training and deployment are identical, i.e. $g = g'$, and that $\operatorname{TV}(\P, \Q)\leq \kappa$, we have that when $\err(f, \Q_h) - \err(f, \P_h) \geq 2(\kappa + \epsilon)$, i.e. the disagreement gap is large enough, D-PDD and PDD are equivalent. 
\end{lemma}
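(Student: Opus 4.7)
The plan is to establish the biconditional by proving each direction separately. The forward direction, D-PDD $\Rightarrow$ PDD, rests on two standard facts: (i) $(h_1, h_2) \mapsto \Pr_{x\sim\mu_x}[h_1(x)\neq h_2(x)]$ is a pseudometric on $\mcH$ for any fixed marginal $\mu_x$, so the triangle inequality holds for disagreement; and (ii) under $g = g'$ we have $\operatorname{TV}(\P_g, \Q_g) = \operatorname{TV}(\P_x, \Q_x) \leq \kappa$, which implies $|\err(h;\Q_g) - \err(h;\P_g)| \leq \kappa$ for every $h \in \mcH$ because the $0$-$1$ loss is a bounded indicator. The reverse direction, PDD $\Rightarrow$ D-PDD, will follow almost by definition once we exhibit a convenient witness.

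For the forward direction, I would fix the witness $h$ guaranteed by D-PDD, setting $a \coloneqq \err(h;\P_g) \leq \epsilon$ and $b \coloneqq \err(f;\P_g) \leq \epsilon$. The triangle inequality for disagreement under $\Q_x$, followed by the TV bound, gives
\[
\err(f;\Q_g) \;\geq\; \err(f;\Q_h) - \err(h;\Q_g) \;\geq\; \err(f;\Q_h) - a - \kappa,
\]
while the same triangle inequality under $\P_x$ yields $\err(f;\P_g) \leq \err(f;\P_h) + a$. Subtracting and invoking the strong gap condition $\err(f;\Q_h) - \err(f;\P_h) \geq 2(\kappa+\epsilon)$ produces
\[
\err(f;\Q_g) - \err(f;\P_g) \;\geq\; \bigl[\err(f;\Q_h) - \err(f;\P_h)\bigr] - 2a - \kappa \;\geq\; 2(\kappa+\epsilon) - 2\epsilon - \kappa \;=\; \kappa,
\]
which is strictly positive whenever $\kappa > 0$ and therefore gives PDD. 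The degenerate case $\kappa = 0$ forces $\P = \Q$, making both notions vacuous and the equivalence trivial.

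For the reverse direction I would take $h = g$, invoking the standard realizability assumption $g \in \mcH$ inherited from the surrounding setup. Then $\err(h;\P_g) = 0 \leq \epsilon$ trivially, $\err(f;\P_g) \leq \epsilon$ by the well-trained base-model hypothesis, and the third D-PDD requirement $\err(h;\Q_f) > \err(h;\P_f)$ rewrites via the symmetry $\err(h;\mu_f) = \err(f;\mu_h)$ and the substitution $h = g$ as $\err(f;\Q_g) > \err(f;\P_g)$, which is exactly PDD. The main obstacle will be in the forward direction: carefully tracking the slacks so that the $2(\kappa+\epsilon)$ budget in the gap condition absorbs precisely the two triangle losses (each worth $a \leq \epsilon$) plus the TV slack $\kappa$, leaving a strictly positive margin; securing strict, rather than weak, PDD is the only remaining bookkeeping subtlety.
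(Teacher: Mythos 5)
Your proof is correct and follows essentially the same route as the paper's: the easy direction uses the witness $h=g$ (with realizability $g\in\mcH$, which the paper also assumes), and the hard direction combines the triangle inequality for the disagreement pseudometric with the TV bound on the fixed event $\{h\neq g\}$. Your decomposition in the hard direction is in fact slightly tighter --- bounding $\err(h;\Q_g)\leq \err(h;\P_g)+\kappa$ directly costs only $\kappa+\epsilon$ rather than the $2\kappa+\epsilon$ incurred by the paper's detour through $\P_h$ and $\P_g$ --- which is what yields your strict margin of $\kappa$ and cleanly secures the strict inequality that PDD requires, a point the paper's own chain of weak inequalities glosses over.
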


\begin{proof}
    To show PDD $\implies$ D-PDD, assume $g=g'$, i.e. identical concepts during training and deployment. Assume our base classifier $f$ is well-trained with $\err(f, \P_g)=\epsilon$. We have that
    \[
    \err(f, \Q_g) > \err(f, \P_g)
    \]
    Let our candidate auxiliary function $h \in \mcH$ be given by $h = g$. Then, $h$ satisfies all conditions for D-PDD. 
    
    We now show that D-PDD $\implies$ PDD. Assume that there is no concept shift, i.e. the ground truth distribution is identical, $g=g'$.

    We transport the D-PDD condition $\exists h \in \mcH$ s.t. $\err(f, \Q_h) > \err(f, \P_h)$ to the general PDD condition $\err(f, \Q_g) > \err(f, \P_g)$ by leveraging the proximity of $h$ to $g$ on $\P$ and that the total variation between $P$ and $Q$ are constrained by $\kappa$. 

    We observe that for any $f, g, h \in \mcH$: 
    \[
    \left| \err(f, \P_g) - \err(f, \P_h)\right|   < \epsilon
    \]

    Indeed, this is true since:
    \begin{align*}
        \left| \err(f, \P_g) - \err(f, \P_h)\right| &= \left| \P(f \not= g) - \P(f \not= h)\right|\\
        &= \left|\mathbb{E}_{\P} \left[\mathbbm{1}\{f \not= g\} - \mathbbm{1}\{f \not= h\} 
        \right] \right|\\
        &\leq \mathbb{E}_{\P} \left[\left|\mathbbm{1}\{f \not= g\} - \mathbbm{1}\{f \not= h\}\right|\right]\\
        &\leq \mathbb{E}_{\P} \left[\left|\mathbbm{1}\{g\not= h\}\right|\right]\\
        &= \P(g\not= h) \leq \epsilon
    \end{align*}

where we used Jensen's inequality, and that $\left|\mathbbm{1}\{f\not= g\} + \mathbbm{1}\{f\not= h\}\right| = \left|\mathbbm{1}\{g\not= h\}\right|$.

Let $\operatorname{TV}(\P, \Q) \leq \kappa$ for some $\kappa > 0$. We further observe that for any $f, g \in \mcH$:
\begin{align*}
    \left|
        \err(f, \Q_g) - \err(f, \P_g)
    \right| &= \left|\Q(f \not= g) - \P(f \not= g)\right|\\
    &\leq \sup_{A} \left|\Q(A) - \P(A)\right|= \kappa
\end{align*}
Putting our two observations together yields following decomposition:

\begin{align*}
    \left|\err(f, \Q_h) - \err(f, \Q_g)\right| 
    &\leq \left|\err(f, \Q_h) - \err(f, \P_h)\right| \\
    &\quad + \left|\err(f, \P_h) - \err(f, \P_g)\right| 
      + \left|\err(f, \P_g) - \err(f, \Q_g)\right| \\
    &\leq 2\kappa + \epsilon
\end{align*}

For PDD to hold, $\err(f, \Q_g)$ needs to be no less than $\err(f, \P_h) + \epsilon$ and at most $2\kappa + \epsilon$ less than $\err(f, \Q_h)$. Equating yields:
\[\err(f, \P_h) + \epsilon \leq \err(f, \Q_h) -2\kappa - \epsilon\]
\[\implies \err(f, \Q_h) - \err(f, \P_h) \geq 2(\kappa+ \epsilon)\]

prescribing the conditions for which D-PDD implies PDD.

\end{proof}

Thus, if an algorithm monitors D-PDD, then under the assumptions of Lemma \ref{lem:equivalence}, the algorithm also monitors post-deployment deterioration (PDD). In fact, D3M approximately monitors D-PDD. To see this, we show that an ideal version of D3M monitors D-PDD in the above subsection. 

\subsection{Idealized D3M and D-PDD Monitoring}
Tracking D-PDD in finite samples as formulated requires training data during deployment, which runs counter to the set of desiderata we previously established. To circumvent this, we \textbf{decouple} the detection of D-PDD into two \textbf{Calibrate} and \textbf{Deploy} stages. The \textbf{Calibrate} stage finds a subset $\mcH_p \subset \mcH$ whose elements satisfy conditions on $\Pg$ in Def. \ref{def:hcs_definition} as well as approximates $\err(h;\Pf)$, while the \textbf{Deploy} stage tracks the satisfaction of the last inequality. In this way, information from the training data is compressed into $\mcH_p$ and the approximation of $\err(h; \Pf)$. Meanwhile, the approximation of the disagreement threshold $\err(h, \Pf)$ for $h \in \mcH_p$ can be done via its empirical distribution $\Phi$ computed during the Calibrate phase. We thus present the idealized versions of the Calibrate and Deploy stages of D3M.  

\begin{algorithm}[H]
   \caption{Idealized D3M: Calibrate}
   \label{alg:preprocessing}
   \begin{algorithmic}[1]
   \REQUIRE $\mcD^n \sim \Pg$, $f$, $\epsilon$, $\mcH$ 
   \STATE Train a sub hypothesis space $\mcH_p :=\{h\in\mcH;~\wh{\err}(h; \P_g) \leq \epsilon\}$
   \STATE ${\Phi} \gets []$
   \FOR{$t \gets 1, 2, \dots, T$}
        \STATE ${\mcD}^m \sim \Pf$
        \STATE $h \gets \argmax{h\in\mcH_p} \wh{\err}(h; {\mcD}^m)$
        \STATE \textbf{append} $\wh{\err}(h; {\mcD}^m)$ to ${\Phi}$
   \ENDFOR
   \STATE \textbf{return} ${\Phi}$, $\mcH_p$
   \end{algorithmic}
\end{algorithm}

\textbf{1. Calibration in $\P$.}\quad Given in-distribution training data $\mcD^n$, a base model $f$ trained on $\mcD^n$, an error tolerance $\epsilon$, and the hypothesis class $\mcH$, we formulate the subset of $\mcH$ achieving the error tolerance, $\mcH_p =\{h\in\mcH;~\err(h; \P_g) \leq \epsilon\}$. Then, the disagreement distribution $\Phi$ is trained: for $T$ rounds, $m$ samples pseudo-labeled by $f$ ($\mcD^m$) is used to train auxiliary models $h$ by maximizing disagreement between $h$ and $f$ under $\mcH_p$ on $\mcD^m$ to approximate $\operatorname{dis}_{P} = \max{h \in \mcH_p} \err(h; \Pf)$. The empirical disagreement rate achieved by $h$ is appended to $\Phi$. Finally, the pre-training procedure returns $\Phi$ and $\mcH_p$. 

\begin{algorithm}[H]
   \caption{Idealized D3M: Deploy}
   \label{alg:detection}
   \begin{algorithmic}[1]
   \REQUIRE $\mcH_p$, ${\Phi}$, $f$,  $\alpha$
   \STATE ${\mcD}^m \sim \Qf$
   \STATE $h \gets \argmax{h\in\mcH_p} \wh{\err}(h; {\mcD}^m)$
   \STATE \textbf{return} $\wh{\err}(h; {\mcD}^m) > (1 - \alpha)$ quantile of ${\Phi}$
   \end{algorithmic}
\end{algorithm}
\textbf{2. Deploy in $\Q$.}\quad Given $\Phi$ and $\mcH_p$, we compute the one-sample approximation of the maximal disagreement with $f$ on $\Q$: $\operatorname{dis}_Q = \oldmax_{h\in\mcH_p} \err(h; \Q_f)$. We say D-PDD happens when $\text{dis}_Q$ lies in the top $\alpha$ quantile of $\Phi$.

The idealized algorithms \ref{alg:preprocessing} and \ref{alg:detection} together track D-PDD in finite samples. Indeed, a deployment on $\Q$ is flagged when the last inequality in Def. \ref{def:hcs_definition} is detected, while the imposition of the other inequalities are done via formulating the constrained hypothesis space $\mcH_p$ of hypotheses that already satisfy these inequalities and searching over it. Of note is that at deployment time, the original training set $\mcD^n$ is not required. 

\paragraph{D3M approximates Idealized D3M.} The primary implementation consideration in the Idealized D3M algorithms is the search over $\mcH_p$, which for large function classes cannot be done trivially. D3M ``Bayesianly'' approximates this search by turning the intractable optimization problem into a sampling problem. By drawing from our $(\operatorname{VBLL}_\theta \circ \operatorname{FE}_\theta)$ posterior, we are hoping to sample hypotheses that belong in $\mcH_p$ with high probability. When viewed this way, the correspondence between D3M and its idealized version above follows. The price of the increased efficiency from avoiding intractable searches over $\mcH_p$ is thus the sampling noise from D3M which may return hypotheses beyond the constraints of $\mcH_p$.

\subsection{Provable Guarantees of Idealized D3M}
\label{section:analysis}
We present theoretical guarantees for Idealized D3M (Algorithms ~\ref{alg:preprocessing} and ~\ref{alg:detection}), which we refer to as D3M for the remainder of this section. Recall that D3M is tracking a sufficient condition of D-PDD. We show that with enough samples, when there is non-deteriorating shift, the algorithm achieves low false positive rates with high probability. Then, we show that with enough samples, when there is deteriorating shift, the algorithm provably succeeds. Finally, we discuss pathological cases where monitoring fails irrespective of sample size.

\subsubsection{Preliminary quantities}
\begin{definition}[Deployed classifier error]
\label{def:epsilon_f}
    This quantifies the generalization error of the deployed base classifier $f$. This is measured on the distribution seen during training $\Pg$,
    \begin{align}
        \epsilon_f := \err(f; \Pg)
    \end{align}
\end{definition}


Indeed in Def.~\ref{def:hcs_definition}, we want the population error to be at most $\epsilon_f$, which results in the constraint for the empirical error in the optimization problems of Algorithm \ref{alg:preprocessing} at most $\epsilon = \epsilon_f - \epsilon_0$, where $\epsilon_0$ is a hyper-parameter to measure the gap between the empirical and population error.

We also define the VC dimensions of the hypothesis space $\mcH$ and the subset of interest $\mcH_p$ as:
\[
    \mcH_p := \set{h \in \mcH: \err(h; \Pg) \leq \epsilon_f}\text{,} \quad d_p := \text{VC}(\mcH_p)\text{,} \quad d := \text{VC}(\mcH)
\]
Note that $d_p \leq d$. If the base classifier $f$ is well-trained ($\epsilon_f$ is low), then $d_p$ can be much smaller than $d$ i.e., $d_p \ll d$.
\begin{definition}[$\epsilon_p, \epsilon_q$ maximum error in $\mcH_p$]
\label{def:epsilon_p_q}
The maximum error in $\mcH_p$ for both $\P$ and $\Q$ using pseudo-labels from $f$ is defined as:
\begin{align}
    {\epsilon}_p = \max{h \in \mcH_p} \err(h; \Pf)\text{,} \quad {\epsilon}_q = \max{h \in \mcH_p} \err(h; \Qf)
\end{align}
Note that empirical quantities of these are also the maximum empirical disagreement rates used in Algo.~\ref{alg:preprocessing} and Algo.~\ref{alg:detection}. Effectively, the algorithm detects $\epsilon_q - \epsilon_p > 0$ with finite samples.
\end{definition}
\begin{definition}[$\xi$ quantifies D-PDD]
    \label{def:ksi-score}
    We define $\xi$ to quantify the degree of D-PDD. We adopt Def.~\ref{def:hcs_definition} and define $\xi$ as 
    \begin{align}
        \xi := \max{h \in \mcH_p} \cbrac{\err(h; \Qf) - \err(h; \Pf)}
    \end{align}
    Therefore, D3M detects whether $\xi > 0$. Furthermore, $\xi$ is non-negative since $f \in \mcH_p$. Hence, in case of non-deteriorating shift, $\xi = 0$.
\end{definition}
Note that $\xi \geq \epsilon_q - \epsilon_p$. It follows that $\epsilon_q - \epsilon_p > 0 \implies \xi > 0$, though the reverse implication is not necessarily true. Therefore Algo.~\ref{alg:detection}, ($\epsilon_q - \epsilon_p > 0$) is detecting a sufficient condition of D-PDD ($\xi > 0$). 

Next, we relate the amount of D-PDD, $\xi$, with the amount of distribution shift in the form of $\TV$-distance between $\Px$ and $\Qx$. As seen in the Eq. \ref{def:ksi-score}, deterioration depends on the complexity of the function class and $\epsilon_f$ which affects the size of $\mcH_p$. We capture these factors by introducing a mixture distribution $\U$:
\begin{align}
    \U = \frac{1}{2} \rbrac{\Pf + \Qflip}
    \label{def:joint_dist_u}
\end{align}
\begin{definition}[$\eta$ error gap between $\mcH_p$ and Bayes optimal]
    \label{def:eta}
    For the distribution $\U$, the gap in error between the best classifier $h \in \mcH_p$ in the function class and the Bayes optimal classifier is $\eta$:  
    \begin{align}
        \eta := \min{h \in \mcH_p} \err(h; \U) - \err(f_{\text{bayes}}; \U)
    \end{align}
\end{definition}
Note that $\eta$ depends on the shift and complexity of the function class. We relate various definitions introduced in this section as follows.
\begin{proposition}[D-PDD and $\TV$ distance]
    \label{lemma:tv-hcs}
    The relations between $\xi$ (in Def.~\ref{def:ksi-score}),  $\eta$ (in Def.~\ref{def:eta}), and $\epsilon_p$, $\epsilon_q$ (in Def.~\ref{def:epsilon_f} and ~\ref{def:epsilon_p_q}) are as follows:
    \begin{align}
        \xi &= \TV - 2\eta\ \geq\ 0 \label{eq:xi_tv_eta}
        \\
        \xi\ &\geq\ \epsilon_q - \epsilon_p\ \ \geq\ \xi - 2\epsilon_f \label{eq:bound_eq_ep}
    \end{align}
\end{proposition}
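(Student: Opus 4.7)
I would split the proof into the two displays and handle each via a short rewrite. For~\eqref{eq:xi_tv_eta}, my plan is to reduce $\err(h;\U)$ to the disagreement gap $\err(h;\Qf) - \err(h;\Pf)$ that defines $\xi$, so that minimizing over $\mcH_p$ turns the first term of $\eta$ into $\tfrac{1}{2} - \tfrac{1}{2}\xi$. The key observation is that in binary classification $\err(h;\Qflip) = \Qx(h\neq 1-f) = \Qx(h=f) = 1 - \err(h;\Qf)$, which after averaging with $\err(h;\Pf)$ yields
\[
\err(h;\U) \;=\; \tfrac{1}{2} \;-\; \tfrac{1}{2}\bigl[\err(h;\Qf) - \err(h;\Pf)\bigr],
\]
so that $\min{h\in\mcH_p}\err(h;\U) = \tfrac{1}{2} - \tfrac{1}{2}\xi$. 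The Bayes term I compute by a pointwise case split on $f(x)\in\{0,1\}$: at each $x$, $\U$ places mass $\tfrac{1}{2}\Px(x)$ on one label and $\tfrac{1}{2}\Qx(x)$ on the other, so the pointwise Bayes error is $\tfrac{1}{2}\min(\Px(x),\Qx(x))$, integrating to $\err(f_\mathrm{bayes};\U) = \tfrac{1}{2}(1 - \TV(\P,\Q))$. Subtracting yields $\eta = \tfrac{1}{2}(\TV - \xi)$, i.e.\ $\xi = \TV - 2\eta$; non-negativity of $\xi$ is free from $f\in\mcH_p$ together with $\err(f;\Qf) - \err(f;\Pf) = 0$ as a witness to the max.

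For~\eqref{eq:bound_eq_ep}, both inequalities are short once the right witnesses are picked. The upper bound follows by letting $h_q\in\mcH_p$ attain $\epsilon_q$: since $\err(h_q;\Pf)\leq\epsilon_p$, $\xi \geq \err(h_q;\Qf) - \err(h_q;\Pf) \geq \epsilon_q - \epsilon_p$. For the lower bound $\epsilon_q-\epsilon_p \geq \xi - 2\epsilon_f$, I first bound $\epsilon_p \leq 2\epsilon_f$ via a $0$--$1$ triangle inequality through the ground truth $g$: for any $h\in\mcH_p$, $\err(h;\Pf) = \Px(h\neq f) \leq \Px(h\neq g) + \Px(g\neq f) = \err(h;\Pg) + \err(f;\Pg) \leq 2\epsilon_f$. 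Picking $h^*\in\mcH_p$ attaining $\xi$ gives $\epsilon_q \geq \err(h^*;\Qf) = \err(h^*;\Pf) + \xi \geq \xi$, so $\epsilon_q - \epsilon_p \geq \xi - 2\epsilon_f$.

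The main delicate step is the Bayes-error computation, where the $f$-dependent relabelling of the two components of $\U$ must be reconciled with the standard identity $\int\min(\Px,\Qx)\,dx = 1 - \TV(\P,\Q)$; the two-line case split on $f(x)$ handles it cleanly. Everything else is immediate from the definitions, the $0$--$1$ triangle inequality, and a max/min rearrangement.
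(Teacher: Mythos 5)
Your proof is correct and follows essentially the same route as the paper's: the flip identity $\err(h;\Qflip)=1-\err(h;\Qf)$, the Bayes error of the mixture $\U$ equalling $\tfrac{1}{2}(1-\TV)$, and the $0$--$1$ triangle inequality through $g$ to get $\epsilon_p\le 2\epsilon_f$ are exactly the paper's three ingredients. The only (harmless) stylistic difference is that you obtain $\epsilon_q\ge\xi$ directly from a witness $h^*$ attaining $\xi$, whereas the paper routes that step through the already-established identity $\epsilon_q = 1-\min_{h}\err(h;\Qflip)\ge\TV-2\eta$.
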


We denote the total variation distance between $\Px$ and $\Qx$ as $\TV$. Intuitively, D-PDD is defined in such a way that after deployment, if we are uncertain of the performance of $f$, then the shift is deteriorating. In general, for simpler function classes such as linear models, by looking at one region of the domain ($\Px$) it may be possible to be certain about the performance of another region ($\Qx$), this is captured in Eq.~\ref{eq:xi_tv_eta}. For very complex function classes, $\eta$ can be low, hence $\xi > 0$ for most shifts. For simple function classes, $\eta$ can be high, in which case $\xi$ may not be positive and hence a non-deteriorating shift. This thus highlights a trade-off with selecting expressive functions to capture complex patterns in the data. 


\subsubsection{D3M algorithm in non-deteriorating shift} \label{subsec:no_hcs}


D3M aims to monitor and detect D-PDD in finite samples, which can inherently lead to false positives (FPR) when the shift is non-deteriorating. Therefore we set a tolerance factor $\alpha$ in  Alg.~\ref{alg:detection} to account for the test's robustness. We show that for D3M, the FPR of the detection can be close to $\alpha$ for \emph{any} shift in the data distribution. Furthermore, we show that the FPR can also be less than $\alpha$ in some cases. 
In the case of non deteriorating shift, by Def.~\ref{def:hcs_definition}, the following holds:
\begin{align}
\label{eq:non_dpddm}
    \forall h \in \mcH_p:\ {\err(h; \Qf) \leq \err(h; \Pf)}
    \implies \epsilon_q \leq \epsilon_p
\end{align}

Note that D3M intuitively detects whether $\epsilon_q > \epsilon_p$. Since the above equation shows that $\epsilon_q \leq \epsilon_p$, given enough samples, the test will succeed.
Recall that $n$ is the number of samples given from $\Pg$ and $m$ is the number of samples required from $\Qx$. In the theorem below, the significance level $\alpha$ refers to the desired FPR. 

\begin{theorem}
    \label{theorem:no_hcs_samples}
    For $\gamma \leq \alpha$, when there is no deteriorating shift (no D-PDD) in Eq.~\ref{eq:non_dpddm}, for a chosen significance level of $\alpha$, the FPR of D3M 
    is at most $\gamma + (1 - \gamma)\ \mcO\rbrac{\exp\rbrac{-n \epsilon_0^2 + d}}$ if
    \begin{align}
        m &\in \mcO\rbrac{\rbrac{\frac{1 - \sqrt{\delta}}{\epsilon_p - \epsilon_q}}^2 \rbrac{d_p + \ln\frac{1}{\gamma}}}
    \end{align}
    and $\epsilon_p - \epsilon_q > 0$, where $\delta = (d_p + \ln\frac{1}{\alpha})/(d_p + \ln\frac{1}{\gamma})$ and we define $\epsilon_0 \leq \epsilon_f - \wh{\err}(h;\Pf)$. 
\end{theorem}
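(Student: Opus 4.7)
My plan is to split the analysis along the decoupled Calibrate/Deploy structure of idealized D3M, combining (i) a uniform-convergence argument on the pre-training distribution $\P$ to validate the constructed constrained class, (ii) VC-type concentration on the smaller class $\mcH_p$ applied to the deployment max-disagreement $\wh{\epsilon}_q$, and (iii) a matching concentration-based lower bound on the $(1-\alpha)$-quantile of the calibration distribution $\Phi$. Under the non-deteriorating shift assumption $\epsilon_q \leq \epsilon_p$, these three ingredients will combine to control the event that $\wh{\epsilon}_q$ exceeds the calibration quantile.

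First, I would quantify the fidelity of the empirically constructed $\mcH_p$. Algorithm~\ref{alg:preprocessing} builds it via $\wh{\err}(h;\Pg) \leq \epsilon_f - \epsilon_0$, whereas Def.~\ref{def:epsilon_p_q} uses the population object $\{h\in\mcH : \err(h;\Pg)\leq \epsilon_f\}$. A standard uniform-convergence bound over $\mcH$ with $n$ labeled samples gives, with probability at least $1 - c_1\exp(-n\epsilon_0^2 + d)$, that $|\wh{\err}(h;\Pg) - \err(h;\Pg)| \leq \epsilon_0$ for every $h \in \mcH$, so the empirical class is contained in the intended population class. Call this good event $E$; its complement supplies the $\mcO(\exp(-n\epsilon_0^2 + d))$ term in the final bound.

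Next, conditional on $E$, I would apply concentration restricted to $\mcH_p$. For the deployment side, a VC-type inequality applied to the $m$ i.i.d.\ samples from $\Q$ yields $\wh{\epsilon}_q \leq \epsilon_q + t_q$ with probability at least $1-\gamma$, where $t_q$ is of order $\sqrt{(d_p + \ln(1/\gamma))/m}$. For the $(1-\alpha)$-quantile of $\Phi$, applying the analogous bound to $\wh{\epsilon}_p$ around $\epsilon_p$ together with the definition of a population quantile should yield a deterministic lower bound of the form $\mathrm{Quantile}_{1-\alpha}(\Phi) \geq \epsilon_p - t_p$, with $t_p$ of order $\sqrt{(d_p + \ln(1/\alpha))/m}$. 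Here I would take the number of calibration rounds $T$ to be large enough that the empirical quantile of $\Phi$ matches the population quantile of $\wh{\epsilon}_p$; otherwise a DKW-type correction would be added. A false positive then occurs only when $\wh{\epsilon}_q$ exceeds this threshold, and since $\epsilon_q \leq \epsilon_p$ by Eq.~\ref{eq:non_dpddm}, it suffices to enforce $t_q + t_p \leq \epsilon_p - \epsilon_q$. Solving for $m$ and substituting $\delta = (d_p + \ln(1/\alpha))/(d_p + \ln(1/\gamma))$ so that $t_p = \sqrt{\delta}\,t_q$ produces a sample complexity of the stated form after algebraic simplification; a union bound over $E^c$ and the $\gamma$-deviation event then yields the overall FPR bound $\gamma + (1-\gamma)\mcO(\exp(-n\epsilon_0^2 + d))$.

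The hard part will be the quantile side: converting two-sided concentration of $\wh{\epsilon}_p$ into a sharp deterministic lower bound on $\mathrm{Quantile}_{1-\alpha}(\Phi)$, and then balancing the two deviation terms carefully so that the final constant matches the stated $(1-\sqrt{\delta})$ factor (rather than the looser $(1+\sqrt{\delta})$ that a naive Hoeffding-style bookkeeping would give). The constraint $\gamma \leq \alpha$, which forces $\delta \leq 1$, is precisely what makes this factor meaningful, and reflects the intuition that when $\gamma = \alpha$ the nominal level of the quantile test already suffices so no additional $\Q$-samples are needed.
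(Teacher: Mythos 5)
Your architecture is the same as the paper's: you split on the event $h \in \mcH_p$ (the uniform-convergence argument over $\mcH$ with $n$ labeled samples supplies the $\mcO(\exp(-n\epsilon_0^2+d))$ term, exactly as in the paper's Lemma~\ref{lemma:n_sample_complexity}), and on the good event you apply a VC bound over $\mcH_p$ to get $\wh{\err}(h;\Qf) \leq \epsilon_q + t_q$ for all $h\in\mcH_p$ with probability $1-\gamma$, where $t_q$ is of order $\sqrt{(d_p+\ln(1/\gamma))/m}$. The final union-bound bookkeeping also matches.

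The genuine gap is exactly where you flagged it, and it is not a matter of ``careful balancing'': a quantile bound of the form $\operatorname{Quantile}_{1-\alpha}(\Phi) \geq \epsilon_p - t_p$ \emph{cannot} produce the stated constant. With that bound the no-false-alarm condition is $\epsilon_q + t_q \leq \epsilon_p - t_p$, i.e.\ $(1+\sqrt{\delta})\,t_q \leq \epsilon_p - \epsilon_q$ (since $t_p = \sqrt{\delta}\,t_q$), which yields a sample requirement with the factor $(1+\sqrt{\delta})^2$ — a valid but strictly weaker statement than the theorem's $(1-\sqrt{\delta})^2$, and one that does not vanish at $\gamma=\alpha$, so you also lose Corollary~\ref{corollary:no_hcs} (an $m$-free FPR guarantee) as the $\delta=1$ special case. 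The paper's proof instead takes the $(1-\alpha)$-quantile $\mu$ of $\Phi$ to satisfy $\mu \in \Omega\bigl(\epsilon_p + \sqrt{(d_p+\ln(1/\alpha))/m}\bigr)$, i.e.\ strictly \emph{above} $\epsilon_p$ by the deviation term; then the condition becomes $t_q - t_p \leq \epsilon_p-\epsilon_q$, i.e.\ $(1-\sqrt{\delta})\,t_q \leq \epsilon_p-\epsilon_q$, which is the stated bound (and $\gamma\leq\alpha$ ensures $1-\sqrt{\delta}\geq 0$, plus $\mu\geq\epsilon_p>\epsilon_q$ makes the $\Q$-side lemma applicable for free). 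This lower bound on $\mu$ is an \emph{anti-concentration} statement about a high quantile of the maximum empirical disagreement — the max over $\mcH_p$ of $\wh{\err}(h;\Pf)$ is biased upward and its upper-$\alpha$ quantile exceeds $\epsilon_p$ by roughly the one-sided deviation at level $\alpha$ — and it does not follow from the upper-deviation VC inequality you invoke, which only gives $\mu \leq \epsilon_p + t_p$. To make your proof close the gap you would need to establish this separately, e.g.\ via binomial anti-concentration for the population maximizer $h^\ast$ of $\err(h;\Pf)$ over $\mcH_p$, whose empirical disagreement lower-bounds the maximum; note the paper itself is terse here, citing the one-sided lemma ``from above'' even though that lemma gives the opposite inequality.
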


In the case of non deteriorating shifts (specifically $\epsilon_p > \epsilon_q$) the FPR may be even less than $\alpha$ given that $m$ and $n$ are sufficiently large. The more samples from $\Qx$ we have, the lesser the FPR in these cases. For any general case, by setting $\gamma = \alpha$ (i.e., $\delta = 1$) in the above theorem, we immediately have:
\begin{corollary}
    \label{corollary:no_hcs}
    For a chosen significance level $\alpha$, the FPR of D3M (Alg.~\ref{alg:detection}) is no more than ${\alpha + (1 - \alpha)\ \mcO\rbrac{\exp\rbrac{-n \epsilon_0^2 + d}}}$.
\end{corollary}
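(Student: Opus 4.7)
The plan is to obtain this statement as a direct specialization of Theorem~\ref{theorem:no_hcs_samples} by fixing the free parameter $\gamma$ to coincide with the target significance level $\alpha$. With $\gamma = \alpha$, the ratio $\delta = (d_p + \ln(1/\alpha))/(d_p + \ln(1/\gamma))$ collapses to $1$, so that $1 - \sqrt{\delta} = 0$. The sample-complexity requirement of Theorem~\ref{theorem:no_hcs_samples}, namely
\[
m \in \mcO\!\rbrac{\rbrac{\tfrac{1-\sqrt{\delta}}{\epsilon_p-\epsilon_q}}^{2}\!\rbrac{d_p + \ln\tfrac{1}{\gamma}}},
\]
then holds trivially for any $m$, and in particular the side condition $\epsilon_p - \epsilon_q > 0$ used in the theorem is no longer needed; the bound becomes universal over the deployment distribution $\Qx$.

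Next I would substitute $\gamma = \alpha$ into the FPR expression $\gamma + (1-\gamma)\,\mcO(\exp(-n\epsilon_0^2 + d))$ delivered by Theorem~\ref{theorem:no_hcs_samples}, yielding exactly $\alpha + (1-\alpha)\,\mcO(\exp(-n\epsilon_0^2 + d))$, which is the desired bound. Conceptually, the two terms have clean interpretations: the $\alpha$ piece is the baseline error of a one-sided quantile test calibrated at level $1-\alpha$ on $\Phi$, while the residual $(1-\alpha)\,\mcO(\exp(-n\epsilon_0^2 + d))$ absorbs the probability that the empirical constrained hypothesis class computed from $\mcD^n$ in Algorithm~\ref{alg:preprocessing} fails to approximate its population counterpart $\mcH_p$, via a standard VC-type uniform-convergence argument controlled by $n$ and $d$.

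The only point that genuinely needs checking is that the hidden constants and the concentration step inside Theorem~\ref{theorem:no_hcs_samples} do not implicitly rely on the assumption $\epsilon_p > \epsilon_q$ anywhere other than in the $m$-dependent term that we have already neutralised. This is expected to be clean because the $\exp(-n\epsilon_0^2 + d)$ factor quantifies the quality of $\mcH_p$ as estimated from $\mcD^n \sim \Pg$ alone, so it is agnostic to the deployment marginal $\Qx$; no part of its derivation should depend on the gap $\epsilon_p - \epsilon_q$. Given this, the corollary follows without further work, and I do not anticipate a real obstacle beyond bookkeeping the substitution.
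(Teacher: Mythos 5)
Your proposal is correct and follows essentially the same route as the paper: the authors obtain Corollary~\ref{corollary:no_hcs} precisely by setting $\gamma = \alpha$ (hence $\delta = 1$) in Theorem~\ref{theorem:no_hcs_samples}, which trivializes the sample-complexity condition on $m$ and yields the bound $\alpha + (1-\alpha)\,\mcO(\exp(-n\epsilon_0^2 + d))$ directly. Your added observation that the $\exp(-n\epsilon_0^2+d)$ term depends only on the estimation of $\mcH_p$ from $\mcD^n$ and not on the gap $\epsilon_p - \epsilon_q$ is consistent with the paper's FPR decomposition, which isolates that term via $\Prob[h \notin \mcH_p]$ before the gap is ever invoked.
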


\textbf{Practical insights.} The corollary asserts the robustness of D3M against unnecessarily flagging non-deteriorating shifts. Independent of the number of deployment samples $m$, for any given significance level $\alpha$, the FPR is only slightly worse, with the additive term decaying exponentially in the number of training samples. For many practical ML pipelines that by-and-large employ linear and forest models among others of manageable VC-dimension, having these guarantees means that a D3M audit likely won't negatively impact the continuity and quality of the model usage. 


\subsubsection{D3M algorithm in deteriorating shift}
When deteriorating shift occurs:
\begin{align}
    \exists h \in \mcH_p: \err(h; \Qf) > \err(h; \Pf)
\end{align}
However, this does not necessarily imply that $\epsilon_q > \epsilon_p$ which is ultimately the condition monitored by D3M. In the following, we break down the possible scenarios.

\textbf{Regime 1. Deteriorating shift and $\epsilon_q > \epsilon_p$.} In this case, Theorem~\ref{theorem:hcs_samples} demonstrates that the D3M algorithm detects deteriorating shift with provable high TPR. Here, the significance level $\alpha$ is understood to be $1$ minus the desired TPR. 

\begin{theorem}
    \label{theorem:hcs_samples}
    For $\beta > 0$, when deteriorating shift occurs, for a chosen significance level of $\alpha$, the TPR of D3M  (Alg.~\ref{alg:detection}) is at least ${(1 - \beta) \rbrac{1 - \mcO\rbrac{\exp\rbrac{-n \epsilon_0^2 + d}}}}$ if
    \begin{align}
        m &\in \mcO\rbrac{\rbrac{\frac{1 + \sqrt{\delta}}{\xi - 2\epsilon_f}}^2 \rbrac{d_p + \ln\frac{1}{\beta}} }
    \end{align}
    and $\epsilon_q - \epsilon_p > 0$, where $\delta = (d_p + \ln\frac{1}{\alpha})/(d_p + \ln\frac{1}{\beta})$, and $\epsilon_0 \leq \epsilon_f - \wh{\err}(h;\Pf)$.
\end{theorem}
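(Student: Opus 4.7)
The plan is to decompose the TPR into two high-probability events: (i) the base classifier $f$ truly belongs to the restricted class $\mcH_p$, i.e.\ $\err(f;\Pg) \leq \epsilon_f$, and (ii) conditional on (i), the empirical statistic $\wh{\epsilon}_q$ computed from $m$ deployment samples exceeds the $(1-\alpha)$-quantile of the calibration distribution $\Phi$. A standard VC generalization bound on the training of $f$ gives (i) with probability at least $1 - \mcO(\exp(-n\epsilon_0^2 + d))$, which furnishes the multiplicative $1 - \mcO(\exp(-n\epsilon_0^2 + d))$ prefactor in the conclusion. Conditioning on this event, the hypothesis space over which both Algorithms~\ref{alg:preprocessing} and~\ref{alg:detection} optimize is exactly $\mcH_p$ of VC-dimension $d_p$, so all subsequent uniform-convergence arguments may be localized to this smaller class.

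For event (ii), I would apply a VC uniform deviation inequality over $\mcH_p$ separately on the in-distribution calibration draws and on the deployment batch, with failure budgets $\alpha$ and $\beta$ respectively. This yields $\sup_{h \in \mcH_p}|\wh{\err}(h;\cdot) - \err(h;\cdot)| = \mcO\rbrac{\sqrt{(d_p + \ln(1/\mu))/m}}$ with probability $1 - \mu$ for each $\mu \in \{\alpha, \beta\}$. On the calibration side, since each $\phi_t \in \Phi$ is itself a maximum over $\mcH_p$, the uniform tail ensures every such $\phi_t$ lies below $\epsilon_p + C\sqrt{(d_p + \ln(1/\alpha))/m}$ with probability $\geq 1-\alpha$, so the $(1-\alpha)$-quantile of $\Phi$ inherits the same upper bound for $T$ large. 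On the deployment side the symmetric lower tail gives $\wh{\epsilon}_q \geq \epsilon_q - C\sqrt{(d_p + \ln(1/\beta))/m}$. The detection event $\wh{\epsilon}_q > \operatorname{Quantile}_{1-\alpha}(\Phi)$ is then implied by
\[
\epsilon_q - \epsilon_p \;>\; C\sqrt{\frac{d_p + \ln(1/\alpha)}{m}} + C\sqrt{\frac{d_p + \ln(1/\beta)}{m}}.
\]
Invoking the lower bound $\epsilon_q - \epsilon_p \geq \xi - 2\epsilon_f$ from Eq.~\ref{eq:bound_eq_ep}, factoring out $\sqrt{(d_p + \ln(1/\beta))/m}$, and introducing $\delta = (d_p + \ln(1/\alpha))/(d_p + \ln(1/\beta))$ recasts the sufficient condition as $m \in \mcO\rbrac{((1+\sqrt{\delta})/(\xi - 2\epsilon_f))^2 (d_p + \ln(1/\beta))}$, matching the statement.

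The main obstacle I anticipate is tying the $(1-\alpha)$-quantile of $\Phi$ cleanly to the population quantity $\epsilon_p$, since the rounds $t \in [T]$ produce correlated empirical maxima through the shared class $\mcH_p$ and pseudo-labels supplied by $f$. The cleanest route is to \emph{not} attempt a rate on $\Phi$ as a whole and instead use the uniform-convergence tail as a \emph{pointwise} upper envelope on every realization of the empirical max, so the quantile inherits it automatically. A secondary subtlety is that the calibration batches are pseudo-labeled by $f$ rather than the unknown $g$; the conditioning event (i) confines this mismatch to at most $\epsilon_f$, which is already absorbed into the $2\epsilon_f$ correction in $\xi - 2\epsilon_f$, so no additional slack is needed. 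Finally, one should keep in mind that the bound becomes vacuous when $\xi \leq 2\epsilon_f$, which is the quantitative price of proxying PDD by D-PDD through the disagreement gap and consistent with Proposition~\ref{lemma:tv-hcs}.
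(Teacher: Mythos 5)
Your proposal is correct and follows essentially the same route as the paper: the same decomposition $\mathrm{TPR} \geq (1-\beta)\,\Prob[h \in \mcH_p]$ with the VC bound on $n$ training samples supplying the $1 - \mcO(\exp(-n\epsilon_0^2 + d))$ prefactor, the same pair of uniform-convergence tails over $\mcH_p$ with budgets $\alpha$ (calibration side, upper bound near $\epsilon_p$) and $\beta$ (deployment side, lower bound near $\epsilon_q$), and the same algebra — your requirement that $\epsilon_q - \epsilon_p$ exceed the sum of the two deviation terms is exactly equivalent to the paper's choice of the optimal threshold $\mu^* = (\epsilon_p + \sqrt{\delta}\,\epsilon_q)/(1+\sqrt{\delta})$, followed by the same invocation of $\epsilon_q - \epsilon_p \geq \xi - 2\epsilon_f$. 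The only quibble is that the event furnishing the prefactor is that the \emph{selected auxiliary hypothesis} $h$ (empirically constrained) lies in $\mcH_p$, not that the base classifier $f$ does, but your subsequent localization to $\mcH_p$ makes clear you use it correctly.
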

Notably, $\xi$ in the denominator indicates that as the shift becomes more deteriorating, D3M requires fewer samples $m$ to detect, evidencing its effectiveness. Further, having a high-quality base classifier $f$ with low $\epsilon_f$ is better for detection: this is seen through in Eq.~\ref{eq:bound_eq_ep} where low $\epsilon_f$ makes the monitoring more faithful at a lower sample complexity $m$. Another remark is that $m$ depends on $d_p$ which can be much less than $d$ with $n$ being dependent on the latter. The test, thus, can work for a $m$ significantly smaller than $n$. The dependency on $n$ is due to the requirement of satisfaction of the first condition in Def.~\ref{def:hcs_definition}. In the constrained optimization problems in Algo.~\ref{alg:preprocessing}, the  constraint is satisfied but the population constraint will be satisfied either for a larger $\epsilon_0$ or for sufficiently large $n$ as seen in the above theorem.

\textbf{Regime 2. (Possible tradeoff)} -Deteriorating shift but $\epsilon_q \leq \epsilon_p$. In this case, Theorem~\ref{theorem:hcs_fail} demonstrates that either the false negative or false positive rates (FNR, FPR) should be high. The illustration in Fig.~\ref{fig:failure-remedy} exemplifies this failure mode, and  how a low $\epsilon_f$ can help alleviate it.

\begin{theorem}
    \label{theorem:hcs_fail}
    When deteriorating shift occurs and $\epsilon_q \leq \epsilon_p$, for a chosen significance level of $\alpha$, the TPR of Alg.~\ref{alg:detection} is $\mcO(\alpha)$.
\end{theorem}
If $\alpha$ is low, then by Theorem~\ref{theorem:hcs_fail} we have that the FNR is high. On the other hand, if $\alpha$ is high, then by Corollary~\ref{corollary:no_hcs} we have that the FPR can be very high, thereby trading off the significance level of D3M to reduce FNR but loosening guarantees on FPRs. 

\subsubsection{Solutions for FNR/FPR tradeoff}
This part provides a possible failure scenario illustrated in Fig.~\ref{fig:failure-remedy}. Notably, we will highlight a badly trained base classifier $f$ in the possible failure scenario (Fig.~\ref{fig:failure-remedy} (a)), if $f$ is trained with lower $\epsilon_f$, can move to the scenarios (Fig.~\ref{fig:failure-remedy} (b) and (c)) where the D3M algorithm can succeed. 

\begin{figure}[h]
\begin{center}
\centerline{\includegraphics[width=\columnwidth]{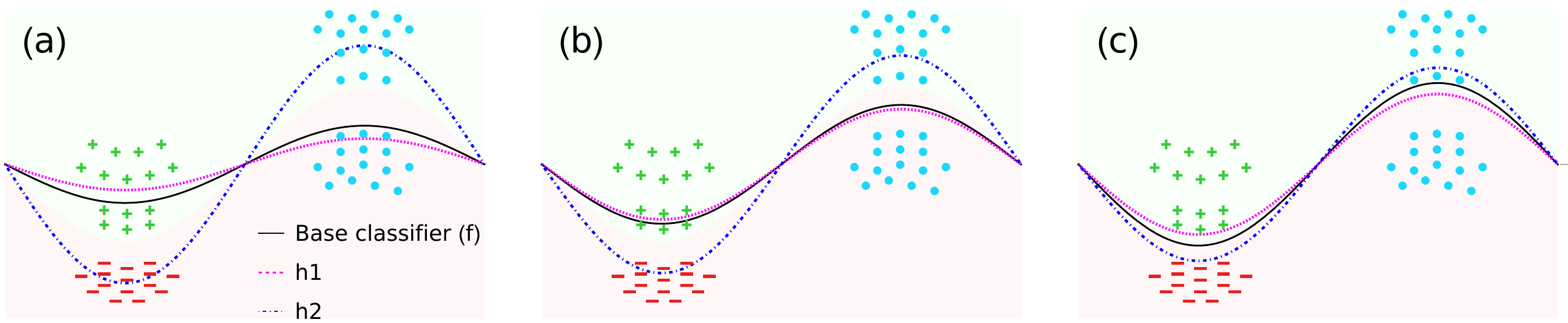}}
\caption{\textbf{Illustration of the FNR/FPR tradeoff and its remedy.} The background color indicates the fixed ground truth. \textcolor{green}{Positive} and \textcolor{red}{Negative} points are from $\Pg$ (labeled) and the \textcolor{blue}{unlabeled points} are from $\Qx$. The solid black curve represents the deployed base classifier $f$. The dotted Pink ($h_1$) and Blue ($h_2$) curves represent the envelope boundary for $\mcH_p$ i.e., all the functions passing between these two curves are contained in $\mcH_p$. (a) Failure scenario (i.e, Regime 2) where D3M algorithm fails. (b) No deteriorating shift scenario. (c) Deteriorating shift and the D3M algorithm succeeds. In summery, a decreasing on $\epsilon_f$ could move the failure scenario to the solvable scenarios (a) or (b).}
\label{fig:failure-remedy}
\end{center}
\vskip -0.2in
\end{figure}

In Fig.~\ref{fig:failure-remedy} (a), if $f$ is not well-trained, we will encounter a failure scenario. The disagreement of $h_1$ with $f$ on $\Qx$ is larger than that of $\Px$, evidencing D-PDD. However, note that $h_2$ can maximize $\epsilon_p$ more than any function (in $\mcH_p$) can maximize $\epsilon_q$, which implies $\epsilon_p > \epsilon_q$.  If $f$ is better trained in Fig.~\ref{fig:failure-remedy} (b), for all functions in $\mcH_p$ (curves between $h_1$ and $h_2$) disagreement with $f$ on $\Px$ is not less than that of $\Qx$. Hence there is no deteriorating shift and D3M algorithm could provably address this. Alternatively, if $f$ is trained well and is closest to the ground truth Fig.~\ref{fig:failure-remedy} (c) the disagreement of $h_2$ with $f$ on $\Qx$ is more than that of $\Px$. Also, note that $\epsilon_p=0$ since there is no function that can have any error on $\Pf$. However, $h_2$ can be the classifier to get non-zero $\epsilon_q$ which gives $\epsilon_q > \epsilon_p$. Hence (c) recovers the Regime 1 and is solvable. 

\textbf{Practical implications.} Training base classifiers with strong in-distribution generalization performance helps in reducing the likelihood of falling into \textbf{Regime 2}. Then, Theorem \ref{theorem:hcs_fail} guarantees that with high probability, the desired TPR of D3M can be achieved modulo an exponentially decaying factor in the number of training samples. In this way, D3M is robust in monitoring deteriorating shifts with provable TPR guarantees, satisfying the robustness desiderata for PDD monitoring.

\subsection{Proofs}
\begin{lemma}
    For any $\gamma > 0,\ \mu > {\epsilon}_q$, we have $\wh{\err}(h; \Qf) \leq \mu$ for all $h \in \mcH_p$ with probability at least $1 - \gamma$ if 
    \begin{align}
        m &\in \mcO\rbrac{\frac{d_p + \ln\frac{1}{\gamma}}{\rbrac{\mu - {\epsilon}_q}^2}}
        \label{eq:nohcs-q-mcomplexity}
    \end{align}
    \label{lemma:nohcs-q-upper}
\end{lemma}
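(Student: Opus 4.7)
The plan is to apply a uniform convergence bound over the restricted hypothesis class $\mcH_p$, viewed as predictors competing against pseudo-labels generated by $f$. Since $\epsilon_q = \max_{h \in \mcH_p}\err(h;\Qf)$ is precisely the worst-case population disagreement with $f$ on $\Qx$, every $h \in \mcH_p$ satisfies $\err(h;\Qf)\leq \epsilon_q$. Hence, if I can guarantee that the empirical disagreements are uniformly close to their population counterparts within a slack of $\mu - \epsilon_q > 0$, the decomposition $\wh{\err}(h;\Qf) \leq \err(h;\Qf) + (\mu - \epsilon_q) \leq \mu$ finishes the argument for every $h \in \mcH_p$ simultaneously.

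First, I would invoke the classical VC-dimension-based uniform convergence theorem on the family $\mcH_p$ with samples drawn from $\Qf$ (i.e., i.i.d. draws $x \sim \Qx$ labeled by $f(x)$). Because $\text{VC}(\mcH_p) = d_p$, the standard two-sided VC bound yields, with probability at least $1 - \gamma$,
\[
\sup_{h \in \mcH_p} \bigl|\wh{\err}(h;\Qf) - \err(h;\Qf)\bigr|\;\in\;\mcO\!\left(\sqrt{\frac{d_p + \ln\tfrac{1}{\gamma}}{m}}\right).
\]
Setting the right-hand side to $\mu - \epsilon_q$ and solving for $m$ gives exactly the stated sample complexity
\[
m \in \mcO\!\left(\frac{d_p + \ln\tfrac{1}{\gamma}}{(\mu - \epsilon_q)^2}\right).
\]

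Combining the two pieces, for any $h \in \mcH_p$ the chain $\wh{\err}(h;\Qf) \leq \err(h;\Qf) + (\mu - \epsilon_q) \leq \epsilon_q + (\mu - \epsilon_q) = \mu$ holds with probability at least $1 - \gamma$, establishing the lemma. The only nontrivial step is the appeal to uniform convergence on $\mcH_p$ rather than on the full class $\mcH$; this is legitimate because $\mcH_p \subseteq \mcH$ is a fixed (data-independent) subclass defined via the population error on $\Pg$, so its VC dimension is well-defined and at most $d_p$, and the samples used to form $\wh{\err}(\cdot;\Qf)$ are independent of the calibration data defining $\mcH_p$. I do not anticipate a true obstacle here; the main subtlety is simply making sure the restricted class's VC complexity (and not the full $d$) enters the bound, which is exactly what enables $m \ll n$ in downstream theorems.
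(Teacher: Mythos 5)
Your proposal is correct and follows essentially the same route as the paper: a VC uniform convergence bound over $\mcH_p$ (with complexity $d_p$) on pseudo-labeled samples from $\Qf$, combined with $\err(h;\Qf)\leq\epsilon_q$ and the choice of deviation $\mu-\epsilon_q$. The only cosmetic difference is that the paper phrases the one-sided deviation $\wh{\err}(h;\Qf)\geq\err(h;\Qf)+\epsilon$ via the label-flipped distribution $\Qflip$, whereas you invoke the two-sided supremum bound, which subsumes it at the same rate.
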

\begin{proof}
We use the generalization bound for agnostic learning in \cite{shalev2014understanding}.
\begin{align}
c e^{d_p} e^{-\epsilon^2 m} &\geq \underset{X, Y \sim \Qflip^m}{\Prob}\sbrac{\exists h \in \mcH_p: \err(h; \Qflip) - \wh{\err}(h; \Qflip) \geq \epsilon}
\\
&= \underset{X, Y \sim \Qflip^m}{\Prob}\sbrac{\exists h \in \mcH_p: \wh{\err}(h; \Qf) \geq {\err}(h; \Qf) + \epsilon}
\\
&\overset{}{\geq} \underset{X, Y \sim \Qflip^m}{\Prob}\sbrac{\exists h \in \mcH_p: \wh{\err}(h; \Qf) \geq {\epsilon}_q + \epsilon}
\end{align}
Choose $\epsilon = \mu - {\epsilon}_q \ $ for any $\mu > {\epsilon}_q$. Now,
\begin{align}
 c e^{d_p} e^{-\epsilon^2 m} &\leq \gamma 
 \\
 m &\in \mcO\rbrac{\frac{d_p + \ln\frac{1}{\gamma}}{\rbrac{\mu - {\epsilon}_q}^2}}
\end{align}

\end{proof}

\begin{lemma}
\label{lemma:n_sample_complexity}
    For any $h \in \mcH$, if the $\wh\err(h; \Pf) \leq \epsilon_f - \epsilon_0$ then with probability at least ${1 - \mcO\rbrac{\exp\rbrac{-n \epsilon_0^2 + d}}}$, $h$ will be in $\mcH_p$
\end{lemma}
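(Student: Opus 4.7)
The plan is a direct appeal to VC-dimension uniform convergence, followed by a one-line rearrangement that turns the hypothesis on the empirical quantity into membership in $\mcH_p$. Throughout, note that we interpret $\wh\err(h;\Pf)$ as an empirical quantity computed on $n$ i.i.d.\ samples drawn from the same $\Px$ used to define $\mcH_p$; since $\mcH_p := \{h\in\mcH : \err(h;\Pg)\leq \epsilon_f\}$, the only real work is controlling the generalization gap of the $\mcH$-valued loss against the reference labels used in the constraint.

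First, I would invoke the standard VC agnostic learning bound (cf.\ the growth-function/Sauer-Shelah argument used also in Lemma~\ref{lemma:nohcs-q-upper}): there exist absolute constants such that for every $\epsilon>0$,
\begin{align*}
    \underset{\mcD^n\sim \P^n}{\Prob}\!\left[\,\exists\, h\in\mcH:\ \err(h;\P) - \wh\err(h;\P) \geq \epsilon\,\right] \;\leq\; c\,e^{d}\,e^{-n\epsilon^2}.
\end{align*}
Applying this at $\epsilon = \epsilon_0$ yields, with probability at least $1 - c\exp(-n\epsilon_0^2 + d) = 1 - \mcO(\exp(-n\epsilon_0^2 + d))$, the uniform bound $\err(h;\P) \leq \wh\err(h;\P) + \epsilon_0$ for every $h\in\mcH$.

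Second, on this high-probability event, plug in the lemma's hypothesis $\wh\err(h;\P) \leq \epsilon_f - \epsilon_0$ to obtain $\err(h;\P) \leq \epsilon_f$, which is exactly the population constraint defining $\mcH_p$. Hence $h\in\mcH_p$ with the claimed probability. The same logic is what lets Algorithm~\ref{alg:preprocessing} translate its empirical constraint $\wh\err(h;\cdot)\leq \epsilon = \epsilon_f - \epsilon_0$ into the population-level definition of $\mcH_p$ used throughout the theoretical analysis.

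The only subtlety I anticipate is the reference labels used in $\wh\err(h;\Pf)$: if this is literally disagreement with $f$ (rather than with the ground-truth labels $g$ used in the definition of $\mcH_p$), then one additionally needs that $\err(f;\Pg)=\epsilon_f$ is small enough to not corrupt the conclusion, or one must interpret the empirical quantity as $\wh\err(h;\Pg)$ in line with Algorithm~\ref{alg:preprocessing}. Under the natural reading consistent with the algorithm (labels are the true $y_i$ in $\mcD^n\sim \Pg$), no triangle inequality is needed and the proof is complete; under the literal reading, an extra step using $\err(h;\Pg)\leq \err(h;\Pf)+\err(f;\Pg)$ incurs an additive $\epsilon_f$ which would require tightening the hypothesis accordingly. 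I expect this reconciliation between notation and algorithmic intent to be the main (minor) obstacle, but the VC-based concentration step is the entire substantive content of the lemma.
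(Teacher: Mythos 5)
Your proposal is correct and takes essentially the same route as the paper: both invoke the agnostic VC uniform-convergence bound at $\epsilon = \epsilon_0$ and then substitute the empirical hypothesis to land in the population constraint defining $\mcH_p$. Your observation about the $\Pf$ versus $\Pg$ mismatch in the lemma statement is well taken — the paper's own proof silently adopts the $\Pg$ reading (it runs the entire argument with $\wh{\err}(h;\Pg)$ and $\err(h;\Pg)$), exactly the ``natural reading consistent with the algorithm'' you identify, so no triangle-inequality correction is needed.
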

\begin{proof}
    We use the generalization bound for agnostic learning in \cite{shalev2014understanding}.
    \begin{align}
c e^{d} e^{-\epsilon^2 n} &\geq \underset{X, Y \sim \Pg^n}{\Prob}\sbrac{\exists h \in \mcH: \err(h; \Pg) - \wh{\err}(h; \Pg) \geq \epsilon}
\\
&= \underset{X, Y \sim \Pg^n}{\Prob}\sbrac{\exists h \in \mcH: \err(h; \Pg) \geq \wh{\err}(h; \Pg) + \epsilon}
\\
&\geq \underset{X, Y \sim \Pg^n}{\Prob}\sbrac{\exists h \in \mcH: \err(h; \Pg) \geq \epsilon_f - \epsilon_0 + \epsilon}
    \end{align}
Choose $\epsilon = \epsilon_0$ to get
\begin{align}
    \underset{X, Y \sim \Pg^n}{\Prob}\sbrac{\exists h \in \mcH: \err(h; \Pg) \geq \epsilon_f} \leq c e^{d} e^{-\epsilon^2 n}
\end{align}

\end{proof}

\begin{theorem}
    For $\gamma \leq \alpha$, when there is no deteriorating shift, for a chosen significance level of $\alpha$, the FPR of Algo.~\ref{alg:detection} is at most $\gamma + (1 - \gamma)\ \mcO\rbrac{\exp\rbrac{-n \epsilon_0^2 + d}}$ if
    \begin{align}
        m &\in \mcO\rbrac{\rbrac{\frac{1 - \sqrt{\delta}}{\epsilon_p - \epsilon_q}}^2 \rbrac{d_p + \ln\frac{1}{\gamma}}}
    \end{align}
    and $\epsilon_p - \epsilon_q > 0$, where $\delta = \frac{d_p + \ln\frac{1}{\alpha}}{d_p + \ln\frac{1}{\gamma}}$
\end{theorem}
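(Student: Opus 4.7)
The plan is to decompose the FPR according to two essentially independent failure events: (i) concentration of the deployment test statistic $\phi_Q := \oldmax_{h\in\mcH_p} \wh\err(h;\Qf)$ around its population counterpart $\epsilon_q$, and (ii) the $n$-sample identification of the constrained hypothesis class $\mcH_p$ in the Calibrate stage. Let $\mathcal{E}_Q$ denote the event that $\phi_Q \leq \mu^\star$ for a carefully chosen intermediate threshold $\mu^\star$. Conditional on $\mathcal{E}_Q$, a false positive can only occur if $\mu^\star$ exceeds the empirical $(1-\alpha)$-quantile of $\Phi$, which under the bootstrapping procedure of Algorithm~\ref{alg:preprocessing} requires some $h \notin \mcH_p$ to sneak into the constrained search. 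Lemma~\ref{lemma:n_sample_complexity} rules this out with probability at least $1-\mcO(\exp(-n\epsilon_0^2 + d))$ when the empirical tolerance is set to $\epsilon_0 \leq \epsilon_f - \wh\err(h;\Pf)$. Combining via conditioning then yields the claimed multiplicative bound $\gamma + (1-\gamma)\mcO(\exp(-n\epsilon_0^2 + d))$.

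The heart of the proof is picking $\mu^\star$ so that (a) Lemma~\ref{lemma:nohcs-q-upper} applies and gives $\P[\neg \mathcal{E}_Q] \leq \gamma$, and (b) $\mu^\star \leq q_{1-\alpha}(\Phi)$ holds with high probability. I would set $\mu^\star = \epsilon_p - \epsilon_\alpha$, where $\epsilon_\alpha = \mcO(\sqrt{(d_p + \ln(1/\alpha))/m})$ is the $\alpha$-level concentration radius of $\phi_P$ around its population mean $\epsilon_p$, derived from the same agnostic-learning uniform convergence bound used in Lemma~\ref{lemma:nohcs-q-upper} but applied to samples drawn from $\Pf$ during calibration. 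Provided $T$ is large enough that the empirical quantile tracks its population counterpart, $q_{1-\alpha}(\Phi) \geq \epsilon_p - \epsilon_\alpha = \mu^\star$ follows from standard quantile--tail inversions; feasibility $\mu^\star > \epsilon_q$ is secured by the assumption $\epsilon_p - \epsilon_q > 0$ together with $m$ being large enough to shrink $\epsilon_\alpha$ below the gap.

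Substituting $\mu^\star - \epsilon_q = (\epsilon_p - \epsilon_q) - \epsilon_\alpha$ into the sample-complexity requirement $m \in \mcO((d_p + \ln(1/\gamma))/(\mu^\star - \epsilon_q)^2)$ from Lemma~\ref{lemma:nohcs-q-upper} produces an implicit inequality in $\sqrt{m}$. Isolating using the identity $\sqrt{d_p + \ln(1/\alpha)} = \sqrt{\delta}\,\sqrt{d_p + \ln(1/\gamma)}$ yields the stated closed-form bound. The main obstacle will be the careful bookkeeping through this implicit resolution: ensuring that $\epsilon_\alpha$ (which itself depends on $m$) is handled consistently in the algebra, verifying that the signs of the $\sqrt{\delta}$ term combine so that $\gamma = \alpha$ (i.e.\ $\delta = 1$) degenerates the condition to $m \in \mcO(0)$ and thus recovers the trivial universal guarantee of Corollary~\ref{corollary:no_hcs}, and confirming that the $T$-sample sampling error in the empirical quantile $q_{1-\alpha}(\Phi)$ can be absorbed without inflating the constants in a way that breaks the claimed scaling in $m$.
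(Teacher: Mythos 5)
Your high-level template matches the paper's: decompose the FPR into a $\gamma$-probability concentration failure plus an $\mcO\rbrac{\exp\rbrac{-n\epsilon_0^2+d}}$ hypothesis-class failure (via Lemma~\ref{lemma:n_sample_complexity}), pick an intermediate threshold, apply the uniform-convergence lemma once for $\P$ at level $\alpha$ and once for $\Q$ at level $\gamma$, and solve the resulting implicit inequality for $m$. However, there are two concrete problems. First, your threshold sits on the wrong side of $\epsilon_p$. You set $\mu^\star = \epsilon_p - \epsilon_\alpha$ and ask for $q_{1-\alpha}(\Phi) \geq \mu^\star$; pushing $\mu^\star - \epsilon_q = (\epsilon_p-\epsilon_q) - \epsilon_\alpha$ through Lemma~\ref{lemma:nohcs-q-upper} gives $\sqrt{(d_p+\ln(1/\gamma))/m} + \sqrt{(d_p+\ln(1/\alpha))/m} \leq \epsilon_p - \epsilon_q$, i.e.\ $m \in \mcO\bigl(((1+\sqrt{\delta})/(\epsilon_p-\epsilon_q))^2(d_p+\ln\frac{1}{\gamma})\bigr)$ --- a $(1+\sqrt{\delta})^2$ factor, not the claimed $(1-\sqrt{\delta})^2$, and your own sanity check fails: at $\gamma=\alpha$ the condition does not degenerate to $m\in\mcO(0)$, so you do not recover Corollary~\ref{corollary:no_hcs}. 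The paper instead takes $\mu \in \Omega\bigl(\epsilon_p + \sqrt{(d_p+\ln(1/\alpha))/m}\bigr)$, i.e.\ the quantile lies \emph{above} $\epsilon_p$ by the $\alpha$-level radius; since $\epsilon_p > \epsilon_q$ this $\mu$ automatically clears $\epsilon_q$, only the \emph{difference} of the two radii must fit inside the gap, and the $(1-\sqrt{\delta})$ numerator (with $\delta\le 1$ because $\gamma\le\alpha$) falls out. The additive-radii situation you derive is the one that genuinely arises in the TPR bound of Theorem~\ref{theorem:hcs_samples}, where the threshold must be wedged strictly between $\epsilon_p$ and $\epsilon_q$.

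Second, you misattribute the exponential term. You claim that, conditional on $\phi_Q \leq \mu^\star$, a false positive requires some $h\notin\mcH_p$ to ``sneak into'' the Calibrate-stage search over $\mcH_p$. That event is actually harmless for the FPR: a rogue $h$ with inflated disagreement during calibration can only push the entries of $\Phi$, and hence its $(1-\alpha)$-quantile, \emph{upward}, making the test harder to trigger. The paper explicitly assumes such $h$ are filtered out during Calibrate (which is not time-sensitive) and places the $\Prob[h\notin\mcH_p]$ event at the \emph{Deploy} stage: the deployment-time maximizer is selected under the empirical constraint $\wh{\err}(h;\P_g)\le\epsilon_f-\epsilon_0$ only, so it may fail to lie in the population-level $\mcH_p$, and Lemma~\ref{lemma:n_sample_complexity} bounds exactly this probability. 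The decomposition $\mathrm{FPR} \le \Prob[h\notin\mcH_p] + \Prob[\,\cdot\mid h\in\mcH_p]\,\Prob[h\in\mcH_p] = \gamma + (1-\gamma)\Prob[h\notin\mcH_p]$ is taken over the deployment-time $h$; as written, your conditioning argument does not produce a mechanism by which the $n$-dependent term enters the bound.
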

\begin{proof}
    We show that in the case of no deteriorating shift (which implies $\epsilon_p \geq \epsilon_q$) the false positive rate cannot be more than $\alpha$ and also having more samples from $\Qx$ will decrease the false positive rate if $\epsilon_p > \epsilon_q$.

    We assume that during pre-training phase, while populating $\Phi$ we discard disagreement from $h \notin \mcH_p$ i.e., not satisfying the constraint $\err(h; \Pf) \leq \epsilon_f$. We cannot do the same during the detection phase since the detection phase is time-sensitive. Due to this, we have to account for $h \notin \mcH_p$ in the FPR calculation.

    Now, FPR can be written and bounded as follows. Let $\mu$ be the disagreement at $1 - \alpha$ percentile of $\Phi$
    \begin{align}
        \text{FPR} &= \Prob\sbrac{\wh\err(h; \Qf) \geq \mu}
        \\
        &= \Prob\sbrac{\set{\set{h \notin \mcH_p} \land \set{\wh\err(h; \Qf) \geq \mu}} \lor \set{\set{h \in \mcH_p} \land \set{\wh\err(h; \Qf) \geq \mu}}}
        \\
        &\leq \Prob\sbrac{\set{h \notin \mcH_p} \lor \set{\set{h \in \mcH_p} \land \set{\wh\err(h; \Qf) \geq \mu}}}
        \\
        &\leq \Prob\sbrac{h \notin \mcH_p} + \Prob\sbrac{\set{\wh\err(h; \Qf) \geq \mu} | \set{h \in \mcH_p}} \Prob\sbrac{h \in \mcH_p}
        \\
        &= \gamma + (1 - \gamma) \Prob\sbrac{h \notin \mcH_p}
        \\
        \text{FPR} &\leq \gamma + (1 - \gamma)\ \mcO\rbrac{\exp\rbrac{-n \epsilon_0^2 + d}}
    \end{align}
    where last equation comes from \ref{lemma:n_sample_complexity} and $\gamma := \Prob\sbrac{\set{\wh\err(h; \Qf) \geq \mu} | \set{h \in \mcH_p}}$
    
    Now, we derive sample complexity $m$ in terms of $\gamma$. Using \ref{lemma:nohcs-q-upper} on $\P$ with $1 - \alpha$ probability we get
    \begin{align}
        m \in \mcO\rbrac{\frac{d_p + \ln\frac{1}{\alpha}}{\rbrac{\mu - \epsilon_p}^2}}
    \end{align}
    We use $\mu \in \Omega\rbrac{\epsilon_p + \sqrt{\frac{d_p + \ln\frac{1}{\alpha}}{m}}}$ from above while using \ref{lemma:nohcs-q-upper} on $\Q$ with $1 - \gamma$ probability to get
    \begin{align}
        m \in \mcO\rbrac{\rbrac{\frac{1 - \sqrt{\frac{d_p + \ln\frac{1}{\alpha}}{d_p + \ln\frac{1}{\gamma}}}}{(\epsilon_p - \epsilon_q)}}^2\rbrac{d_p + \ln\frac{1}{\gamma}}} \quad \text{for } \gamma < \alpha
    \end{align}
    Note that since the chosen $\mu$ was greater than $\epsilon_p$ and we are dealing with the case $\epsilon_p > \epsilon_q$, we get that the chosen $\mu$ is greater than $\epsilon_q$. Thus the requirement of $\mu$ is satisfied for \ref{lemma:nohcs-q-upper} while using for $\Q$.
    
\end{proof}

This theorem shows that when there are non deteriorating shifts (specifically $\epsilon_p > \epsilon_q$) FPR may be even less than $\alpha$, given $m$ and $n$ is sufficiently large. The more samples from $\Qx$ we have the lesser the FPR in these cases. For any general case, by setting $\gamma = \alpha$ (i.e., $\delta = 1$) in the above theorem, we obtain the following:
\begin{corollary}
    For a chosen significance level $\alpha$, the FPR of the D3M algorithm is no more than ${\alpha + (1 - \alpha)\ \mcO\rbrac{\exp\rbrac{-n \epsilon_0^2 + d}}}$.
\end{corollary}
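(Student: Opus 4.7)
The plan is to read this corollary as the degenerate $\gamma = \alpha$ specialization of the theorem immediately preceding it, which established the finer FPR bound $\gamma + (1-\gamma)\,\mcO(\exp(-n\epsilon_0^2 + d))$ under a sample-size condition scaling like $(1-\sqrt{\delta})^{-2}(\epsilon_p - \epsilon_q)^{-2}$ with $\delta = (d_p + \ln(1/\alpha))/(d_p + \ln(1/\gamma))$. Forcing $\gamma = \alpha$ gives $\delta = 1$, so $(1 - \sqrt{\delta})^2 = 0$; the sample-complexity requirement on $m$ collapses to the trivial ``$m \in \mcO(0)$'' and the side assumption $\epsilon_p - \epsilon_q > 0$ can be dropped. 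This specialization yields a universal FPR bound that is agnostic to $m$, at the cost of the $\sqrt\delta$ slack that the theorem otherwise buys via a concentration argument on the quantile $\mu$.

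For self-containment I would briefly reuse the FPR decomposition from the proof of that theorem. Let $h$ denote the auxiliary classifier returned at deployment and $\mu$ the $(1-\alpha)$-empirical quantile of $\Phi$. Conditioning on whether $h \in \mcH_p$,
\begin{align*}
\Pr\bigl[\wh{\err}(h;\Qf) \geq \mu\bigr]
&\leq \Pr[h \notin \mcH_p] + \Pr\bigl[\wh{\err}(h;\Qf) \geq \mu \,\big|\, h \in \mcH_p\bigr]\,\Pr[h \in \mcH_p] \\
&\leq \Pr[h \notin \mcH_p] + \alpha\,\bigl(1 - \Pr[h \notin \mcH_p]\bigr) \\
&= \alpha + (1-\alpha)\,\Pr[h \notin \mcH_p].
\end{align*}
The key step is the inequality $\Pr[\wh{\err}(h;\Qf) \geq \mu \mid h \in \mcH_p] \leq \alpha$, which follows directly from the definition of $\mu$ as an empirical $(1-\alpha)$-quantile of disagreement rates drawn from classifiers in $\mcH_p$ under $\P$: in the non-deteriorating regime, $\epsilon_q \leq \epsilon_p$ implies the deployment-time disagreement distribution is stochastically no larger than the calibration distribution, so charging the full $\alpha$ to this quantile tail needs no additional concentration argument.

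The remaining term $\Pr[h \notin \mcH_p]$ is controlled by Lemma A.2: the standard agnostic VC-generalization bound applied to $h$ with empirical error $\wh{\err}(h;\Pg) \leq \epsilon_f - \epsilon_0$ gives $\Pr[h \notin \mcH_p] \leq \mcO(\exp(-n\epsilon_0^2 + d))$. Substituting into the decomposition yields the stated bound $\alpha + (1-\alpha)\,\mcO(\exp(-n\epsilon_0^2 + d))$.

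The only point that warrants care --- not a real obstacle --- is articulating why we do not need the concentration-based tightening of the theorem to justify the $\leq \alpha$ conditional bound. The resolution is simply that we are not trying to drive the conditional tail strictly below $\alpha$; the empirical-quantile construction together with non-deteriorating stochastic dominance already yields the $\alpha$ bound ``for free'', and losing this extra slack is exactly what allows us to dispense with both the $\epsilon_p > \epsilon_q$ hypothesis and any requirement on $m$.
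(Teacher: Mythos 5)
Your proposal follows the paper's own route exactly: the corollary is obtained by specializing Theorem~\ref{theorem:no_hcs_samples} at $\gamma = \alpha$ (so $\delta = 1$ and the requirement on $m$ becomes vacuous), and the supporting decomposition you write, $\mathrm{FPR} \leq \alpha + (1-\alpha)\Pr[h \notin \mcH_p]$ followed by Lemma~\ref{lemma:n_sample_complexity} to control $\Pr[h \notin \mcH_p]$, is precisely the argument in the paper's proof of that theorem. The only difference is cosmetic: you spell out the quantile-construction reason why the conditional tail is charged at $\alpha$ without any sample-size condition, a point the paper leaves implicit when it remarks that the bound is independent of $m$ and of the shift.
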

Note that this statement is independent of $m$ and the distribution shift. If $n$ is sufficiently large, the exponential term is small. This is often the case when the base classifier error $\epsilon_f$ is small, which is an indicator that a large number of samples ($n$) were available from $\Pg$. Ignoring non deteriorating shift (and $\Qx \neq \Px$) cases while calculating $\Phi$ in Algo.~\ref{alg:detection} does not adversely affect the FPR of the test.

\begin{lemma}
    For any $\beta > 0, \mu < \epsilon_q$, there exists an $h \in \mcH_p$ such that $\wh\err(h; \Qf) \geq \mu$ with probability at least $1 - \beta$ if 
    \begin{align}
        m &\geq \mcO\rbrac{\frac{d_q + \ln\frac{1}{\beta}}{(\epsilon_q - \mu)^2}}
        \label{eq:hcs-q-mcomplexity}
    \end{align}
    \label{lemma:hcs-q-lower}
\end{lemma}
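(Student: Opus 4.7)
The plan is to prove this lemma as the lower-tail analogue of Lemma \ref{lemma:nohcs-q-upper}: both are instances of the same uniform-convergence bound over $\mcH_p$, just applied to different sides of the generalization gap. The key difference is that here I only need to exhibit a single witness hypothesis that achieves large empirical disagreement, rather than bounding empirical disagreement uniformly.

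First, I would extract a witness from the definition of $\epsilon_q$. By Def.~\ref{def:epsilon_p_q}, $\epsilon_q = \max_{h \in \mcH_p} \err(h; \Qf)$, so (up to an arbitrarily small slack if the maximum is not attained) there exists $h^{\star} \in \mcH_p$ with $\err(h^{\star}; \Qf) = \epsilon_q$. This is the only candidate I would need to analyze; showing existence of a good empirical disagreer then reduces to lower-bounding $\wh{\err}(h^{\star}; \Qf)$.

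Next, I would invoke the agnostic-learning generalization bound used in the proofs of Lemmas \ref{lemma:nohcs-q-upper} and \ref{lemma:n_sample_complexity}, but in the opposite direction. Specifically, with probability at least $1 - c e^{d_p} e^{-\epsilon^2 m}$ over the sample from $\Qflip^m$, every $h \in \mcH_p$ satisfies $\wh{\err}(h; \Qf) \geq \err(h; \Qf) - \epsilon$. Choosing $\epsilon = \epsilon_q - \mu$, which is positive by the hypothesis $\mu < \epsilon_q$, and applying this inequality to the witness $h^{\star}$ gives $\wh{\err}(h^{\star}; \Qf) \geq \epsilon_q - (\epsilon_q - \mu) = \mu$. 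Forcing the failure probability to be at most $\beta$ then yields $m \in \mcO((d_p + \ln(1/\beta))/(\epsilon_q - \mu)^2)$, matching the stated complexity (the $d_q$ in the statement appears to be a typo for $d_p$, since the witness lies in $\mcH_p$).

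The main obstacle is essentially cosmetic: one must verify that the one-sided uniform convergence inequality in the direction $\wh{\err}(h; \Qf) \geq \err(h; \Qf) - \epsilon$ is available for agnostic VC classes with the same rate as the upper-tail version, which it is by the symmetric argument underlying the cited bound. A minor subtlety is the case where the supremum defining $\epsilon_q$ is not attained: in that case I would pick $h^{\star}$ with $\err(h^{\star}; \Qf) \geq \epsilon_q - \delta$ for arbitrarily small $\delta > 0$ and absorb the $\delta$ into the slack, which does not change the asymptotic sample complexity.
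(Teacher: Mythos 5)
Your proposal is correct and takes essentially the same route as the paper: the paper also applies the one-sided (lower-tail) uniform deviation bound over $\mcH_p$ (phrased via the label-flipped distribution $\Qflip$) and evaluates it at the population maximizer achieving $\epsilon_q$, with the same choice $\epsilon = \epsilon_q - \mu$. Your observation that the $d_q$ in the statement should be $d_p$ is also consistent with the paper's own proof, which uses $d_p$ in the deviation bound and never defines $d_q$.
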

\begin{proof}
    We use the generalization bound for agnostic learning case \cite{shalev2014understanding}.
\begin{align}
c e^{d_p} e^{-\epsilon^2 m} &\geq \underset{X, Y \sim \Qflip^m}{\Prob}\sbrac{\exists h \in \mcH_p: \wh\err(h; \Qflip) - {\err}(h; \Qflip) \geq \epsilon}
\\
&= \underset{X, Y \sim \Qflip^m}{\Prob}\sbrac{\exists h \in \mcH_p: \wh\err(h; \Qf) \leq {\err}(h; \Qf) - \epsilon}
\\
&\overset{(a)}{=} \underset{X, Y \sim \Qflip^m}{\Prob}\sbrac{\forall h \in \mcH_p: \wh\err(h; \Qf) \leq \epsilon_q - \epsilon}
\end{align}
where (a) follows from Def.~\ref{def:epsilon_p_q} \\
Choose $\epsilon = \epsilon_q - \mu$ for any $\mu < \epsilon_q$
\begin{align}
 c e^{d_q} e^{-\epsilon^2 m} &\leq \beta
 \\
 m &\geq \mcO\rbrac{\frac{d_q + \ln\frac{1}{\beta}}{(\epsilon_q - \mu)^2}}
\end{align}
\end{proof}

\begin{proposition}[D-PDD and $\TV$ distance]
    The relations between $\xi$ (in Def.~\ref{def:ksi-score}),  $\eta$ (in Def.~\ref{def:eta}), and $\epsilon_p$, $\epsilon_q$ (in Def.~\ref{def:epsilon_f} and ~\ref{def:epsilon_p_q}) are as follows:
    \begin{align}
        \xi &= \TV - 2\eta\ \geq\ 0 
        \\
        \xi\ &\geq\ \epsilon_q - \epsilon_p\ \ \geq\ \xi - 2\epsilon_f
    \end{align}
\end{proposition}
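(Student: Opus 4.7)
The plan is to establish each identity and inequality separately, starting with the identity $\xi = \TV - 2\eta$ which carries the main conceptual weight. For any classifier $h$, I would compute $\err(h; \U) = \tfrac{1}{2}\err(h; \Pf) + \tfrac{1}{2}\err(h; \Qflip)$. In the binary setting, $\err(h; \Qflip) = \Q[h \neq 1 - f] = 1 - \err(h; \Qf)$, so
\[
\err(h; \U) \;=\; \tfrac{1}{2} + \tfrac{1}{2}\bigl[\err(h;\Pf) - \err(h;\Qf)\bigr].
\]
Minimizing over $h \in \mcH_p$ and using the definition of $\xi$ then gives $\min_{h\in\mcH_p} \err(h;\U) = \tfrac{1}{2}(1 - \xi)$.

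Next I would compute $\err(f_{\text{bayes}}; \U)$. The marginal of $\U$ on $x$ is $\tfrac{1}{2}(\Px + \Qx)$, and the conditional law of $y$ at a point $x$ under $\U$ places mass $\Px(x)/(\Px(x)+\Qx(x))$ on label $f(x)$ and the complementary mass on $1-f(x)$. The Bayes classifier picks the majority, so its pointwise error is $\min(\Px(x),\Qx(x))/(\Px(x)+\Qx(x))$, and integrating against $\U_x$ yields
\[
\err(f_{\text{bayes}}; \U) \;=\; \tfrac{1}{2}\sum_x \min(\Px(x), \Qx(x)) \;=\; \tfrac{1}{2}(1 - \TV).
\]
Subtracting gives $\eta = \tfrac{1}{2}(\TV - \xi)$, i.e.\ $\xi = \TV - 2\eta$. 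Non-negativity of $\xi$ follows instantly from $f \in \mcH_p$ (trivially, since $\err(f;\Pg) = \epsilon_f$) combined with $\err(f; \Qf) - \err(f; \Pf) = 0$, so the max defining $\xi$ is at least zero.

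For the two-sided bound $\xi \geq \epsilon_q - \epsilon_p \geq \xi - 2\epsilon_f$, I would handle each inequality via a specific maximizer. For the left inequality, take $h_q \in \mcH_p$ achieving $\err(h_q;\Qf) = \epsilon_q$; since $\err(h_q; \Pf) \leq \epsilon_p$, we get $\xi \geq \err(h_q;\Qf) - \err(h_q;\Pf) \geq \epsilon_q - \epsilon_p$. For the right inequality, let $h^\star \in \mcH_p$ be the $\xi$-maximizer, so $\epsilon_q \geq \err(h^\star;\Qf) \geq \xi + \err(h^\star;\Pf) \geq \xi$. Separately, a triangle-inequality argument (identical in flavor to the one already used in the proof of Lemma~\ref{lem:equivalence}) gives $\err(h;\Pf) = \P(h\neq f) \leq \P(h \neq g) + \P(g \neq f) \leq 2\epsilon_f$ for every $h \in \mcH_p$, so $\epsilon_p \leq 2\epsilon_f$. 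Chaining yields $\epsilon_q - \epsilon_p \geq \xi - 2\epsilon_f$.

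The main obstacle is the first identity: computing the Bayes-optimal error on the mixture $\U$ cleanly and recognizing the resulting expression as $\tfrac{1}{2}(1-\TV)$. The trick of rewriting $\err(h;\Qflip) = 1 - \err(h;\Qf)$ is what collapses the mixture into a form that exposes both $\xi$ and $\TV$ on equal footing; once this is in hand the rest is algebraic bookkeeping, and the remaining inequalities only require picking the correct witness classifier and invoking the triangle inequality on indicator-based errors.
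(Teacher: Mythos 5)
Your proof is correct and follows essentially the same route as the paper's: both hinge on the flip identity $\err(h;\Qflip)=1-\err(h;\Qf)$ to express $\min_{h\in\mcH_p}\err(h;\U)$ in terms of $\xi$, both compute $\err(f_{\text{bayes}};\U)=\tfrac{1}{2}(1-\TV)$ (you via the pointwise posterior $\min(\Px,\Qx)/(\Px+\Qx)$, the paper via the sets where $\Qx\lessgtr\Px$ --- the same calculation), and both obtain $\epsilon_p\leq 2\epsilon_f$ from the triangle inequality on disagreement errors. Your witness-based derivations of $\xi\geq\epsilon_q-\epsilon_p$ and $\epsilon_q\geq\xi$ are slightly more direct than the paper's (which routes $\epsilon_q\geq\xi$ back through the $\TV$ identity), but the substance is identical.
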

\begin{proof}
    Recall the definition of $\U$ from \ref{def:joint_dist_u}. We first derive the Bayes error in terms of $\TV$ distance. Let
    \begin{align}
        A &= \set{x \in \mcX\ |\ \Qx(x) \leq \Px(x)} \\
        A' &= \set{x \in \mcX\ |\ \Qx(x) > \Px(x)}
    \end{align}
    The $\TV$ distance is equal to half of the $L_1$ distance. Note that $\Px(A) + \Px(A') = 1$ and similarly for $\Qx$. \footnote{With some abuse of notation, we use the same notation for both pdf and probability measure.}
    \begin{align}
        \TV(\Px, \Qx) &= \frac{1}{2} \rbrac{\Px(A) - \Qx(A) + \Qx(A') - \Px(A')}
        \\
        &=  1 - \Px(A') - \Qx(A)
    \end{align}
    Now, we use the definition of $\U$ and the above $\TV$ relation to get the following
    \begin{align}
        \err\rbrac{f_{\text{bayes}}; \U} &= \frac{1}{2} \rbrac{\err\rbrac{f_{\text{bayes}}; \Pf} + \err\rbrac{f_{\text{bayes}}; \Qflip}} = \frac{1}{2} \rbrac{\Qx(A) + \Px(A')}
        \\
        &= \frac{1}{2} \rbrac{1 - \TV(\Px, \Qx)} 
    \end{align}
    Next, with the above result and $\eta$ in Eq.~\ref{def:eta} we derive Eq.~\ref{eq:xi_tv_eta}
    \begin{align}
        \eta + \err(f_{\text{bayes}}; \U) &= \min{h \in \mcH_p} \err(h; \U) 
        = \frac{1}{2} \min{h \in \mcH_p} \rbrac{\err(h; \Pf) + \err(h; \Qflip)}
        \\
        2 \eta + 1 - \TV &= \min{h \in \mcH_p} \rbrac{\err(h; \Pf) + \err(h; \Qflip)} \geq \min{h \in \mcH_p} \err(h; \Qflip)
        \label{eq:min_eq_upper_bound}
        \\
        2 \eta + 1 - \TV &=  \min{h \in \mcH_p} \rbrac{\err(h; \Pf) - \err(h; \Qf)} + 1
        \\
        2 \eta - \TV &=  \min{h \in \mcH_p} -\rbrac{\err(h; \Qf) - \err(h; \Pf)}
        \\
        \TV - 2 \eta &= \max{h \in \mcH_p} \rbrac{\err(h; \Qf) - \err(h; \Pf)} = \xi
    \end{align}
    For Eq.~\ref{eq:bound_eq_ep}, we use Eq.~\ref{eq:min_eq_upper_bound} and the above result to get the following
    \begin{align}
        {\epsilon}_q &= \max{h \in \mcH_p} \err(h; \Qf) = 1 - \min{h \in \mcH_p} \err(h; \Qflip) \geq \TV - 2 \eta = \xi
    \end{align}
    We can write an inequality for errors similar to triangle inequality as follows
    \begin{align}
        \err(h; \Pf) &\leq \err(h; \Pg) + \err(g; \Pf)
        \\
        &= \err(h; \Pg) + \err(f; \Pg) = \err(h; \Pg) + \epsilon_f
        \\
        \epsilon_p = \max{h \in \mcH_p}\err(h; \Pf) &\leq \max{h \in \mcH_p}\err(h; \Pg) + \epsilon_f = 2\epsilon_f
    \end{align}
    The last equality follows from the definition of $\mcH_p$. Thus we get
    \begin{align}
        \epsilon_q - \epsilon_p \geq \xi - 2\epsilon_f
    \end{align}
    By definition it follows that $\xi \geq \epsilon_q - \epsilon_p$
\end{proof}

\begin{proposition}
For $\beta > 0$, when the deteriorating shift occurs, for a chosen significance level of $\alpha$, the TPR of Algo.~\ref{alg:detection} is at least ${(1 - \beta) \rbrac{1 - \mcO\rbrac{\exp\rbrac{-n \epsilon_0^2 + d}}}}$ if
    \begin{align}
        m &\in \mcO\rbrac{\rbrac{\frac{1 + \sqrt{\delta}}{\xi - 2\epsilon_f}}^2 \rbrac{d_p + \ln\frac{1}{\beta}} }
    \end{align}
    and $\epsilon_q - \epsilon_p > 0$, where $\delta = \frac{d_p + \ln\frac{1}{\alpha}}{d_p + \ln\frac{1}{\beta}}$   
\end{proposition}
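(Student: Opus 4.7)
The plan is to mirror the structure of the proof of Theorem~\ref{theorem:no_hcs_samples}, but now exploiting the deteriorating-shift assumption $\epsilon_q > \epsilon_p$ so that the empirical deployment disagreement rate \emph{exceeds} (rather than stays under) the $(1-\alpha)$-quantile of the calibration distribution $\Phi$. Concretely, let $\mu$ be the $(1-\alpha)$-quantile of $\Phi$ and let $h^{\star} \in \argmax_{h \in \mcH_p} \wh{\err}(h;\Qf)$ denote the auxiliary function returned by Algorithm~\ref{alg:detection}. I would decompose the TPR as the probability of the event $\{\wh{\err}(h^{\star};\Qf) > \mu\}$, intersected with the event that $h^{\star}$ genuinely belongs to the population-level set $\mcH_p$ (rather than only satisfying the empirical constraint). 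The $n$-dependent exponentially decaying factor in the statement will come directly from bounding the probability of failure of this membership via Lemma~\ref{lemma:n_sample_complexity}.

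First, I would upper-bound $\mu$ by applying Lemma~\ref{lemma:nohcs-q-upper} to $\P$: with probability at least $1 - \alpha$ over the calibration draws,
\[
\mu \ \leq\ \epsilon_p + \mcO\!\left(\sqrt{\tfrac{d_p + \ln(1/\alpha)}{m}}\right).
\]
Second, I would lower-bound $\wh{\err}(h^{\star};\Qf)$ by applying Lemma~\ref{lemma:hcs-q-lower} to $\Q$: with probability at least $1 - \beta$ over the deployment sample, there exists (and hence the argmax attains) some $h \in \mcH_p$ with
\[
\wh{\err}(h^{\star};\Qf)\ \geq\ \epsilon_q - \mcO\!\left(\sqrt{\tfrac{d_p + \ln(1/\beta)}{m}}\right).
\]

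Third, I would require the lower bound to strictly exceed the upper bound, which under a union bound succeeds with probability at least $1 - \alpha - \beta$ (or, in the form used by the paper, with a factor $1 - \beta$ after absorbing $\alpha$ into the calibration guarantee). Rearranging
\[
\epsilon_q - \epsilon_p\ \gtrsim\ \sqrt{\tfrac{d_p + \ln(1/\alpha)}{m}} + \sqrt{\tfrac{d_p + \ln(1/\beta)}{m}}
\]
and factoring out $\sqrt{(d_p + \ln(1/\beta))/m}$ yields $m \gtrsim \bigl((1+\sqrt{\delta})/(\epsilon_q-\epsilon_p)\bigr)^{2}\bigl(d_p + \ln(1/\beta)\bigr)$ with $\delta$ as defined. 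Applying Proposition~\ref{lemma:tv-hcs}, specifically the inequality $\epsilon_q - \epsilon_p \geq \xi - 2\epsilon_f$, gives a sufficient sample complexity in terms of the D-PDD score $\xi$, matching the claimed bound. Finally, multiplying by the $\mcH_p$-membership guarantee from Lemma~\ref{lemma:n_sample_complexity} contributes the factor $1 - \mcO(\exp(-n\epsilon_0^2 + d))$, yielding the final $(1-\beta)(1 - \mcO(\exp(-n\epsilon_0^2 + d)))$ TPR.

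The main obstacle is the careful interaction between the data-dependent argmax and the population constraint $h \in \mcH_p$: the algorithm only enforces $\wh{\err}(h;\Pf) \leq \epsilon$ on the training sample, so I must argue that the same $h^{\star}$ used to attain the high empirical disagreement on $\Q$ also satisfies the population constraint with high probability, which is precisely what the $n$-dependent exponential factor accounts for. A secondary subtlety is the tightness of the translation $\epsilon_q - \epsilon_p \geq \xi - 2\epsilon_f$: the bound becomes vacuous when $\epsilon_f$ is comparable to $\xi/2$, which is exactly the failure regime (Regime~2, Theorem~\ref{theorem:hcs_fail}) and explains why the result is stated under the auxiliary assumption $\epsilon_q - \epsilon_p > 0$.
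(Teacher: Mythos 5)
Your proposal follows essentially the same route as the paper's proof: the same TPR decomposition over the event $h \in \mcH_p$ (with Lemma~\ref{lemma:n_sample_complexity} supplying the $n$-dependent exponential factor), the same pair of concentration bounds (Lemma~\ref{lemma:nohcs-q-upper} on $\P$ at level $\alpha$ and Lemma~\ref{lemma:hcs-q-lower} on $\Q$ at level $\beta$), and the same final translation via $\epsilon_q - \epsilon_p \geq \xi - 2\epsilon_f$. Your step of requiring the $\Q$-side lower bound to exceed the $\P$-side upper bound and rearranging is algebraically identical to the paper's choice of the optimal threshold $\mu^{*} = (\epsilon_p + \sqrt{\delta}\,\epsilon_q)/(1+\sqrt{\delta})$ that balances the two sample-complexity requirements, so the argument is correct and matches the paper's.
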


\begin{proof}
    Similar to the proof of Theorem.~\ref{theorem:no_hcs_samples}, we derive the statistical power (TPR) of the test as follows. Let $\mu$ be the disagreement at $1 - \alpha$ percentile of $\Phi$
    \begin{align}
        \text{TPR} &= 1 - \Prob\sbrac{\wh\err(h; \Qf) \leq \mu}
        \\
        &= 1 - \Prob\sbrac{\set{\set{h \notin \mcH_p} \land \set{\wh\err(h; \Qf) \leq \mu}} \lor \set{\set{h \in \mcH_p} \land \set{\wh\err(h; \Qf) \leq \mu}}}
        \\
        &\geq 1 - \Prob\sbrac{\set{h \notin \mcH_p} \lor \set{\set{h \in \mcH_p} \land \set{\wh\err(h; \Qf) \leq \mu}}}
        \\
        &\geq 1 - \Prob\sbrac{h \notin \mcH_p} - \Prob\sbrac{\set{\wh\err(h; \Qf) \leq \mu} | \set{h \in \mcH_p}} \Prob\sbrac{h \in \mcH_p}
        \\
        &= (1 - \beta) \Prob\sbrac{h \in \mcH_p}
        \\
        \text{TPR} &\in {(1 - \beta) \rbrac{1 - \mcO\rbrac{\exp\rbrac{-n \epsilon_0^2 + d}}}}
    \end{align}
    where last equation comes from \ref{lemma:n_sample_complexity} and $\beta := \Prob\sbrac{\set{\wh\err(h; \Qf) \leq \mu} | \set{h \in \mcH_p}}$
    
    Next, we derive the sample complexity $m$ in terms of $\beta$. We show that there exists a $\mu^*$ such that both \ref{lemma:nohcs-q-upper} (for $\P$ and $\alpha$) and \ref{lemma:hcs-q-lower} (for $\Q$ and $\beta$) hold.
    \begin{align}
        \epsilon_p < \mu < \epsilon_q
    \end{align}
    This implies some $\mu$ exists if $\epsilon_q - \epsilon_p > 0$\\
    We find optimal $\mu^*$ such that the maximum of $m$ in Eq.~\ref{eq:nohcs-q-mcomplexity} and Eq.~\ref{eq:hcs-q-mcomplexity} is minimized. 
    \begin{align}
        \rbrac{\frac{\mu - \epsilon_p}{\epsilon_q - \mu}}^2 &= \frac{d_p + \ln\frac{1}{\alpha}}{d_p + \ln\frac{1}{\beta}} := \delta
        \\
        \mu^* &= \frac{\epsilon_p + \sqrt{\delta} \epsilon_q}{1 + \sqrt{\delta}}
    \end{align}
    Plugging this $\mu^*$ in Eq.~\ref{eq:hcs-q-mcomplexity} gives
    \begin{align}
        m &\in \mcO\rbrac{\frac{d + \ln\frac{1}{\beta}}{\rbrac{\epsilon_q - \epsilon_p}^2} \rbrac{1 + \sqrt{\frac{d + \ln\frac{1}{\alpha}}{d + \ln\frac{1}{\beta}}}}^2}
    \end{align}
    Use Eq.~\ref{eq:bound_eq_ep} to get the result.    
\end{proof}

$\xi$ in the denominator indicates that as the shift becomes more deteriorating, it is easier (fewer samples $m$) to monitor, indicating the effectiveness of the D3M algorithm. Also, having a high-quality base classifier (low $\epsilon_f$) is better for D3M which was also seen in Eq.~\ref{eq:bound_eq_ep} where low $\epsilon_f$ makes the algorithm more faithful. Note that $m$ depends on $d_p$ which can be much less than $d$ which $n$ depends on, suggesting that monitoring may be effective in few-shot settings. The dependence on $n$ is due to the requirement of satisfaction of condition 1 in Def.~\ref{def:hcs_definition}. In the optimization problems in Algo.~\ref{alg:preprocessing}, the empirical constraint is satisfied but the population constraint will be satisfied either for larger $\epsilon_0$ or for sufficiently large $n$ as seen in the theorem.

Next, we move to the regime where deteriorating shift occurs but $\epsilon_q - \epsilon_q \leq 0$. As a negative result, the following theorem states that in such cases the statistical power of the test is low.

\begin{theorem}
    When deteriorating shift occurs and $\epsilon_q \leq \epsilon_p$, for a chosen significance level of $\alpha$, the statistical power of the test in Alg.~\ref{alg:detection} is $\mcO(\alpha)$.
\end{theorem}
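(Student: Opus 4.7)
The plan is to leverage Lemma~\ref{lemma:nohcs-q-upper} symmetrically, applying it once to $\P$ to pin down the calibration threshold $\mu$ (the $(1-\alpha)$-quantile of $\Phi$) and once to $\Q$ to bound the upper tail of $\wh\err(h;\Q_f)$ at that same threshold $\mu$. The central observation is that when $\epsilon_q \leq \epsilon_p$, the max-empirical-disagreement statistic on $\Q$ is concentrated around a value no larger than $\epsilon_p$, so it rarely exceeds a threshold $\mu$ that was calibrated to sit at the right tail of an $\epsilon_p$-centered distribution. The expected conclusion is TPR $\leq O(\alpha)$, with the lower-order correction $\mcO(\exp(-n\epsilon_0^2 + d))$ coming, as in Theorem~\ref{theorem:no_hcs_samples}, from absorbing the event $h \notin \mcH_p$ via Lemma~\ref{lemma:n_sample_complexity}.

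Concretely, I would first characterize $\mu$. Since $\Phi$ collects $T$ independent draws of $\max_{h \in \mcH_p}\wh\err(h;\Pf)$, by Lemma~\ref{lemma:nohcs-q-upper} each such draw satisfies $\max_{h \in \mcH_p}\wh\err(h;\Pf) \leq \epsilon_p + c\sqrt{(d_p+\ln(1/\alpha))/m}$ with probability at least $1-\alpha$. Taking $T$ sufficiently large so that the empirical $(1-\alpha)$-quantile of $\Phi$ concentrates near its population counterpart yields the upper bound $\mu - \epsilon_p \leq O\!\bigl(\sqrt{(d_p+\ln(1/\alpha))/m}\bigr)$ and (by a matching lower-tail concentration argument on the same quantile) the lower bound $\mu - \epsilon_p \geq \Omega\!\bigl(\sqrt{(d_p+\ln(1/\alpha))/m}\bigr)$. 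Both directions are needed: the lower bound is the essential one for the next step, and it is what makes the hypothesis $\epsilon_q \leq \epsilon_p$ actually bite.

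Next, I would apply Lemma~\ref{lemma:nohcs-q-upper} to $\Q$ with threshold $\mu$. Since $\epsilon_q \leq \epsilon_p \leq \mu$, the lemma's prerequisite $\mu > \epsilon_q$ holds, and $\mu - \epsilon_q \geq \mu - \epsilon_p = \Omega\!\bigl(\sqrt{(d_p+\ln(1/\alpha))/m}\bigr)$. Plugging into the exponential tail form gives
\[
    \Pr_{X\sim \Q^m}\!\Bigl[\max_{h\in\mcH_p}\wh\err(h;\Qf) > \mu\Bigr] \;\leq\; C\exp\!\bigl(-m(\mu - \epsilon_q)^2 + d_p\bigr) \;\leq\; C\exp\!\bigl(-(d_p + \ln(1/\alpha)) + d_p\bigr) \;=\; C\alpha\text{,}
\]
so that TPR, on the event that the selected $h$ actually lies in $\mcH_p$, is at most $O(\alpha)$. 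Combining with Lemma~\ref{lemma:n_sample_complexity} to control the $h \notin \mcH_p$ event by $\mcO(\exp(-n\epsilon_0^2 + d))$ (analogous to the decomposition in the proof of Theorem~\ref{theorem:no_hcs_samples}) delivers the claimed $\mcO(\alpha)$ TPR bound.

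The main obstacle I anticipate is the precise quantile-concentration argument in the first step: Lemma~\ref{lemma:nohcs-q-upper} is only a one-sided concentration statement, so turning it into a two-sided localization of $\mu$ around $\epsilon_p$ requires either a matching lower tail for $\max_{h\in\mcH_p}\wh\err(h;\Pf)$ (e.g., via a VC-type anti-concentration or by exhibiting a near-maximizer and using a standard Hoeffding lower tail) or an assumption that $T$ is large enough to exploit the empirical CDF directly. The asymmetry between $\wh\err_P$ and $\wh\err_Q$ — they are not stochastically comparable in a coupling sense despite $\epsilon_q \leq \epsilon_p$ — is what forces this quantile-based route rather than a direct dominance argument.
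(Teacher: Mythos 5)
Your proposal follows essentially the same route as the paper's proof: apply Lemma~\ref{lemma:nohcs-q-upper} on $\P$ to tie the calibrated threshold $\mu$ to $\alpha$ via $\alpha \asymp \exp(-m(\mu-\epsilon_p)^2 + d_p)$, apply it again on $\Q$ and use $\epsilon_q \leq \epsilon_p$ so that $(\mu-\epsilon_q)^2 \geq (\mu-\epsilon_p)^2$ forces the flagging probability down to $\mcO(\alpha)$, and absorb the $h \notin \mcH_p$ event via Lemma~\ref{lemma:n_sample_complexity}. The ``main obstacle'' you flag --- that the one-sided tail bound only yields an upper bound on $\mu - \epsilon_p$, so a matching lower bound (anti-concentration or quantile tightness) is needed for the final inequality to close --- is a real subtlety that the paper's proof silently elides by treating the calibration relation as an equality, so your more cautious treatment is if anything an improvement rather than a divergence.
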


\begin{proof}
    From the proof of Theorem.~\ref{theorem:hcs_samples} we have
    \begin{align}
        \text{TPR} &\geq {(1 - \beta) \rbrac{1 - \mcO\rbrac{\exp\rbrac{-n \epsilon_0^2 + d}}}}
        \\
        \beta &:= \Prob\sbrac{\set{\wh\err(h; \Qf) \leq \mu} | \set{h \in \mcH_p}}
    \end{align}
    Using \ref{lemma:nohcs-q-upper} on $\P$ and $\alpha$ we get
    \begin{align}
        \alpha \in \mcO(\exp\rbrac{-n(\mu - \epsilon_p)^2 + d_p})
    \end{align}
    Using \ref{lemma:nohcs-q-upper} on $\Q$ we get
    \begin{align}
        \Prob\rbrac{\set{\wh\err(h; \Qf) \geq \mu} | \set{h \in \mcH_p}} &\in \mcO\rbrac{\exp\rbrac{-n(\mu - \epsilon_q)^2 + d_p}}
        \\
        1 - \beta &\in \mcO\rbrac{\exp\rbrac{-n(\mu - \epsilon_q)^2 + d_p}}
        \\
        1 - \beta &\in \mcO\rbrac{\alpha}
    \end{align}
    The last equation follows since we are dealing with the case where $\epsilon_p \geq \epsilon_q$. Thus, the TPR is $\mcO(\alpha)$ irrespective of the magnitude of $n$, as desired.    
\end{proof}

\newpage
\section{Experimental Setup and Additional Details}

\subsection{Baselines Details}
\label{appx:baselines}
We compare D3M against several other methods from the literature that either detect distribution changes or \textbf{can be converted into a PDD monitoring protocol}. Let $\boldsymbol{X} = \{\boldsymbol{x}^{(i)}\}_{i=1}^n$ from $\Px$ and $\boldsymbol{Y} = \{\boldsymbol{y}^{(i)}\}_{i=1}^m$ from $\Qx$ be given. These algorithms seek to accept or reject the hypothesis that $\Px = \Qx$ in distribution. 

\begin{enumerate} 
    \item Deep Kernel MMD (MMD-D, \cite{liu2020learning}) The algorithm first learns a deep kernel by optimizing a criterion which yields the most powerful hypothesis test. With this learned kernel, permutation tests are run multiple times in order to determine a true positive rate for the algorithm. We interface the authors' original source code with our repository and recycle their training procedures. Theirs can be found at  \href{https://github.com/fengliu90/DK-for-TST}{\texttt{https://github.com/fengliu90/DK-for-TST}}.

    \item H-Divergence (H-Div, \cite{zhao2022comparing}) The algorithm fits Gaussian kernel density estimates for $\Px$, $\Qx$, and their uniform mixture $(\Px + \Qx)/2$. Then, permutation tests are performed using the test statistic $H_\ell ((\Px + \Qx)/2) - \operatorname{min} \{ H_\ell(\Qx), H_\ell(\Px)\}$ where $H_\ell$ is the H-entropy with $\ell(x,a)$ the negative log likelihood of $x$ under distribution $a$ estimated by the Gaussian kernel density, in order to determine a true positive rate for the algorithm. This test statistic is an empirical estimate of the H-Min divergence. The choice of the particular H-divergence is a hyperparameter and is problem dependent, as well as the choice for how to generatively model the data distributions. The original paper further experimented with fitting Gaussian distributions as well as variational autoencoders (VAEs), both of which are not explored here. We interface the authors' original source code with our repository. Theirs can be found at  \href{https://github.com/a7b23/H-Divergence/tree/main}{\texttt{https://github.com/a7b23/H-Divergence/tree/main}}.

    \item $f$-Divergences (KL-Div for KL-Divergence, JS-Div for Jensen-Shannon Divergence, \cite{acuna2021f}) $f$-divergence generalizes several notions of distances between probability distributions commonly used in machine learning. In this paper, we convert the Kullback-Leibler (KL) and the Jensen-Shannon (JS) divergences, particular cases of $f$-divergences, into permutation tests. More specifically, we first fit Gaussians on samples coming from $\Px$ and $\Qx$ using maximum likelihood. In the case of KL-divergence, the empirical KL-divergence is computed between the fitted Gaussians whereas for the JS-divergence, we fit an additional Gaussian on the mixture distribution $\boldsymbol{M}$ and leverage the identity:
    \[
    \operatorname{JS}(\Px || \Qx) = \frac{1}{2}(\operatorname{KL}(\Px || \boldsymbol{M}) + \operatorname{KL}(\Qx || \boldsymbol{M}))
    \]
    We run permutation tests by permuting the samples in the union $(X \sim \Px^n) \cup (Y \sim \Qx^m)$. It is worth noting that as with H-divergence, more elaborate generative models could be fitted onto samples $X$ and $Y$, which we do not explore in this work. 

    \item Black Box Shift Detection (BBSD, \cite{lipton2018detecting}) involves estimating the changes in the distribution of target labels $p(y)$ between training and test data while assuming that the conditional distribution of features given labels $p(x|y)$ remains constant. This is achieved by using a black box model's confusion matrix to identify discrepancies in the marginal label probabilities between the training and test distributions, allowing detection and correction of the shift. We borrow the experimental setup directly from \cite{rabanser2019failing}. 

    \item Relative Mahalanobis Distance (RMD, Rel-MH, Rel. Mahalanobis, \cite{ren2021simple}) RMD modifies the traditional Mahalanobis Distance (MD) for out-of-distribution (OOD) detection by accounting for the influence of non-discriminative features. It subtracts the MD of a test sample to a background class-independent Gaussian from the MD to each class-specific Gaussian, effectively isolating discriminative features and improving OOD detection, especially for near-OOD tasks. We test for shift by performing a KS test directly on the distribution of the RMD confidence scored computed on $\Qx$ and $\Px$.

    \item Classifier two-sample test (CTST, C2ST, a.k.a. Domain Classifier, DC, \cite{lopez2016revisiting}) Following the procedure presribed by the original work, a binary domain classifier is trained to predict whether a sample came from $\Px$ or $\Qx$. Then, on a held-out mixture of data from $\Px$ and $\Qx$, we compute the domain classifier's accuracy and compare its performance to random chance using a binomial test. 

    \item Deep Ensemble (Deep Ensemble, \cite{ovadia2019can}) An ensemble of neural networks are trained independently on the entire training dataset using random initialization. A KS test is then performed on the distribution of entropy values computed from each sample in $\P$ and $\Q$.

    
\end{enumerate}


\subsection{Standard Benchmark Datasets}

\paragraph{UCI Heart Disease.} The UCI Heart Disease dataset (UCI) \cite{heart_disease_45, asuncion2007uci} includes 76 variables gathered from four distinct patient cohorts located in Cleveland, Hungary, Switzerland, and the VA Long Beach. To reduce the impact of missing data, we focus on nine out of the 14 most frequently used features: age, sex, chest pain type, resting blood pressure, serum cholesterol, fasting blood sugar, resting ECG results, maximum heart rate achieved, and exercise-induced angina. The objective is to predict heart disease diagnosis, measured on a scale from 0 to 4—where 0 denotes no disease and values 1 through 4 indicate increasing severity related to arterial narrowing. Using the proposed setup in \cite{ginsberg2023a}, the task is binarized, distinguishing between patients with a normal angiographic diagnosis (label 0) and those with any abnormal diagnosis (label greater than 0). The Cleveland and Hungary datasets serve as the ID domain, while Switzerland and VA Long Beach datasets form the deteriorating OOD domain.

\paragraph{CIFAR-10/10.1.} The CIFAR-10 image dataset \cite{krizhevsky2009learning} consists of 60,000 color images, each sized 32x32 pixels, divided into 10 different classes such as airplanes, cars, birds, and dogs. The dataset is split into 50,000 training images and 10,000 test images, with each class represented equally. Due to its manageable size and diversity of categories, CIFAR-10 is commonly used for testing and comparing the performance of image recognition models. The CIFAR-10.1 dataset \cite{recht2018cifar} is a test set designed to evaluate how well models trained on CIFAR-10 generalize to new, but similar, data. It consists of 2,000 images collected in a way that closely matches the original CIFAR-10 distribution, but from a separate data source to reduce the risk of overlap or memorization. CIFAR-10.1 was introduced to assess model robustness and identify potential overfitting to the original test set. Despite its similarity to CIFAR-10, many models perform slightly worse on CIFAR-10.1, highlighting the challenge of generalization in machine learning. 

The CIFAR-10 dataset is used as the ID dataset with which the mean model of D3M is trained, while the CIFAR-10.1 dataset is considered as a deteriorating shift from CIFAR-10. Thus, we gauge the ability to recognize CIFAR-10.1 as deterioratingly OOD by D3M and its competing baselines. 

\begin{figure}[htbp]
  \centering
  \begin{minipage}{0.48\textwidth}
    \centering
    \includegraphics[width=\linewidth]{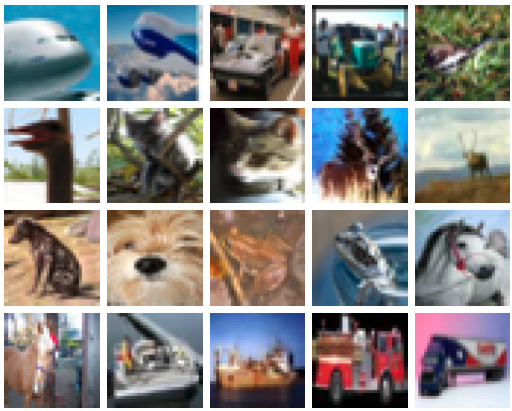}
  \end{minipage}
  \hfill
  \begin{minipage}{0.48\textwidth}
    \centering
    \includegraphics[width=\linewidth]{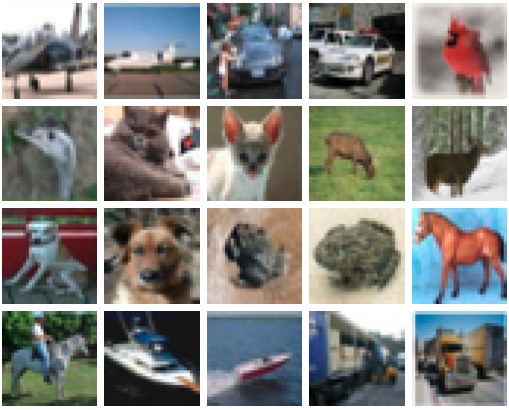}
  \end{minipage}
  \caption{(\textbf{Left}) Random samples from CIFAR-10.1. (\textbf{Right}) Random samples from CIFAR-10's test set. The images above are borrowed from Recht et. al. ``Do CIFAR-10 Classifiers Generalize to CIFAR-10?'' \citep{recht2018cifar}.}
  \label{fig:cifar_comparison}
\end{figure}

\paragraph{Camelyon17.} The Camelyon17 dataset is a challenging histopathology image classification dataset originally described by \cite{bandi2018detection}. It comprises 327,680 color images (96×96) extracted from lymph node tissue slides, with binary labels indicating the presence of metastatic cancer in the central 32×32 region. Following the WILDS setup, we treat the hospitals from which data was collected as domains: hospitals 1, 2, and 3 are used as source domains for training, while hospital 5 serves as the target test domain. We use the WILDS framework \cite{koh2021wilds} to handle dataset download, preprocessing, and domain partitioning. 

\begin{figure}[htbp]
  \centering
  \includegraphics[width=\linewidth]{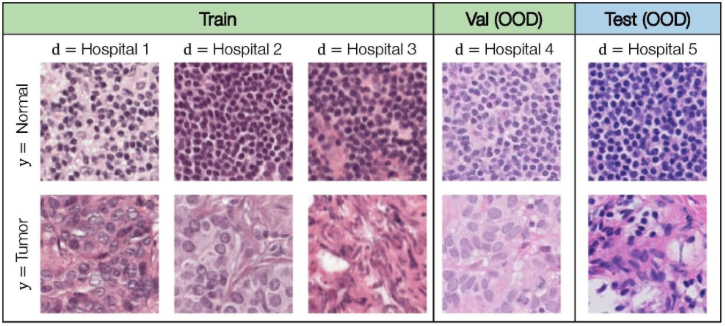}
  \caption{Samples from the Camelyon17 dataset, where hospitals 1, 2, 3 are treated as ID, while hospitals 4 and 5 are deterioratingly OOD. The image above is borrowed from Koh and Sagawa et. al. ``WILDS: A benchmark of in-the-wild distribution shifts'' \citep{koh2021wilds}.}
  \label{fig:camelyon17}
\end{figure}

\subsection{The GEneral Medicine INpatient Initiative (GEMINI) Dataset}

\label{section:gemini_info}
\begin{table}[ht!]
\centering
\begin{minipage}[t]{0.45\textwidth}
\centering
\begin{tabular}{@{}l  l@{} @{\hskip 10pt} l@{}}
\toprule 
\textbf{Year}   & \textbf{Patient Count} & \textbf{Label Ratio} \\ \midrule
Pre-2017  &    72316  & 3.99\%  \\
2017H2  &    17208  &  3.60 \%\\
2018H1  &    18233  & 4.15\% \\
2018H2  &    18469  & 3.83\% \\
2019H1  &    19041  & 3.50\% \\
2019H2  &    18601  & 3.49\%\\
2020H1  &    15575  & 4.50\%\\
2020H2  &    11155  & 3.48\%\\
2021H1  &    10625  & 3.46\%\\
2021H2 &     7396 & 2.95\%\\ \bottomrule
\end{tabular}
\caption{Temporal Split Data Summary}
\label{tab:tempshift}
\end{minipage}
\hspace{0.02\textwidth}
\vrule width 1pt 
\hspace{0.02\textwidth}
\begin{minipage}[t]{0.45\textwidth}
\centering
\begin{tabular}{@{}l  l@{} @{\hskip 10pt} l@{}}
\toprule 
\textbf{Age}   & \textbf{Patient Count} & \textbf{Label Ratio} \\ \midrule
18-52  &    33220  & 0.82 \% \\
52-66  &    33146  & 2.36 \% \\
66-72  &    31048  &  3.36 \%\\
76 - 85  &    34055 &  4.77 \% \\
85+  &   32399  &   7.82 \%\\ \bottomrule
\end{tabular}
\caption{Age Split Data Summary}
\label{tab:ageshift}
\end{minipage}
\end{table}

\paragraph{GEMINI Dataset: Study and Preprocessing.} The GEMINI study is a retrospective cohort study of adult patients and their clinical and administrative data\citep{verma2017, verma2021assessing}.  
This analysis used data from over 200,000 patients from the GEMINI database, spanning 7 different hospitals that participated in the GEMINI Study. Each patients information is processed into ~900 features including but not limited to: (i) laboratory results and vital results collected up to 48-hours after admission, split into 6 hour intervals, (ii) patient demographic information: age, sex etc, (iii) Patient diagnosis using ICD-10-CA codes. Missing feature values are imputed based on simple averaging. The predictive task related to this data is 14-day mortality for patients based on these collected features. This dataset is temporal by nature as data splits are arranged per half-years starting from 2017, where shifts in patient features become even more pronounced during the COVID period of 2020-2021. Since our D3M base models trained on data before 2018 surprisingly do not underperform on later splits (Figure \ref{fig:evolving_shift}), we make the assessment that this shift is non-deteriorating. 

Furthermore, an important feature of this dataset is its inherent temporal covariate shift, the evidence of which is in the elevated FPR of divergence/distance-based detectors of Deep Kernel MMD (MMD-D) and Relative Mahalanobis Distance (Rel-MH). Functionally, this means that patients’ static covariates as well as their labs, medications, and medical interventions have drifted over time, especially into the COVID years.

\textbf{Data Splitting and Shift.} Based on this pre-processed data, 2 shifts are analyzed: (i) temporal shift, and (ii) age-group shift. The temporal shift analysis splits data into half-years - 2018H1, 2019H2, etc. The baseline model uses 2017H1 and prior data for training, and 2017H2 for validation; Tab.~\ref{tab:tempshift} shows patient statistics for this split. It is subsequently tested on unseen in distribution data and later splits.  The different age groups are created by splitting the data into 5 equally sized groups based on ages of patients: (1) 18-52, (2) 52-66, (3) 66-72, (4) 76 - 85, (5) 85+; Tab.~\ref{tab:ageshift} shows patient statistics for this split. The reported analysis trains a baseline model on group 1 (18-52) and then tests on test-sets that contain some portion of data from the 5th group (85+) and the remaining as unseen in distribution data. The portions [0.0, 0.2, 0.4, 0.6, 0.8, 1.0] represent what percentage of the test set is OOD (from group 5), whilst the remaining amount is ID (from group 1). For example a ratio of 0.2 means 20\% of the test data is from group 5(OOD) whilst 80\% is from group 1(ID). We chose to experiment on such portions instead of just subsequent age groups as this process better displays the TPR of D3M as well as baselines with respect to the degree of shift / performance deterioration.


\subsection{Configurations for $\operatorname{FE}_\theta$ and $\operatorname{VBLL}_\theta$ }
\paragraph{Tabular features.} In the tabular setting, we chain affine layers of similar hidden dimension coupled with exponential linear unit (ELU) non-linearities \citep{clevert2015elu}. We use a standard dropout rate of $0.2$ across all experiments and employ skip connections between hidden layers. This architecture is used in all experiments with the UCI Heart Disease dataset as well as the GEMINI dataset.

\paragraph{Convolutional features.} For CIFAR-10/10.1, we pass the input through an initial convolution and max-pooling, then through a sequence of same-dimension convolutions with batch normalization \cite{ioffe2015batch} and skip-connections. Another max-pool is applied, before the representation is flattened and forwarded through an affine layer to obtain a representation.

For Camelyon17 experiments, we employ pre-trained ResNets \cite{he2016deep} and either train from scratch, finetune them during the training of the $\operatorname{VBLL}_\theta$ layer, or freeze them while only training the VBLL layer.

\paragraph{Variational Bayesian Last Layers (VBLL).} We borrow the implementation from \cite{harrison2024variational} which can be readily coupled with the above neural feature extractors for end-to-end ELBO maximization. In our experiments, we employ the VBLL variant for discriminative classification \verb|vbll.DiscClassification|, and parametrize the covariance matrix of the normal distribution of logits as a diagonal. As for VBLL's hyperparameters, the prior scale and wishart scale hyperparameters are as described in the original manuscript, while we use the \textbf{regularization factor} to VBLL to be a factor of $n^{-1}$ where $n$ is the size of the training set. This factor controls the weight of the KL estimate during ELBO maximization at training.

\subsection{Sweeping and Hyperparameters}
\label{appx:hparam-sweep}
In each experiment, for each test sample size $m$, we run a hyperparameter sweep in order to identify hyperparameters that (1) achieves the most consistent low FPR when tested in-distribution, and (2) achieves the highest deterioration monitoring TPR. As mentioned previously, the distribution of disagreement rates can be overfitted so that D3M flags \textbf{any} sample, regardless of whether they are ID or not, thus justifying the choice of selecting hyperparameters jointly satisfying (1) and (2). 

Once the best sets are identified, we run 10 independent seeded runs for each test sample size $m = 10, 20, 50$. We report used hyperparameters in Tables \ref{hparam:tabular} and \ref{hparam:conv} for transparency and reproducibility. In CIFAR-10/10.1, ``Hidden dimension'' refers to the dimensionality of the final output of $\operatorname{FE}_\theta$, and test size $m$ is identical for D3M and other baseline algorithms for each set of experiments.

\begin{table}[h]
\centering
\caption{Hyperparameters for UCI Heart Disease and GEMINI datasets}
\label{hparam:tabular}
\begin{tabular}{llcc}
\toprule
\textbf{Group} & \textbf{Hyperparameter} & \textbf{UCI} & \textbf{GEMINI} \\
\midrule
\multirow{4}{*}{\textbf{Train}} 
    & Learning rate         & $1 \times 10^{-3}$ & $1 \times 10^{-3}$ \\
    & Batch size            & $64$ & $64$ \\
    & Epochs                & $50$ & $50$ \\
    & Weight decay          & $1 \times 10^{-4}$ & $1 \times 10^{-4}$ \\
\midrule
\multirow{3}{*}{\textbf{Model}} 
    & Hidden dimension      & $16$ & $128$ \\
    & Num. hidden layers         & $4$ & $4$ \\
    & Dropout               & $0.2$ & $0.2$ \\
\midrule
\multirow{3}{*}{\textbf{VBLL}} 
    & Regularization factor & $100.0$ & $100.0$ \\
    & Prior scale           & $1.0$ & $1.0$ \\
    & Wishart scale         & $1.0$ & $1.0$ \\
\midrule
\multirow{1}{*}{\textbf{D3M}} 
    & Sampling temperature  & $1.0$ & $1.0$ \\
    & Test size $m$ & $10, 20, 50$ & $200$ \\
\bottomrule
\end{tabular}
\end{table}

\begin{table}[h]
\centering
\caption{Hyperparameters for CIFAR-10/10.1 and Camelyon17 datasets}
\label{hparam:conv}
\begin{minipage}{0.48\linewidth}
\centering
\vspace{0.5em}
\textbf{CIFAR-10/10.1}
\vspace{0.3em}
\begin{tabular}{llc}
\toprule
\textbf{Group} & \textbf{Hyperparameter} & \textbf{Value} \\
\midrule
\multirow{4}{*}{\textbf{Train}} 
    & Learning rate         & $1 \times 10^{-3}$ \\
    & Batch size            & $64$ \\
    & Epochs                & $10$ \\
    & Weight decay          & $1 \times 10^{-4}$ \\
\midrule
\multirow{5}{*}{\textbf{Model}} 
    & Hidden dimension      & $256$ \\
    & Num. mid layers            & $3$ \\
    & Initial kernel size   & $9$ \\
    & Kernel size           & $7$ \\
    & Num. mid channels          & $128$ \\
\midrule
\multirow{3}{*}{\textbf{VBLL}} 
    & Regularization factor & $10.0$ \\
    & Prior scale           & $1.0$ \\
    & Wishart scale         & $1.0$ \\
\midrule
\textbf{D3M} & Sampling temperature & $1.0$ \\
    & Test size $m$ & $10,20,50$ \\
\bottomrule
\end{tabular}
\end{minipage}
\hfill
\begin{minipage}{0.48\linewidth}
\centering
\textbf{Camelyon17}
\vspace{0.3em}
\begin{tabular}{llc}
\toprule
\textbf{Group} & \textbf{Hyperparameter} & \textbf{Value} \\
\midrule
\multirow{4}{*}{\textbf{Train}} 
    & Learning rate         & $1 \times 10^{-5}$ \\
    & Batch size            & $256$ \\
    & Epochs                & $2$ \\
    & Weight decay          & $1 \times 10^{-4}$ \\
\midrule
\multirow{3}{*}{\textbf{Model}} 
    & ResNet type           & ResNet34 \\
    & From pretrained       & True \\
    & Freeze features       & True \\
\midrule
\multirow{3}{*}{\textbf{VBLL}} 
    & Regularization factor & $100.0$ \\
    & Prior scale           & $5.0$ \\
    & Wishart scale         & $2.0$ \\
\midrule
\textbf{D3M} & Sampling temperature & $2.0$ \\
& Test size $m$ & $10,20,50$ \\
\bottomrule
\end{tabular}
\end{minipage}
\end{table}

\paragraph{Common to all setups.} For all experiments, the number of posterior samples $K$ is set to $5000$, the size of the empirical distribution of maximum disagreement rates $|\Phi|$ is  set to $1000$. For each experiment, once \textbf{Train} and \textbf{Calibrate} is completed, we deploy the model on $100$ independent samplings of the questionable test data. If model deterioration occurs (this is the case for the standard benchmark experiments as well as the GEMINI dataset \textit{temporal} shift experiment but not the GEMINI dataset \textit{age} shift experiment), we report the number of times D3M flagged these deteriorating samples out of $100$ as TPR. Finally, confidence statistics are aggregated and computed on TPRs reported from seeded, independent \textbf{Train}-\textbf{Calibrate}-\textbf{Deploy} D3M cycles. All optimization is done using AdamW \cite{loshchilov2017decoupled}. Finally, we start seeded, independent, identical runs beginning at $\operatorname{seed} = 57$ since it is our favorite prime number, and increase $\operatorname{seed}$ by $+1$ for each run. 

\paragraph{Only reporting TPRs of runs achieving low FPR.} Importantly, we aggregate only TPRs achieved when a FPR below $0.10$ in-distribution is achieved. This FPR is calculated on a held-out ID validation set that D3M has not yet seen during \textbf{Train} nor \textbf{Calibrate}. Therefore, for all experiments we run seeded runs until $10$ runs recording ID FPR below $0.10$ are found, and their TPR statistics are computed, as certain runs do not achieve the ID FPR tolerance desired. 

When deploying D3M in a real-world healthcare setting, for instance, upon the completion of the \textbf{Calibrate} step, one could imagine validating this calibration step by computing an ID FPR score, independent of deployment. IF this ID FPR score is higher than a tolerated threshold $\alpha$, the practitioner could either increase the size of $\Phi$ to eliminate noise from sampling, or consider finding another set of hyperparameters that would lead to more stable calibration. 

\subsection{Additional D3M Results on Standard Benchmark}
\label{appx:bigshot-results}

We further tested D3M on 100 and 200 calibration/test samples on the same set of hyperparameters above. Table \ref{tab:performance_d3m_only} summarizes our findings. 

\begin{table}[ht]
\centering
\resizebox{0.8\columnwidth}{!}{%
\begin{tabular}{l cc cc cc}
\toprule
& \multicolumn{2}{c}{\textbf{UCI Heart Disease}} & \multicolumn{2}{c}{\textbf{CIFAR 10.1}} & \multicolumn{2}{c}{\textbf{Camelyon 17}} \\
\cmidrule(lr){2-3} \cmidrule(lr){4-5} \cmidrule(lr){6-7}
\textbf{} & 100 & 200 & 100 & 200 & 100 & 200 \\
\midrule
\textbf{D3M (Ours)} & \textbf{.93$\pm$.10} & \textbf{.99$\pm$.01} & \textbf{.91$\pm$.11} & \textbf{.99$\pm$.01} & \textbf{1.0$\pm$.00} & \textbf{1.0$\pm$.00} \\
\bottomrule
\end{tabular}
}  
\vspace{0.3cm}
\caption{True positive rates (TPR) for D3M across datasets and test sizes $100, 200$. }
\label{tab:performance_d3m_only}
\vspace{0cm}
\end{table}

\paragraph{At higher test sizes, D3M achieves near-perfect TPR.} This is consistent across all experiments. This suggests that D3M could be an effective tool to monitor model deterioration. However, of note is that for test size $100$, in the UCI Heart Disease and CIFAR-10/10.1 experiments, D3M is still logging unusually high standard deviation, where we suspect the noise to be coming from the sampling procedure from the $\operatorname{VBLL}_\theta$ distribution of logits. 

\paragraph{Trading off deployment-time computations with sweeping} D3M is efficient when comparing to other disagreement-based detection and monitoring algorithms. However, the method is inherently noisier due to several levels of approximation to its idealized version (Algorithms \ref{alg:preprocessing} and \ref{alg:detection}). In particular, we found that D3M is \textbf{sensitive to the choice of hyperparameters} in order to achieve great performance. Therefore, the payment of computational cost is carried through prior to deployment in the sweeping itself, rather than during deployment as is done in \cite{ginsberg2023a}. We argue, however, that this is \textbf{preferable in edge deployment scenarios}, such as in hospitals or embedded systems, where real-time responsiveness and resource constraints are critical by front-loading the computational burden during the sweeping phase. One can imagine that for an AI terminal as part of a hospital computational infrastructure for instance, the ability to perform robust detection and monitoring with minimal overhead at deployment time significantly enhances reliability and usability in practice.

\subsection{Statement on the Usage of Computing Resources}

All experiments were run on High Performance Computing (HPC) clusters. 
\paragraph{UCI Heart Disease.} UCI Heart Disease experiments were run on GPU nodes with at minimum 8GB of GPU memory, 6 CPU cores, and 8GB RAM. The total runtime of D3M prior to deployment is less than 5 minutes. 

\paragraph{CIFAR-10/10.1.} CIFAR-10/10.1 experiments were run on GPU nodes with at minimum 24GB of GPU memory to accomodate the largest configurations of convolutions, 12 CPU cores, and 12GB RAM. The total runtime of D3M prior to deployment is less than 10 minutes. 

\paragraph{Camelyon17.} Camelyon17 experiments were run on GPU nodes with at minimum 80GB of GPU memory to accomodate the largest ResNets during sweeping, 12 CPU cores, and 12GB RAM. The total runtime of D3M prior to deployment is less than 1 hour. 

\paragraph{GEMINI Dataset.} Experiments on the GEMINI datset were run on GPU and CPU nodes. We request at minimum 16GB of GPU memory (when applicable), 9 CPU cores, and 32GB of RAM. The total runtime of D3M prior to deployment is less than 1 hour. Although models for GEMINI were significantly smaller than vision models for Camelyon17 and CIFAR-10/10.1, due to the number of samples available as well as older CPU hardware, the runtime was significantly extended. 



    

\newpage
\section{Ablations}

To complement the theoretical analysis and main results, we provide additional ablation studies that clarify the empirical behavior of D3M. Each ablation isolates a specific modeling or design choice, allowing us to assess its contribution to performance, stability, and efficiency. These results also serve to validate our claims about scalability and robustness, while highlighting trade-offs between alternative design choices (e.g., linear vs.\ VBLL heads). We organize this section as follows: (i) VBLL head vs. Linear head, (ii) computational costs, especially when compared with Detectron, (iii) sensitivity to hyperparameters such as temperature and ID thresholds, and (iv) the relationship between empirical oversampling and the theoretical D3M framework. Importantly, where applicable, all ablation experiments are run with the same best hyperparameter sets which we isolated during our sweeps and reported in Appendix \ref{appx:hparam-sweep}.

\subsection{VBLL Head vs. Linear Head}
\label{appx:vbll-vs-linear}
Implementing a VBLL head is what allows D3M to sample alternate hypotheses (re: Idealized D3M formulation, i.e. Algorithms \ref{alg:preprocessing} and \ref{alg:detection}). It is important that transitioning to VBLL from a classical linear head does not amount to significant decreases in base model performance on the ID task. The following table reports model accuracy, F1 score, AUROC, and Mathew's Correlation Coefficient (MCC), all computed on an ID validation set, along with a deployment accuracy drop. 

\begin{table}[h]
\centering
\caption{Comparison of D3M vs.\ Linear head (Lin) across datasets.}
\vspace{0.5em}
\resizebox{\linewidth}{!}{%
\begin{tabular}{lcccccc}
\toprule
Dataset & Model & Accuracy & F1 Score & AUROC & MCC & Acc.\ Drop OOD \\
\midrule
\textbf{UCI} & D3M & $0.76 \pm 0.02$ & $0.75 \pm 0.02$ & $0.86 \pm 0.01$ & $0.51 \pm 0.04$ & $-0.11 \pm 0.02$ \\
          & Lin & $0.77 \pm 0.01$ & $0.76 \pm 0.01$ & $0.87 \pm 0.01$ & $0.52 \pm 0.02$ & --- \\
\midrule
\textbf{CIFAR-10/10.1} & D3M & $0.70 \pm 0.02$ & $0.70 \pm 0.02$ & $0.96 \pm 0.00$ & $0.67 \pm 0.02$ & $-0.13 \pm 0.02$ \\
                     & Lin & $0.72 \pm 0.01$ & $0.72 \pm 0.01$ & $0.96 \pm 0.00$ & $0.69 \pm 0.01$ & --- \\
\midrule
\textbf{Camelyon17} & D3M & $0.94 \pm 0.01$ & $0.94 \pm 0.01$ & $0.98 \pm 0.00$ & $0.88 \pm 0.01$ & $-0.05 \pm 0.00$ \\
                    & Lin & $0.94 \pm 0.00$ & $0.94 \pm 0.00$ & $0.98 \pm 0.00$ & $0.88 \pm 0.01$ & --- \\
\bottomrule
\end{tabular}%
}
\end{table}

In the above, for multi-class classification datasets, F1, AUROC, and MCC are averaged across all pairs of classes. All hyperparameters of the feature extractors in the above setup are kept the same. We confirm that consistently across all metrics, the performance of D3M's base model with a VBLL head nearly matches that of a base linear head. We argue that this is a worthwhile tradeoff in order to leverage hypothesis sampling with VBLL. We observe that there is virtually no performance drop for Camelyon17. We suspect this is due to the base model being finetuned from a pretrained ResNet-34 on ImageNet-1K, having learned more general representations, as opposed to our ConvNet feature extractors implemented for the CIFAR-10/10.1 experiments that may have overfitted to the ID training dataset. 

\subsection{Computational Costs Analysis}
\label{appx:computational-costs}
To better understand the scalability of D3M relative to existing approaches, we report 
the floating point operation counts (FLOPs) required by each algorithm across their 
main computational stages. Table \ref{appx:tab-flops} presents the breakdown of cost into 
(1) base model training and (2) calibration, along with the total compute required. 
We assume a query size of $100$, and all hyperparameters are identical to those reported 
in the experimental setup.

\begin{table}[h]
\centering
\caption{Comparison of FLOPs between D3M and Detectron across computational stages.}
\vspace{0.5em}
\begin{tabular}{l l c c c}
\toprule
Algorithm & Dataset & Base Model Training & Calibration & \textbf{Total} \\
\midrule
D3M & UCI & $0.23$ GFLOPs & $9.00$ GFLOPs & $\mathbf{9.23}$ GFLOPs \\
Detectron & UCI & $0.15$ GFLOPs & $18.69$ GFLOPs & $\mathbf{18.84}$ GFLOPs \\
\midrule
D3M & CIFAR-10/10.1 & $1.56$ PFLOPs & $0.13$ PFLOPs & $\mathbf{1.69}$ PFLOPs \\
Detectron & CIFAR-10/10.1 & $1.56$ PFLOPs & $>1.95$ PFLOPs & $\mathbf{>3.51}$ PFLOPs \\
\midrule
D3M & Camelyon17 & $12.02$ PFLOPs & $0.74$ PFLOPs & $\mathbf{12.76}$ PFLOPs \\
Detectron & Camelyon17 & $12.02$ PFLOPs & $>11.04$ PFLOPs & $\mathbf{>23.06}$ PFLOPs \\
\bottomrule
\end{tabular}
\label{appx:tab-flops}
\end{table}

We observe several consistent patterns across datasets and architectures. 
First, in D3M, the bulk of the computational burden lies in the one-time base 
training of the feature extractor, after which calibration incurs only a marginal cost. 
This scaling behavior arises because calibration in D3M consists of freezing the 
backbone in evaluation mode and drawing samples from the VBLL layers. As a result, 
no backpropagation or fine-tuning steps are required once the base model has been 
trained. 

In contrast, Detectron’s design requires fine-tuning on held-out subsets 
during calibration. This entails repeating gradient computations and updating 
parameters at deployment time. Consequently, the calibration phase is nearly as 
expensive as the initial training phase, and in some cases exceeds it. For example, 
on Camelyon17, calibration in Detectron adds over $11$ PFLOPs on top of the 
$12$ PFLOPs needed for training, while D3M incurs less than $1$ PFLOP in calibration 
overhead. 

These results highlight that as feature extractors scale in size, D3M’s compute 
is dominated by the unavoidable cost of initial training, whereas Detectron continues 
to accrue significant additional cost in deployment. The difference grows more 
pronounced for large-scale datasets and larger-scale models, where Detectron effectively doubles its compute 
requirements, while D3M remains lightweight in its post-training stages. This 
distinction is especially important for real-world deployment settings, where 
calibration often needs to be performed repeatedly under strict resource constraints. In sum, FLOP comparisons show that D3M achieves up to 10x training reduction and 2x overall reduction versus Detectron, requiring no training data once the base model completes training.

\subsection{Temperature Sensitivity and ID Thresholds}
\label{appx:temp-sensitivity-and-id-thresholds}
We conduct an ablation study to assess the effect of the temperature parameter 
used in sampling from the VBLL layers. Figure~\ref{fig:temp_ablations}
report results across UCI, CIFAR-10/10.1, and Camelyon17. Metrics include 
the false positive rate under ID data (FPR ID), true positive rate (TPR) 
for deteriorating OOD detection, and the mean disagreement rates on both ID and OOD samples.

\begin{figure}[ht]
  \centering
  \includegraphics[width=0.8\linewidth]{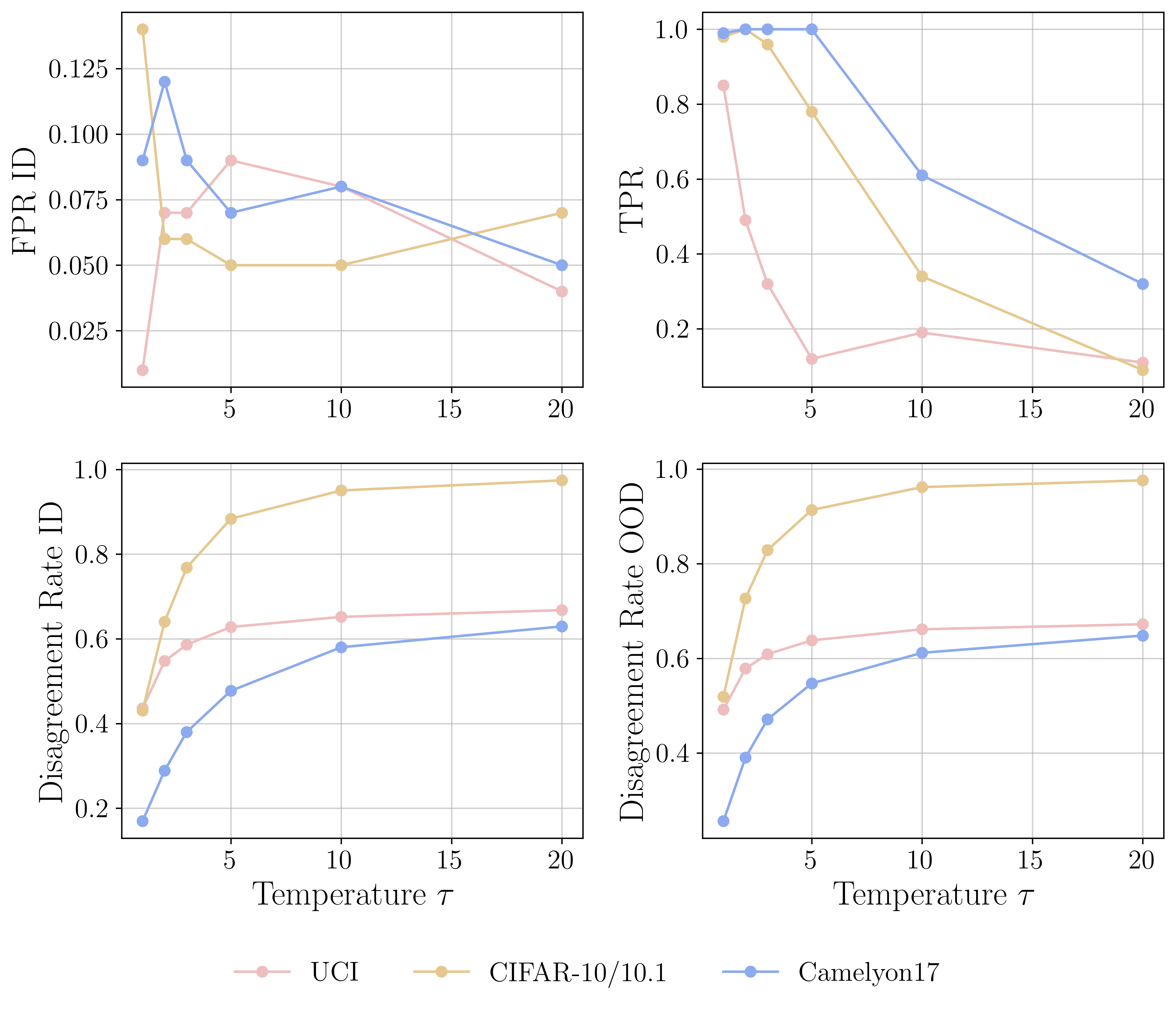}
  \caption{Ablation on the temperature parameter $\tau$ for D3M. We plot the ID FPR, TPR, ID mean disagreement rate, and deteriorating OOD mean disagreement rate across temperatures $\{1, 2, 3, 5, 10, 20\}$. We remark that as the temperature increases, softening the sampling logits distributions of D3M's Calibration and Deployment, while the ID FPR stays roughly within the same range, the TPR tanks significantly across all datasets.}
  \label{fig:temp_ablations}
\end{figure}

Across datasets, we observe a consistent trend: increasing the temperature broadens 
the effective sampling distribution, which amplifies disagreement rates for both 
ID and OOD samples. While this increases diversity among hypotheses, it also skews 
the base ID disagreement rate distribution toward higher values. As a consequence, 
the separation between ID and OOD disagreement rates diminishes, leading to elevated 
FPR ID and reduced TPR for deteriorating OOD detection. For instance, on CIFAR-10/10.1, raising the temperature 
from $2$ to $10$ causes TPR to collapse from nearly perfect detection to 
just $0.34$, despite higher disagreement values overall. Similar trends are evident 
in UCI and Camelyon17.

We further plot the gap between the mean ID and OOD disagreement rates across our datasets and report in Figure \ref{fig:dis_gap}. We find that as the temperature increases, the mean disagreement rates in- and out-of-distribution tend closer to each other. Since D3M's core functionality is to track this difference, as the temperature increases, ID and OOD become harder to distinguish on known deteriorating tasks, which explains why the TPR of D3M drops as temperature $\tau$ grows. 

\begin{figure}[ht]
  \centering
  \includegraphics[width=0.8\linewidth]{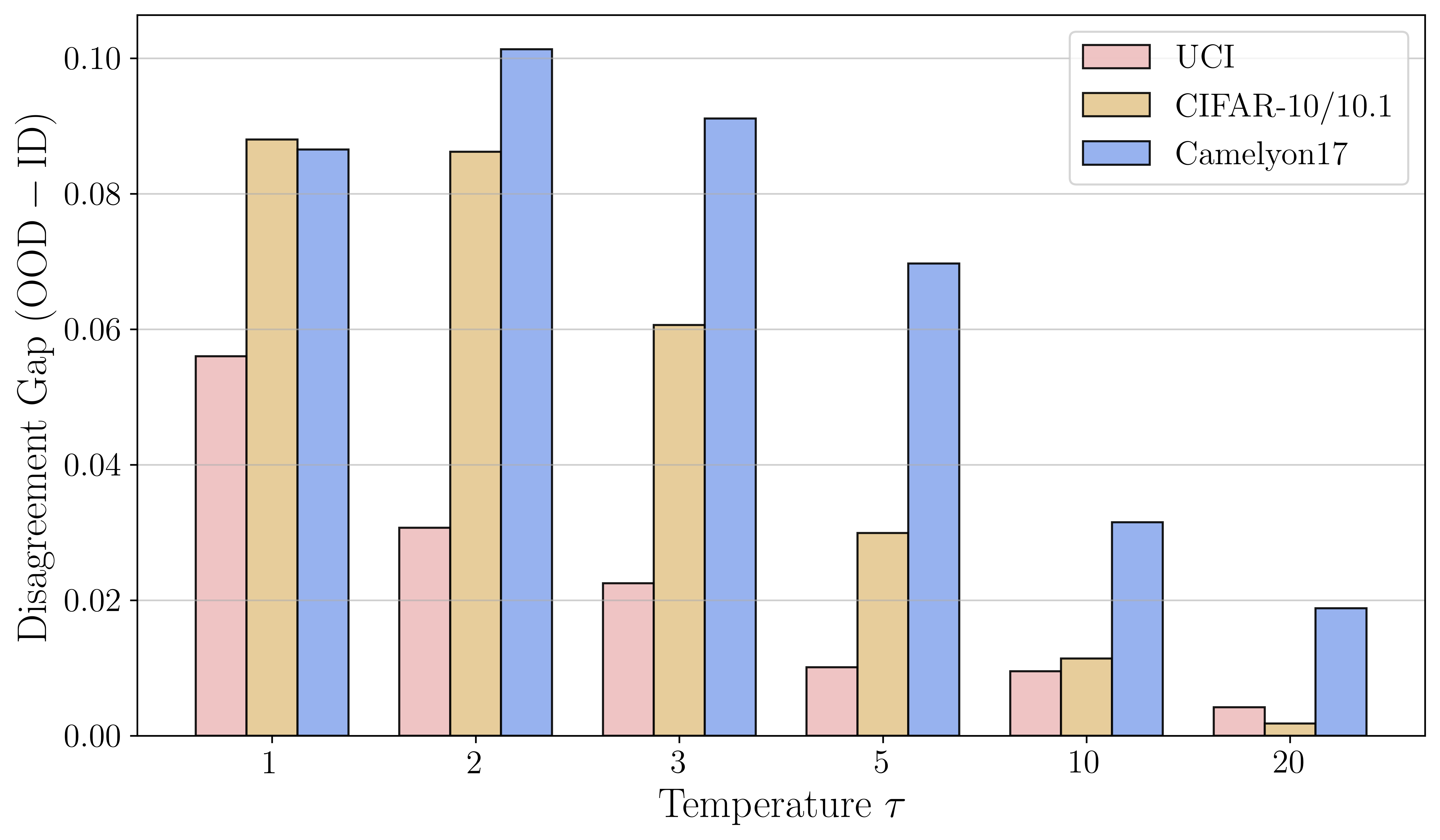}
  \caption{Evolution of ID--OOD disagreement gap over temperatures $\tau$. We find that the gap decreases monotonically (with the exception of a spike at $\tau=2$ for Camelyon17) across datasets as $\tau$ increases. Since D3M's core functionality relies on distinguishing deteriorating OOD disagreement rates from ID disagreement rates, too high temperatures result in worse disagreement gaps, which lead to significantly worse TPRs.}
  \label{fig:dis_gap}
\end{figure}

These findings highlight the critical role of temperature as an important tradeoff parameter where high temperatures blur the ID--OOD boundary and degrade detection reliability. In practice, 
we find that values in the range $1$ to $5$ offer the most stable trade-off across datasets, providing both sufficient hypothesis diversity and reliable threshold calibration.

\newpage
Finally, we study the variation of the ID threshold during the Calibration step itself as ID disagreement rates measured on bootstrapped ID validation subsets get appended to $\Phi$. The ID threshold here refers to the $(1-\alpha)$ Quantile of $\Phi$, where disagreement rates beyond the ID threshold is labeled as deteriorating OOD by D3M.

\begin{table}[ht]
\centering
\caption{ID thresholds for $\alpha=0.1$ across $10$ independent runs. Columns are the rounds $t \leq T=1000$ from which the $1-\alpha$ quantile is computed for $\Phi$.}
\vspace{0.5em}
\label{table:id-thresholds}
\resizebox{\linewidth}{!}{%
\begin{tabular}{lccccc}
\hline
Dataset & $t=5$ & $t=50$ & $t=100$ & $t=500$ & $t=T=1000$ \\
\hline
UCI Heart Disease & $0.457 \pm 0.011$ & $0.468 \pm 0.007$ & $0.467 \pm 0.004$ & $0.468 \pm 0.004$ & $0.470 \pm 0.000$ \\
CIFAR-10/10.1 & $0.452 \pm 0.013$ & $0.464 \pm 0.009$ & $0.463 \pm 0.008$ & $0.465 \pm 0.005$ & $0.463 \pm 0.005$ \\
Camelyon17 & $0.306 \pm 0.011$ & $0.312 \pm 0.004$ & $0.312 \pm 0.004$ & $0.310 \pm 0.000$ & $0.310 \pm 0.000$ \\
\hline
\end{tabular}
}
\end{table}

Table \ref{table:id-thresholds} shows that although some variability is exhibited for early $t \leq T$, the threshold stabilizes with more independent rounds, as evidenced by the decrease in standard deviation. Across our datasets, we observe that the ID threshold seems to increase with more independent realizations of Calibration rounds, as observed by the pronounced increment going from $t=5$ to $t=50$, but stabilizes beyond this point. A rough recommendation is to choose $T=|\Phi|\geq 50$, but we note that each round $t \leq T$ has the same wall-clock time as they all involve the same 1. bootstrapping a random ID validation subset with replacement from a ID validation set, 2. sampling $K$ candidate hypotheses, and 3. computing maximum disagreements across all candidate hypotheses.

\subsection{Oversampling of the ideal restricted hypothesis space $\mcH_p$}
\label{appx:oversampling}

We investigate the gap between the empirical version of D3M and Idealized D3M (Algorithms \ref{alg:preprocessing} and \ref{alg:detection}). In particular, we investigate whether hypotheses sampled during the Calibration and Deployment phases of D3M belong in $\mcH_p = \{h\in \mcH : \err(h, \Pg) \leq \epsilon_f\}$, or that D3M oversamples $\mcH_p$ and obtains hypotheses beyond it. Our experiment setup is as follows. During Calibration, instead of sampling labels and computing maximum disagreement rates for bootstrapped unsupervised ID datasets, we sample labels for this bootstrapped set concatenated with the original training set along with the true training labels. For the Monte-Carlo sample resulting in the maximum disagreement rate (only computed on the bootstrapped set), we compute the accuracy against the true labels of the training set. For $|\Phi| = 1000$ independent runs on the UCI dataset and the CIFAR-10/10.1 datasets, we report the mean and standard deviation of training accuracy scores, the mean model’s validation score, and the percentage drop. 
 \begin{table}[h]
\centering
\caption{Comparison of accuracy of MaxDisRate hypothesis against the ID validation accuracy across datasets}
\vspace{0.5em}
\begin{tabular}{lccc}
\hline
\textbf{Dataset} & \textbf{MaxDisRate Acc. (Train)} & \textbf{Validation Acc. (ID Valid)} & \textbf{\% Difference} \\
\hline
UCI           & $0.65 \pm 0.02$ & $0.76 \pm 0.02$ & $-11\%$ \\
CIFAR-10/10.1 & $0.55 \pm 0.00$ & $0.70 \pm 0.02$ & $-15\%$ \\
Camelyon17    & ---             & $0.94 \pm 0.01$ & ---      \\
\hline
\end{tabular}
\label{table:oversampling-results}
\end{table}

We remark that for no oversampling to occur, the sampled labels’ training accuracy should be at least as good as the ID validation accuracy. Since we have significant drops on these above datasets in Table~\ref{table:oversampling-results}, we conclude that the sampled decision boundaries of D3M most likely lie beyond the desired set $\mathcal{H}_p$. 

We further remark that a single sample of labels from a Bayesian model is not representative of the model’s predictive capabilities as the latter relies on averaging over large samples of labels. We therefore make the preliminary assessment that models achieving the maximum disagreement rate are “oversampling”, i.e. they do not fall in $\mathcal{H}_p$. In addition, the second sampling from the categorical distribution given by logits further derails our empirical D3M from the idealized D3M as with respect to the training data, this can be seen as a noisy assignment of labels for a training accuracy computation.  

In contrast, Detectron \citep{ginsberg2023a} optimizes a joint loss term trading off between a training error minimization and a target unsupervised dataset’s error maximization (with respect to its pseudolabels). As this tradeoff is tunable, it’s possible to enforce stronger adherence to hypotheses within $\mathcal{H}_p$ at the heavy price of requiring the training set to be at arm’s length. 

Given this significant deviation from the idealized setup where candidate hypotheses are obtained following an optimization oracle returning hypotheses only within $\mcH_p$, to what extent can one trust the verdicts outputted by D3M? Intuitively, D3M is ID-oversampling ``by the same amount'' during calibration and deployment, akin to a \textit{miscalibrated balance scale}. Oversampling on an ID deployment sample would result in disagreement rates resembling those in $\Phi$, thus incurring no more false positives. For a deteriorating OOD deployment sample, the effect of oversampling exacerbates disagreement rates but only to the extent that it exacerbates disagreement rates for bootstrapped ID samples during the calculation of $\Phi$. In contrast, oversampling might result in a wider dispersion of $\Phi$, meaning the $1-\alpha$ quantile might be pushed closer to 1, resulting in more false negatives. 

Our analysis reveals that D3M consistently samples hypotheses beyond the ideal restricted hypothesis space 
$\mcH_p$, with training accuracies falling 11-15\% below validation performance across datasets. Despite this departure from the idealized framework, D3M operates as a "miscalibrated balance scale" that maintains relative consistency between calibration and deployment phases. The systematic nature of this oversampling means both ID and OOD samples are subject to the same bias, potentially preserving discriminative power while increasing variance in $\Phi$ and risking higher false negative rates. 

\subsection{Comparisons with a fully Bayesian network}
\label{appx:fully-bayesian}
We hereby provide an additional ablation comparing D3M's performance against a fully Bayesian network. Bayesian networks offer the most principled approach to approximating the idealized D3M (Algorithms \ref{alg:preprocessing} and \ref{alg:detection}) when viewing optimization in $\mcH_p$ as sampling from a distribution over hypotheses modeled by said network. Given that D3M's sampling is only restricted to the last layer, it follows that D3M's sampled hypotheses are less diverse than their fully Bayesian counterparts. 

Without (double) sampling once again from the categorical distribution generated by the sampled logits, the maximum disagreement rates achieved across $10000$ runs hover around 5\% for both ID and deteriorating OOD samples, making them nearly indistinguishable by D3M. Instead, sampling candidate disagreeing predictions from softmaxed and temperature-scaled logits rather than argmaxing allows us to bring the ID disagreement rates in $\Phi$ to around 30\%-50\%, where deteriorating OOD samples often score 5\%-10\% higher, making them distinguishable. How would a fully Bayesian model fare? The following table compares the mean ID disagreement rates of a fully Bayesian (FB) model and a VBLL model (D3M), all using the same set of best hyperparameters for each dataset. 
 \begin{table}[h]
\centering
\caption{Comparison of mean ID disagreement rates in $\Phi$ of fully Bayesian and VBLL models.}
\vspace{0.5em}
\begin{tabular}{lccc}
\hline
\textbf{Dataset} & \textbf{Mean ID DisRate (FB)} & \textbf{Mean ID DisRate (VBLL)} & \textbf{FB Accuracy}\\
\hline
UCI           & $0.30\pm 0.04$ & $0.44\pm0.01$  & $0.79 \pm 0.02$\\
CIFAR-10/10.1 & $0.36\pm0.01$ & $0.42 \pm 0.03$ & $0.70 \pm 0.02$\\
Camelyon17    & ---             & ---  & ---    \\
\hline
\end{tabular}
\label{table:fully-bayesian}
\end{table}

We remark that using the mean model of a FB network as the predictive function does not trade off performance as shown in the last column. Further, we observe that even without the double sampling scheme used for VBLL, the mean ID disagreement rate of FB does not collapse into $0.05$ as is usually the case with VBLL without any adjustments. 

In Figure \ref{fig:decision-boundaries}, we compare the fully Bayesian neural network's decision against various instances of VBLL neural networks. Indeed, we find that when no double sampling is used, decision boundaries tend to collapse to the mean function. This is evidenced in very low disagreement rates for VBLL compared to fully Bayesian models. With double sampling, however, we observe that predictions are more stochastic, leading to increasingly more likely disagreement with respect to the mean model. This illustrates how we may approximate the boundary/decision diversity of a fully Bayesian network with VBLL and additional double sampling.

\begin{figure}[ht]
    \centering

    \begin{minipage}{0.48\linewidth}
        \centering
        \includegraphics[width=\linewidth]{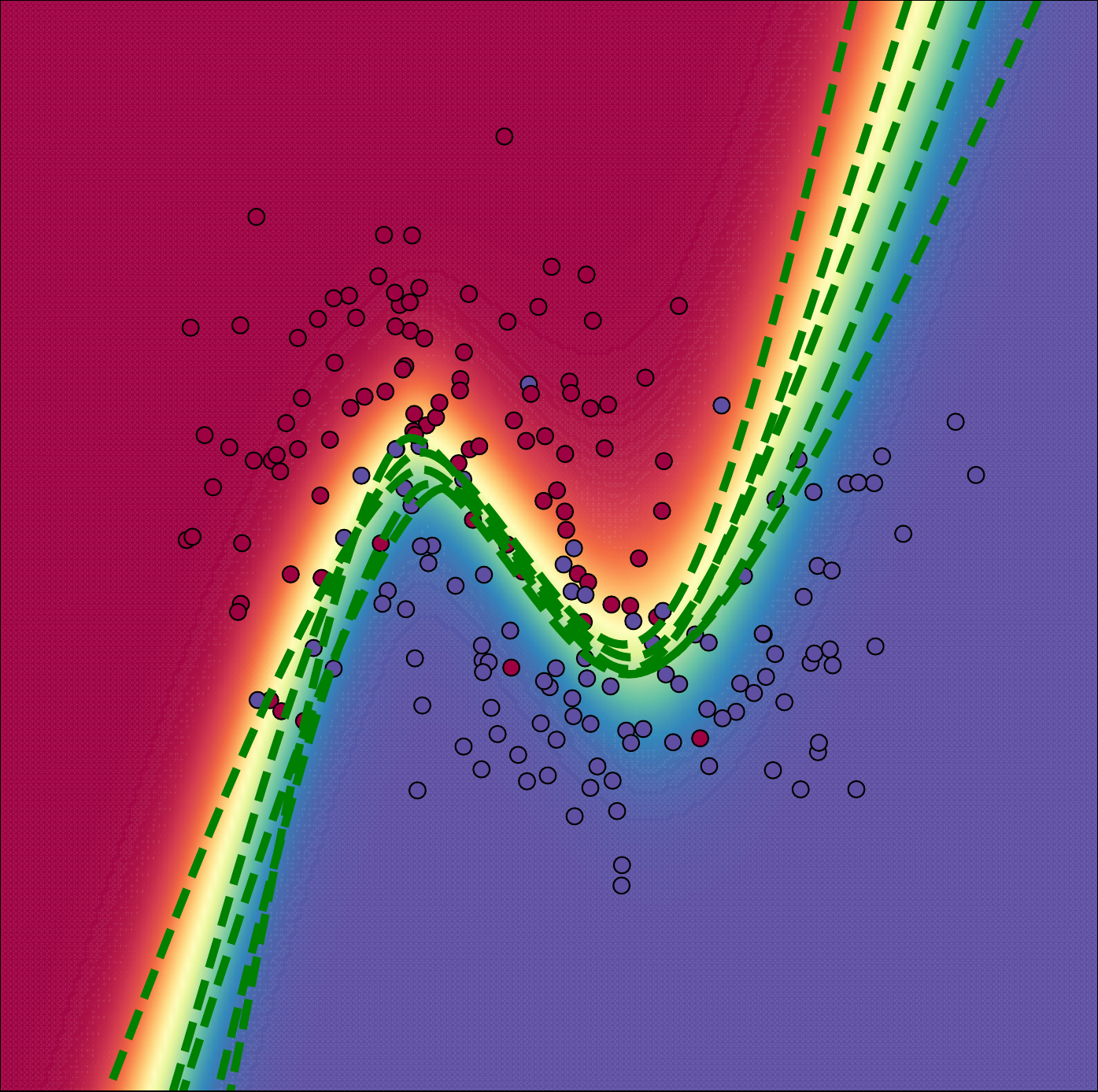}
        \subcaption{}\label{fig:db-a}
    \end{minipage}\hfill
    \begin{minipage}{0.48\linewidth}
        \centering
        \includegraphics[width=\linewidth]{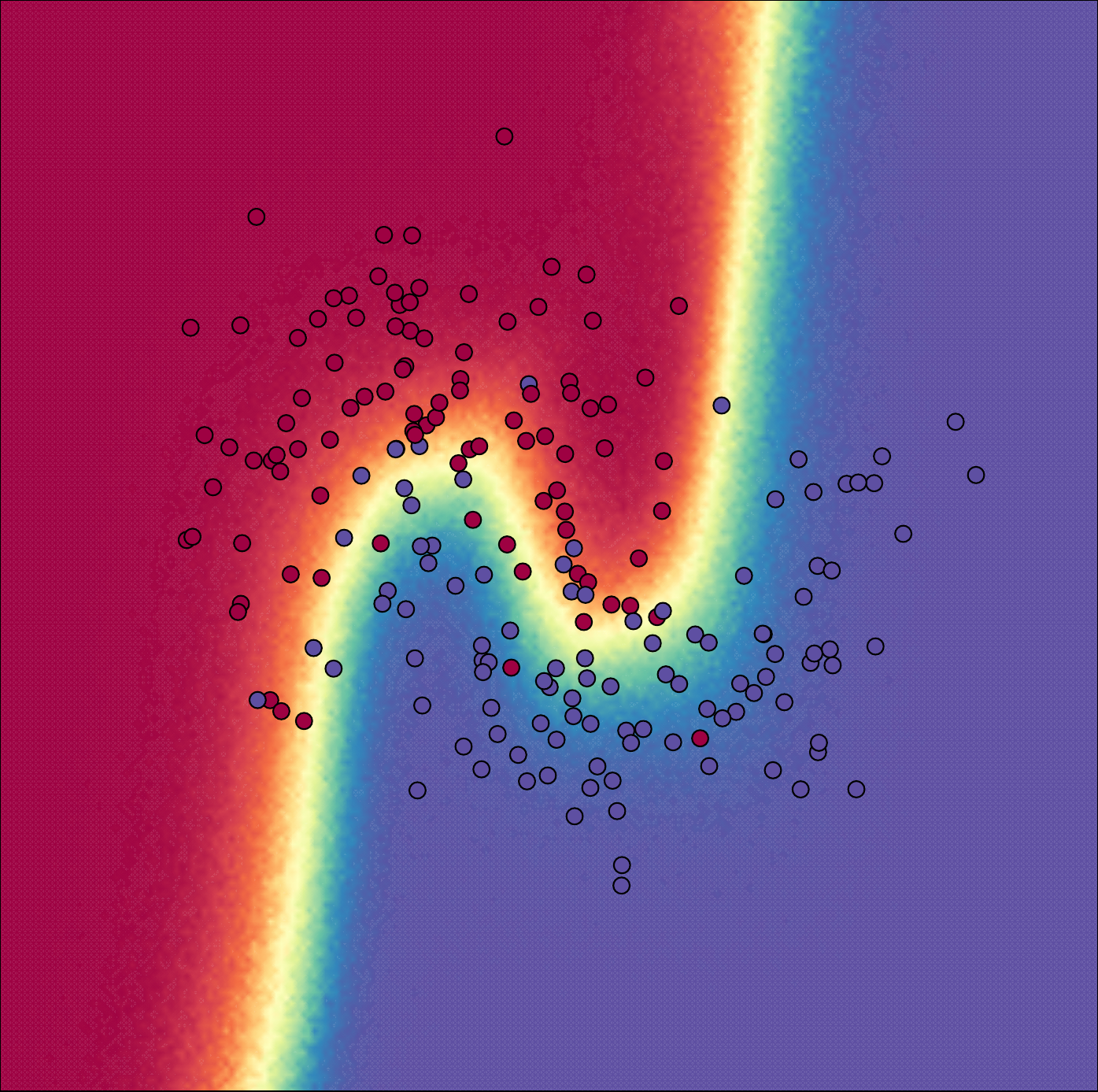}
        \subcaption{}\label{fig:db-b}
    \end{minipage}

    \vspace{1ex} 

    \begin{minipage}{0.32\linewidth}
        \centering
        \includegraphics[width=\linewidth]{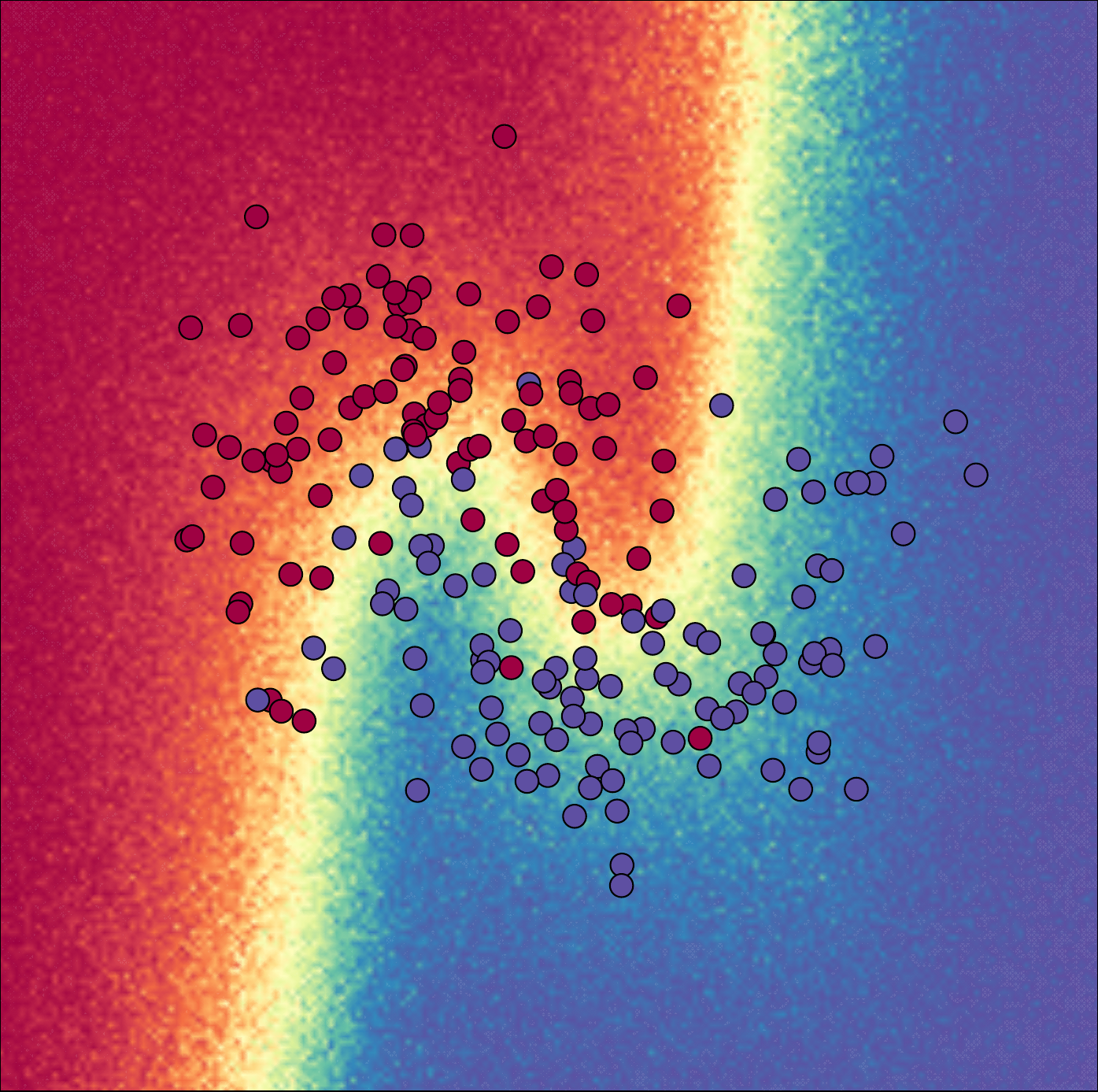}
        \subcaption{}\label{fig:db-c}
    \end{minipage}\hfill
    \begin{minipage}{0.32\linewidth}
        \centering
        \includegraphics[width=\linewidth]{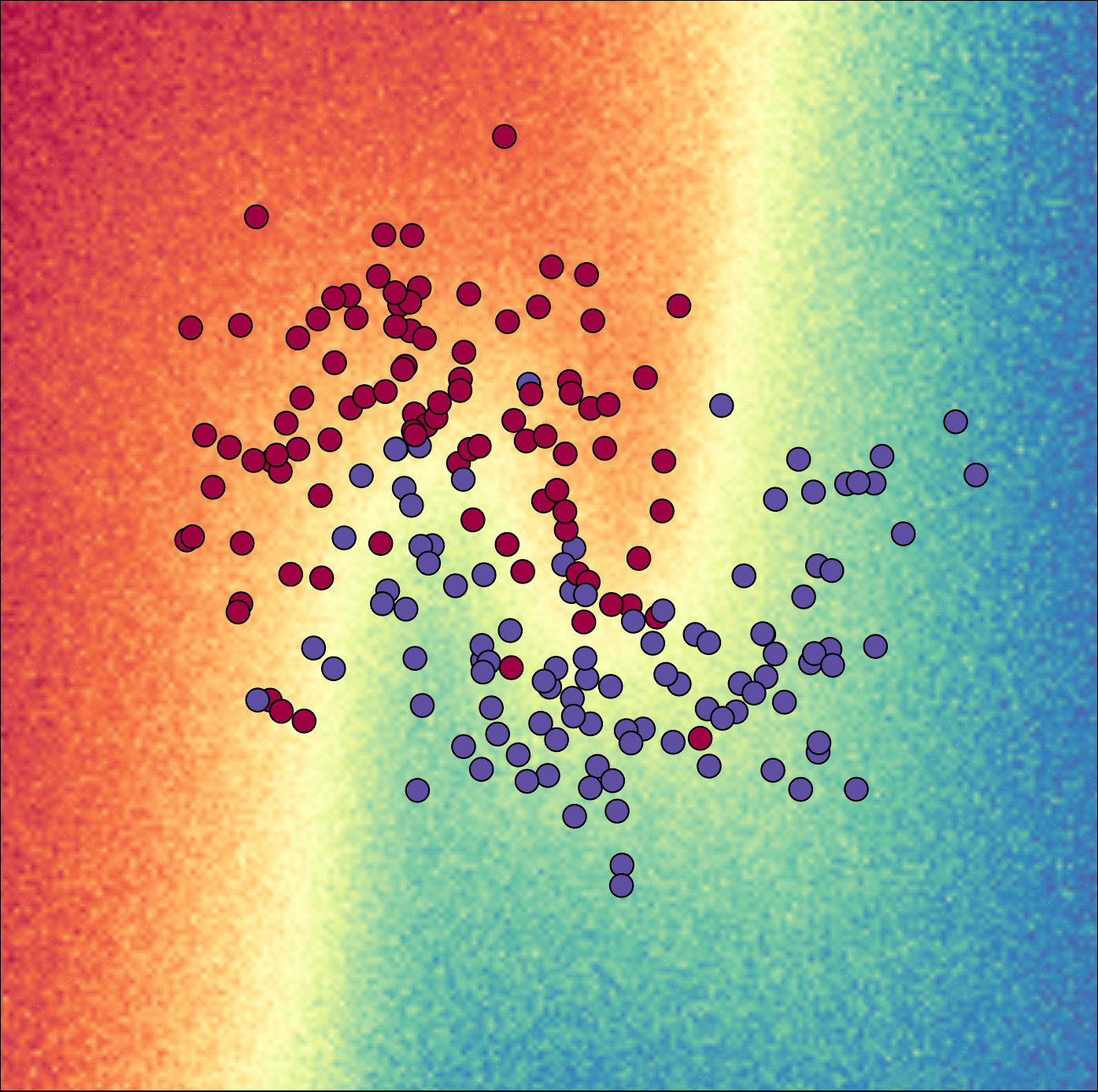}
        \subcaption{}\label{fig:db-d}
    \end{minipage}\hfill
    \begin{minipage}{0.32\linewidth}
        \centering
        \includegraphics[width=\linewidth]{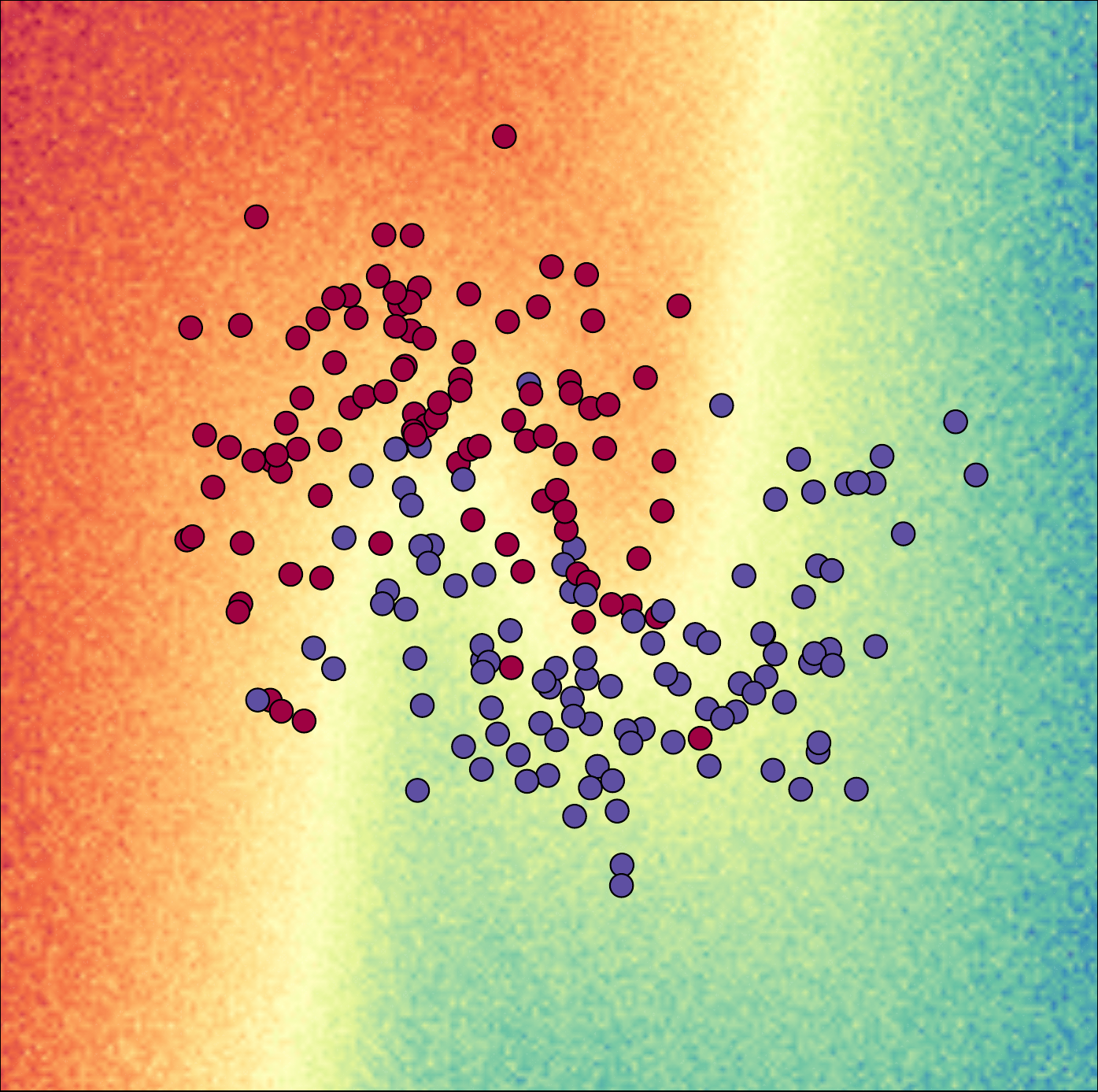}
        \subcaption{}\label{fig:db-e}
    \end{minipage}

    \vspace{1.5ex} 

    \begin{minipage}{0.7\linewidth}
        \centering
        \includegraphics[width=\linewidth]{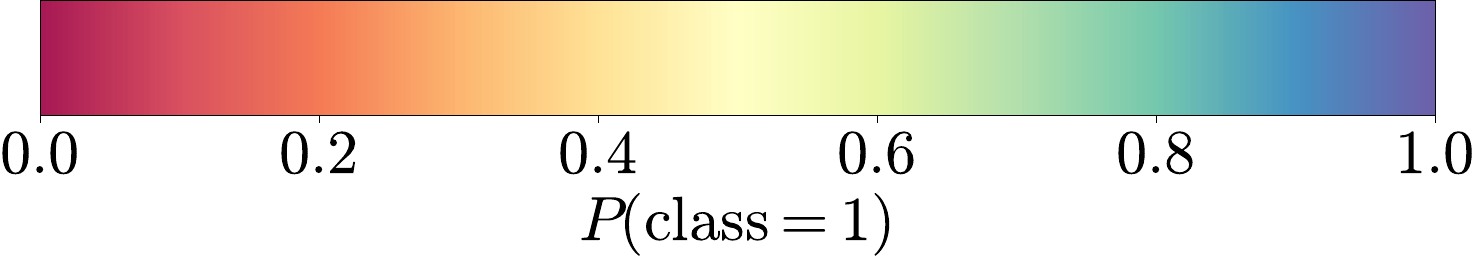}
    \end{minipage}

    \caption{Toy experiment on the Two Moons dataset generator \cite{pedregosa2011scikit}. $500$ points are noisily sampled from each class at noise level $0.3$. We visualize $200$ total samples here for clarity.
    \textbf{(Figure \ref{fig:db-a})} A fully Bayesian neural network is trained on this dataset. The decision boundaries correspond to the level set of the probabilities of predicting either class at level $0.5$. Forward passing the full grid through this net, we find different decision boundaries pictured in green. \textbf{(Figure \ref{fig:db-b})} A comparable VBLL model is trained. Without our double sampling scheme, most decision boundaries samples collapse onto the mean model's boundary, highlighted by the yellow region. \textbf{Figures} \textbf{\ref{fig:db-c}}, \textbf{\ref{fig:db-d}}, and \textbf{\ref{fig:db-e}} depict probabilities under double sampling with temperatures $\tau = 2, 5, 10$ respectively. We observe that sampling predictions from gradually softer probabilities give noisier estimates across the space. Although this does not provide a concrete continuous level set at $0.5$, the colors depict the distribution of predictions from which D3M samples, which ``simulates'' the diversity of decision boundaries. }
    \label{fig:decision-boundaries}
\end{figure}

Notably, on UCI, while VBLL trains each element  $\Phi$ in approximately 1 second, a FB base model in D3M takes approximately 17 minutes. The gap worsens for CIFAR-10/10.1 as it takes FB and hours for CIFAR-10/10.1 while only 2 minutes for VBLL. This example further showcases the computational overhead incurred when sampling iteratively rather than in parallel. One either reparametrizes layers, making each forward pass’s output essentially a sample of logits, requiring sequential forward passes to collect MC logits, or one must lift all weight tensors to include a sample dimension and ensure every downstream operation — including residual connections, batch norm, and other architectural components — is broadcast-compatible. 

While this is realizable, it offers no benefits to our two-stage sampling strategy. We find that when temperatures are tuned to yield ID disagreement rates in the 30\% to 50\% range, deteriorating OOD disagreement rates are easily discernable as on those inputs, D3M tends to disagree 5\%-10\% better, thus easily achieving high TPR. 
\end{document}